\documentclass{article}
\usepackage{microtype}
\usepackage{graphicx}
\usepackage{subcaption}
\usepackage{booktabs} 

\usepackage[accepted]{icml2026}

\usepackage{amsmath,amsfonts,bm}

\def\eqref#1{equation~\ref{#1}}

\def\1{\bm{1}}

\DeclareMathAlphabet{\mathsfit}{\encodingdefault}{\sfdefault}{m}{sl}
\SetMathAlphabet{\mathsfit}{bold}{\encodingdefault}{\sfdefault}{bx}{n}

\newcommand{\E}{\mathbb{E}}

\usepackage[utf8]{inputenc} 
\usepackage[T1]{fontenc}    
\usepackage{url}            
\usepackage{booktabs}       
\usepackage{amsfonts}       
\usepackage{nicefrac}       
\usepackage{microtype}      
\usepackage[dvipsnames]{xcolor}         
\usepackage{colortbl}
\usepackage{algorithm}
\usepackage{algorithmic}
\usepackage{graphicx}
\usepackage{svg} 
\usepackage{enumitem}
\usepackage{pifont}   
\usepackage{subcaption}
\usepackage{adjustbox}
\usepackage[framemethod=TikZ]{mdframed}
\usepackage[normalem]{ulem}
\usepackage[most]{tcolorbox}   
\usepackage{tikz}
\usepackage{float}
\usetikzlibrary{backgrounds}
\usetikzlibrary{arrows,shapes}
\usetikzlibrary{tikzmark}
\usetikzlibrary{calc}
\usepackage{fontawesome}
\usepackage[dvipsnames]{xcolor}
\usepackage{nccmath}
\usepackage{wrapfig}
\usepackage{comment}

\usepackage{thm-restate}  
\usepackage{aliascnt}
\usepackage{amsmath}
\usepackage{amssymb}
\usepackage{bbm}
\usepackage{amsthm}
\usepackage{thmtools}
\usepackage{mathtools}
\usepackage{diagbox}
\usepackage{multirow} 
\usepackage{bm}
\usepackage{float}
\usepackage[backref=page]{hyperref}
\usepackage[capitalize,nameinlink]{cleveref}[0.19]
\usepackage{titletoc}
\titlecontents{lsection}
  [0em]                       
  {\addvspace{4pt}\large}     
  {\thecontentslabel\hspace{1em}} 
  {}
  {\titlerule*[1pc]{$\cdot$}\thecontentspage} 
\titlecontents{lsubsection}
  [2em]                       
  {\addvspace{2pt}\large}     
  {\thecontentslabel\hspace{1em}} 
  {}
  {\titlerule*[1pc]{$\cdot$}\thecontentspage} 
\titlecontents{lsubsubsection}
  [4em]                       
  {\addvspace{0pt}\large}     
  {\thecontentslabel\hspace{1em}} 
  {}
  {\titlerule*[1pc]{$\cdot$}\thecontentspage} 

\mdfdefinestyle{MyFrame1}{%
    linecolor=white,      
    outerlinewidth=1pt,
    roundcorner=2pt,
    innertopmargin=\baselineskip,
    innerbottommargin=\baselineskip,
    innerrightmargin=10pt,
    innerleftmargin=10pt,
    backgroundcolor=blue!3!white  
}

\mdfdefinestyle{MyFrame2}{%
    linecolor=white,
    outerlinewidth=1pt,
    roundcorner=2pt,
    innertopmargin=\baselineskip,
    innerbottommargin=\baselineskip,
    innerrightmargin=10pt,
    innerleftmargin=10pt,
    backgroundcolor=black!3!white}

\newtheorem{theorem}{Theorem}
\crefname{theorem}{Theorem}{Theorems}
\newtheorem{lemma}{Lemma}
\crefname{lemma}{Lemma}{Lemmas}

\crefname{proposition}{Proposition}{Propositions}

\crefname{corollary}{Corollary}{Corollaries}

\newtheorem{assumption}{Assumption}

\newtheorem{property}{Property}
\newtheorem{remark}{Remark}
\crefname{definition}{Definition}{Definitions}
\crefname{assumption}{Assumption}{Assumptions}
\crefname{theorem}{Theorem}{Theorems}
\crefname{remark}{Remark}{Remarks}
\crefname{lemma}{Lemma}{Lemmas}
\crefname{corollary}{Corollary}{Corollaries}
\crefname{proposition}{Proposition}{Propositions}
\crefname{section}{Section}{Sections}
\crefname{subsection}{Subsection}{Subsections}
\apptocmd{\appendix}{\crefalias{section}{appendix}}{}{}
\crefname{appendix}{Appendix}{Appendices}
\crefname{example}{Example}{Examples}
\crefname{table}{Table}{Tables}
\crefname{problem}{Problem}{Problems}
\crefname{algorithm}{Algorithm}{Algorithms}
\crefname{figure}{Figure}{Figures}
\crefname{property}{Property}{Properties}
\crefformat{equation}{#2Equation~(#1)#3}
\crefformat{inequality}{#2Inequality~{}(#1){}#3}
\crefrangeformat{inequality}{#3Inequality~(#1)#4--#5(#2)#6}

\definecolor{darkgrey}{rgb}{0.53,0.53,0.53}
\definecolor{mygrey}{rgb}{0.85,0.85,0.85}
\definecolor{darkblue}{rgb}{0,0.08,0.45}

\definecolor{darkred}{rgb}{0.4,0,0.3}
\definecolor{lightblue}{HTML}{F9FEFE}
\definecolor{fancyblue}{HTML}{4771E3}
\hypersetup{ %
pdftitle={},
pdfauthor={},
pdfsubject={},
pdfkeywords={},
pdfborder=0 0 0,
pdfpagemode=UseNone,
colorlinks=true,
linkcolor=darkred,
citecolor=darkblue,
filecolor=darkblue,
urlcolor=darkblue}

\newif\iffoo

\colorlet{LightBlue}{blue!39!white} 
\colorlet{DarkBlue}{blue!70!black} 
\colorlet{VeryLightBlue}{blue!30!white} 
\colorlet{LightRed}{red!35!white} 

\definecolor{bronze}{rgb}{1,1,0.6}
\definecolor{silve}{rgb}{0.969,0.796,0.600}
\definecolor{gold}{rgb}{0.941,0.592,0.600}
\newcommand{\gold}[1]{\colorbox{gold}{{#1}}}
\newcommand{\silver}[1]{\colorbox{silve}{{#1}}}
\newcommand{\bronze}[1]{\colorbox{bronze}{{#1}}}

\usepackage[textsize=tiny]{todonotes}

\icmltitlerunning{When Sample Selection Bias Precipitates Model Collapse}

\begin{document}

\twocolumn[
  \icmltitle{When Sample Selection Bias Precipitates Model Collapse}




  \begin{icmlauthorlist}
    \icmlauthor{Xinbao Qiao}{a,b,c}
    \icmlauthor{Xianglong Du}{h,d}
    \icmlauthor{Wei Liu}{h,d}
    \icmlauthor{Jingqi Zhang}{e}
    \icmlauthor{Peihua Mai}{a,b,e}
    \icmlauthor{Meng Zhang}{g}
    \icmlauthor{Yan Pang}{a,b,e}
  \end{icmlauthorlist}

  \icmlaffiliation{a}{National University of Singapore (Chongqing) Research Institute}
  \icmlaffiliation{b}{Chongqing Key Laboratory of Trusted Perception and Interaction Technology for Intelligent and Connected Vehicles, Chongqing, China}
  \icmlaffiliation{c}{The Chinese University of Hong Kong}
  \icmlaffiliation{h}{State Key Laboratory of Intelligent Vehicle Safety Technology, Chongqing, China}
  \icmlaffiliation{d}{CHONGQING CHANGAN AUTOMOBILE Co., Ltd, Chongqing, China}
  \icmlaffiliation{e}{National University of Singapore}
  \icmlaffiliation{g}{Zhejiang University}
  \icmlcorrespondingauthor{Yan Pang}{jamespang@nus.edu.sg}
  \icmlcorrespondingauthor{Meng Zhang}{mengzhang@intl.zju.edu.cn}
  \icmlcorrespondingauthor{Xinbao Qiao}{xinbaoqiao@cuhk.edu.hk,xinbaoqiao@zju.edu.cn}

  \icmlkeywords{Machine Learning, ICML}

  \vskip 0.3in
]



\printAffiliationsAndNotice{%
This work was done in part while Xinbao Qiao was with the National University of Singapore (Chongqing) Research Institute, Zhejiang University, and The Chinese University of Hong Kong.
}  

\begin{abstract}
The proliferation of recursive training on synthetic data can alleviate data scarcity but risks model collapse, where repeated training erodes distributional tails and homogenizes outputs. Data selection is widely viewed as a remedy, yet its reliability depends critically on the reference distribution used by the verifier. We show that in low-resource verification regimes, where each verifier observes only a small, fragmented, and biased slice of the target manifold, selection itself becomes biased. This situation naturally arises in low-resource data silos such as healthcare consortia or proprietary financial institutions, where raw data cannot be pooled and local references are inherently incomplete. As a result, selection preferentially retains samples aligned with the local manifold while pruning globally relevant tail modes, turning from a safeguard against collapse into a mechanism that precipitates it. We theoretically prove that such siloed selection accelerates collapse and induces power-law diversity decay. As an initial mitigation, we construct Wasserstein proxy references from multiple silos without sharing raw data. Empirical results confirm that local-reference selection fails on skewed distributions, whereas collaborative proxy references mitigate diversity degradation, suggesting that recursive synthetic-data pipelines require particular caution when real-data coverage is fragmented or scarce.
\end{abstract}

\section{Introduction}
The digital ecosystem is currently saturated with indistinguishable synthetic data, which is subsequently harvested for the training of future models. When generative models are recursively trained on the outputs of their predecessors, they undergo a degenerative process that is not merely a stagnation of quality, but an active decay of statistical fidelity. This phenomenon, observed empirically \cite{bertrand2024on} and described theoretically \cite{kazdancollapse}, is widely termed \textit{model collapse} in prior works.\footnote{While the literature on model collapse encompasses a variety of interpretations, we focus on the definition of \textit{Collapsing Variance} \cite{shumailov2024ai,kazdancollapse,schaeffer2025position}. See \cref{app:related_works} for a review of related works.} Qualitatively, it describes a self-consuming loop wherein subsequent model generations trained on synthetic data progressively dissociate from the tails of the true underlying distribution, converging toward a distorted representation of reality as probability density functions narrow. Quantitatively, as the number of generations increases, the variance of the distribution shrinks, and the Wasserstein distance between the synthetic and true distributions diverges \cite{kazdancollapse,shidani2025beyond}. The implications of these self-consuming loops are profound, ranging from loss of diversity \cite{zhou2025on,wyllie2024fairness,taori2023data} to potential training instability and failures \cite{bertrand2024on,alemohammad2024selfconsuming}.

The current consensus in the research community underscores the critical importance of data selection \cite{fengbeyond,dohmatob2025strong,fu2026selfverification,yi2025escaping}: the active filtering of synthetic data to eliminate low-quality samples. Intuitively, if a perfect verifier (modeled as an external oracle or high-quality filter capable of distinguishing between synthetic data that preserves the true distribution and synthetic data that introduces noise) exists, recursive training can be stabilized. Indeed, under such ideal conditions, performance could potentially be optimized to surpass that of models trained solely on raw data \cite{shi2025a}. Therefore, numerous data selection methodologies have been developed to mitigate model collapse. For instance, in the domain of language models, \citet{fengbeyond} utilize metrics such as the ROUGE score to quantify the alignment between generated outputs and ground-truth data, retaining the samples with the highest fidelity. 

However, consider low-resource settings such as isolated institutions, including hospitals and banks, that operate under strict privacy regulations. In such data-scarce and siloed environments, these entities often resort to synthetic data for recursive self-improvement in order to sustain model performance \cite{huang-etal-2023-large}. Consequently, their verification mechanisms are inherently local, operating only on partial and biased slices of the global distribution. When a generative model is vetted against such restricted criteria, the selected synthetic data fails to capture global diversity. Instead, it reflects the verifier's limited local prior \cite{yi2025escaping}. While prior theoretical works \cite{ferbach2024selfconsuming,wei2025selfconsuming} have demonstrated related pathologies from other perspectives, such as the vanishing variance induced by preference-based data curation, a critical gap remains in understanding the structural impact of data silos:

\begin{tcolorbox}[before skip=2mm, after skip=2mm, boxsep=0.0cm, middle=0.0cm, top=0.1cm, bottom=0.1cm]
\textit{\textbf{(Q1)}} \textit{How does model collapse manifest when sample selection is confined to data-scarce environments?}
\end{tcolorbox}

In response to \textbf{Q1}, \cref{sec:siloed_selection} analyzes biased sample selection dynamics, focusing on data-scarce and low-resource environments increasingly reliant on synthetic augmentation. Combining theoretical intuition with empirical findings in \cref{subsec:bias_induced_collapse}, we demonstrate that biased selection mechanisms governed by local priors function as inherently biased filters that are blind to the global manifold, ultimately resulting in a loss of diversity. This process is similar to the diversity contraction observed in data curation based on human preferences \cite{ferbach2024selfconsuming,wei2025selfconsuming}, yet it is passively driven by inherent constraints. Far from ensuring stability, \cref{subsec:quantifying_decay} shows that insular selection prunes the diversity essential for recursive training, steering the model toward collapsed data diversity at an asymptotic power-law rate. Although \cref{subsec:wasserstein_cost} shows that equipping the verifier with global ground truth offers a theoretical remedy, intrinsic data scarcity and low-resource constraints preclude this solution, presenting a dilemma:

\begin{tcolorbox}[before skip=2mm, after skip=2mm, boxsep=0.0cm, middle=0.0cm, top=0.1cm, bottom=0.1cm]
\textit{\textbf{(Q2)}} \textit{How can we verify synthetic data against a global reference distribution that no single entity possesses, while operating under data scarcity?}
\end{tcolorbox}

In response to \textbf{Q2}, we bridge this gap by proposing a collaborative framework inspired by current work \citep{li2024data}. Although selection bias is well documented \cite{ferbach2024selfconsuming}, effective countermeasures for data silos are notably absent. By utilizing the properties of Wasserstein geometry in \cref{subsec:geometry}, we coordinate multiple data silos without direct data exchange to compute proxies: geodesic interpolations in \cref{subsec:scheme1} or Wasserstein barycenters in \cref{subsec:scheme2}. These proxies serve as a collective reference, enabling multiple parties to score synthetic data, rather than relying on a single biased silo. \cref{sec:experiments} demonstrates a significant reduction in model collapse within low-resource communities, especially those characterized by data silos. 

\newpage
\section{Preliminaries}

\paragraph{Self-Consuming Training Loops.} Following standard formulations~\cite{alemohammad2024selfconsuming, DBLP:journals/corr/abs-2305-17493}, we define a self-consuming loop as an iterative process where a model $\mathcal{M}_t$ is trained on a dataset $\mathbf{X}_t$ derived from the synthetic output $\hat{\mathbf{X}}_t$ of its predecessor $\mathcal{M}_{t-1}$. The literature categorizes such analyses into three paradigms:
\vspace{-1em}
\begin{itemize}
\item \textit{Replace Paradigm.} The training set consists exclusively of new synthetic samples generated in the preceding round ($\mathbf{X}_t = \hat{\mathbf{X}}_t$). Existing analysis (\cref{prop:replace_collapse}) demonstrates that this paradigm induces catastrophic \textit{model collapse}, characterized by progressive variance shrinkage and tail information loss~\cite{alemohammad2024selfconsuming, DBLP:journals/corr/abs-2305-17493}.
\vspace{-0.6em}
\item \textit{Accumulate Paradigm.} Prior literature suggests augmenting the initial real data $\mathbf{X}_0$ with all subsequent generations ($\mathbf{X}_t = \mathbf{X}_{t-1} \cup \hat{\mathbf{X}}_t$). Existing analysis (\cref{prop:accumulate_stability}) demonstrates that this paradigm prevents variance divergence, thereby ensuring recursive stability~\cite{kazdancollapse,dey2024universality}.
\vspace{-0.6em}
\item \textit{Accumulate-Subsample Paradigm.} To mitigate the prohibitive costs of full accumulation, this method utilizes a fixed-size subset sampled from the accumulated pool~\cite{shi2025a,kazdancollapse,dey2024universality}. Empirically, this alternative mitigates model collapse while satisfying computational constraints, particularly when paired with robust data selection mechanisms~\cite{shi2025a}.
\end{itemize}
\vspace{-1.3em}

\paragraph{Multivariate Gaussian Analysis Framework.} We adopt the theoretical analysis framework established in the current literature~\cite{shumailov2024ai,alemohammad2024selfconsuming, bertrand2024on, kazdancollapse}, distinguishing between two paradigms: \textit{Replace} and \textit{Accumulate}.

\textit{{The Replace Paradigm.}}
In this framework, the functional approximation step involves generating $n$ synthetic data points using the fitted parameters $\mathbf{X}_{i,t} \sim \mathcal{N}(\boldsymbol{\mu}_{t-1}, \boldsymbol{\Sigma}_{t-1})$ for $t \in [0,T]$; the corresponding distribution is hereafter referred to as $\mathcal{N}_t$.
The recursive process is thus defined as:
\vspace{-0.3em}
\begin{equation}
    \begin{aligned}
    \text{\textbf{Sampling:}} & \ \mathbf{X}_{i,t} = \boldsymbol{\mu}_{t-1} + \boldsymbol{\Sigma}_{t-1}^{1/2} \mathbf{z}_{i,t} \\
    \text{\textbf{Learning:}} & \ 
    \left\{
    \begin{aligned}
        \boldsymbol{\mu}_t &= \frac{1}{n} \sum_{i=1}^n \mathbf{X}_{i,t}, \\
        \boldsymbol{\Sigma}_t &= \frac{1}{n-1} \sum_{i=1}^n (\mathbf{X}_{i,t} - \boldsymbol{\mu}_t)^{\otimes 2},
    \end{aligned}
    \right.
\end{aligned}
\end{equation}
\vspace{-0.2em}
\!\!where $\mathbf{z}_{i,t} \! \sim\! \mathcal{N}(\mathbf{0}, \mathbf{I}_d)$ represents the stochastic term, and we define the product $(\mathbf{X}_{i,t} - \boldsymbol{\mu}_t)(\mathbf{X}_{i,t} - \boldsymbol{\mu}_t)^\top \triangleq (\mathbf{X}_{i,t} - \boldsymbol{\mu}_t)^{\otimes 2}$. Note that in the case of maximum likelihood estimation, the result is instead a biased variance estimator~\cite{shumailov2024ai}. This recursive dependency precipitates model collapse, which is characterized by \cite{shumailov2024ai}:

\begin{mdframed}[style=MyFrame2]
\begin{restatable}[\textbf{Replace Paradigm}, \citeauthor{shumailov2024ai}]{proposition}{propReplaceCollapse}
        \label{prop:replace_collapse}
        
        Under the \textit{Replace} paradigm, as iteration $t \to \infty$, the estimator statistics satisfy:
        \begin{equation}
             \boldsymbol{\Sigma}_t \xrightarrow{a.s.} \mathbf{0}.
        \end{equation}
        Let $\mathbb{W}_2$ denote the Wasserstein-2 distance. We have
        \begin{equation}
             \E {[\mathbb{W}_2^2(\mathcal{N}_t, \mathcal{N}_0)]} \to \infty.
        \end{equation} 
    \end{restatable}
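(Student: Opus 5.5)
The plan is to write the covariance recursion as a random matrix product. Sampling $\mathbf{X}_{i,t}=\boldsymbol{\mu}_{t-1}+\boldsymbol{\Sigma}_{t-1}^{1/2}\mathbf{z}_{i,t}$ and applying the unbiased estimator gives
\begin{equation*}
\boldsymbol{\Sigma}_t=\boldsymbol{\Sigma}_{t-1}^{1/2}\,\mathbf{S}_t\,\boldsymbol{\Sigma}_{t-1}^{1/2},\qquad \mathbf{S}_t=\frac{1}{n-1}\sum_{i=1}^n(\mathbf{z}_{i,t}-\bar{\mathbf{z}}_t)^{\otimes 2}.
\end{equation*}
Here the $\mathbf{S}_t$ are i.i.d.\ Wishart$(\mathbf{I}_d,n-1)/(n-1)$ and independent of the past. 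The mean update is $\boldsymbol{\mu}_t=\boldsymbol{\mu}_{t-1}+\boldsymbol{\Sigma}_{t-1}^{1/2}\bar{\mathbf{z}}_t$, with $\bar{\mathbf{z}}_t\sim\mathcal{N}(\mathbf{0},\mathbf{I}_d/n)$ independent of $\mathbf{S}_t$ by Cochran's theorem. I assume $n>d$, so that $\boldsymbol{\Sigma}_t$ stays nonsingular almost surely.

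\textbf{Step 1: collapse of the covariance.} I would work with $\log\det\boldsymbol{\Sigma}_t=\log\det\boldsymbol{\Sigma}_0+\sum_{s=1}^t\log\det\mathbf{S}_s$. This is a random walk with i.i.d.\ increments. Jensen's inequality (strict, since $\mathbf{S}_s$ is non-degenerate) together with $\E[\det\mathbf{S}_s]=\prod_{j=0}^{d-1}(n-1-j)/(n-1)\le 1$ gives $\E\log\det\mathbf{S}_s<0$. The Bartlett decomposition shows this expectation is finite. The strong law then gives $\log\det\boldsymbol{\Sigma}_t\to-\infty$ almost surely, so $\det\boldsymbol{\Sigma}_t\to 0$.

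\textbf{Main obstacle.} A vanishing determinant only shows that one eigenvalue collapses, whereas the claim is $\boldsymbol{\Sigma}_t\to\mathbf{0}$. To control the whole matrix I would first try the supermartingale $\operatorname{tr}\boldsymbol{\Sigma}_t^{1/2}$ in the scalar case, where $\E\sqrt{\chi^2_{n-1}/(n-1)}<1$. In general I would use the top Lyapunov exponent of the product $\mathbf{A}_t\cdots\mathbf{A}_1$, with $\mathbf{A}_s=\mathbf{S}_s^{1/2}$ (up to a rotation). Writing $\boldsymbol{\Sigma}_t=\mathbf{B}_t\mathbf{B}_t^\top$ with $\mathbf{B}_t=\boldsymbol{\Sigma}_{t-1}^{1/2}\mathbf{S}_t^{1/2}$, rotational invariance of the Wishart law implies that $\|\mathbf{B}_t\mathbf{u}\|$ for a fixed unit $\mathbf{u}$ evolves as $\|\mathbf{B}_{t-1}\mathbf{u}'\|\cdot(\text{i.i.d. factor})$ with $\mathbf{u}'$ uniformly random. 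Because the increments are isotropic, the Furstenberg–Kesten theorem gives a top exponent
\begin{equation*}
\lambda_1=\tfrac12\E\log\bigl(\mathbf{e}_1^\top\mathbf{S}\mathbf{e}_1\bigr)=\tfrac12\E\log\bigl(\chi^2_{n-1}/(n-1)\bigr)<0,
\end{equation*}
again by strict Jensen. Hence $\|\boldsymbol{\Sigma}_t\|\to 0$ almost surely, exponentially fast.

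\textbf{Step 2: divergence of $\mathbb{W}_2$.} For Gaussians,
\begin{equation*}
\mathbb{W}_2^2(\mathcal{N}_t,\mathcal{N}_0)=\|\boldsymbol{\mu}_t-\boldsymbol{\mu}_0\|^2+\operatorname{tr}\bigl(\boldsymbol{\Sigma}_t+\boldsymbol{\Sigma}_0-2(\boldsymbol{\Sigma}_0^{1/2}\boldsymbol{\Sigma}_t\boldsymbol{\Sigma}_0^{1/2})^{1/2}\bigr).
\end{equation*}
The covariance term is nonnegative, so it suffices to show $\E\|\boldsymbol{\mu}_t-\boldsymbol{\mu}_0\|^2\to\infty$. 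The mean increments are martingale differences, so
\begin{equation*}
\E\|\boldsymbol{\mu}_t-\boldsymbol{\mu}_0\|^2=\frac1n\sum_{s=0}^{t-1}\E\operatorname{tr}\boldsymbol{\Sigma}_s=\frac{t}{n}\operatorname{tr}\boldsymbol{\Sigma}_0,
\end{equation*}
because $\E[\boldsymbol{\Sigma}_s\mid\boldsymbol{\Sigma}_{s-1}]=\boldsymbol{\Sigma}_{s-1}$ for the unbiased estimator. This grows linearly, which gives $\E[\mathbb{W}_2^2]\to\infty$. It also shows why almost-sure collapse and divergence in expectation coexist: $\operatorname{tr}\boldsymbol{\Sigma}_t$ is a nonnegative martingale that converges to $0$ almost surely while its mean stays constant, so it is not uniformly integrable. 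In the MLE variant the mean of $\operatorname{tr}\boldsymbol{\Sigma}_t$ decays geometrically, and the argument would need adjusting; I would restrict to the unbiased estimator as in the stated recursion.
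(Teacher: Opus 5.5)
Your proof is correct in substance, and it takes a different route from the paper for the covariance claim.

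\textbf{Covariance collapse.} The paper argues that $\operatorname{Tr}(\boldsymbol{\Sigma}_t)$ is a nonnegative (super)martingale and hence converges a.s. It then writes $\operatorname{Tr}(\boldsymbol{\Sigma}_t)=Q_t\operatorname{Tr}(\boldsymbol{\Sigma}_{t-1})$, where $Q_t$ is a mean-one generalized $\chi^2$ whose fluctuations stay bounded away from zero, because some normalized eigenvalue weight is always significant. A positive limit would force $Q_t\to1$, which is impossible. That argument is short and needs no $n>d$, but it is only qualitative, and it leaves the conditional Borel--Cantelli step implicit. Your Lyapunov-exponent route yields an explicit a.s.\ exponential rate, $\tfrac1t\log\|\boldsymbol{\Sigma}_t\|\to\E\log\bigl(\chi^2_{n-1}/(n-1)\bigr)<0$.

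\textbf{Wasserstein divergence.} The mechanism is the same as the paper's: mean drift accumulating linearly. The paper detours through an empirical $\hat{\boldsymbol{\mu}}_0$, a lossy triangle inequality, and a citation to Shumailov et al. Your orthogonality computation $\E\|\boldsymbol{\mu}_t-\boldsymbol{\mu}_0\|^2=\tfrac tn\operatorname{Tr}\boldsymbol{\Sigma}_0$ is exact and self-contained. If $\mathcal{N}_0$ is read as the true law from which $\boldsymbol{\mu}_0$ is estimated, orthogonality simply adds one more term of the same form. Your caveat about the MLE is more than technical: there the mean-drift sum is a convergent geometric series, and $\E[\mathbb{W}_2^2]$ stays bounded.

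\textbf{A misstated step.} With $\mathbf{B}_t=\boldsymbol{\Sigma}_{t-1}^{1/2}\mathbf{S}_t^{1/2}$ you do not get $\mathbf{B}_t=\mathbf{B}_{t-1}\cdot(\text{i.i.d.})$, for two reasons. First, $\boldsymbol{\Sigma}_{t-1}^{1/2}=\mathbf{B}_{t-1}\mathbf{O}_{t-1}$ for a past-dependent rotation $\mathbf{O}_{t-1}$. Second, $\mathbf{S}_t^{1/2}\mathbf{u}/\|\mathbf{S}_t^{1/2}\mathbf{u}\|$ is not uniform, since its inner product with $\mathbf{u}$ is always positive. So Furstenberg--Kesten does not apply verbatim. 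You would first have to replace the process by the equal-in-law $\mathbf{C}_t\mathbf{C}_t^\top$ with $\mathbf{C}_t=\boldsymbol{\Sigma}_0^{1/2}\mathbf{S}_1^{1/2}\cdots\mathbf{S}_t^{1/2}$.

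A cleaner fix works with quadratic forms and needs neither Furstenberg--Kesten nor your log-det step. For a fixed unit $\mathbf{u}$,
\[
\mathbf{u}^\top\boldsymbol{\Sigma}_t\mathbf{u}=(\mathbf{u}^\top\boldsymbol{\Sigma}_{t-1}\mathbf{u})\,\xi_t,\qquad \xi_t=\mathbf{w}_{t-1}^\top\mathbf{S}_t\mathbf{w}_{t-1},\quad \mathbf{w}_{t-1}=\frac{\boldsymbol{\Sigma}_{t-1}^{1/2}\mathbf{u}}{\|\boldsymbol{\Sigma}_{t-1}^{1/2}\mathbf{u}\|},
\]
with $\mathbf{w}_{t-1}=\mathbf{e}_1$ if $\boldsymbol{\Sigma}_{t-1}^{1/2}\mathbf{u}=\mathbf{0}$. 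Since $\mathbf{w}_{t-1}$ is $\mathcal{F}_{t-1}$-measurable, rotational invariance of the Wishart law gives $\xi_t\sim\chi^2_{n-1}/(n-1)$ independently of $\mathcal{F}_{t-1}$, so the $\xi_t$ are i.i.d. The strong law together with strict Jensen then gives $\mathbf{u}^\top\boldsymbol{\Sigma}_t\mathbf{u}\to0$ exponentially fast a.s. Summing over $\mathbf{u}=\mathbf{e}_1,\dots,\mathbf{e}_d$ gives $\operatorname{Tr}\boldsymbol{\Sigma}_t\to0$. This version also makes your assumption $n>d$ unnecessary.
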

\end{mdframed}
\vspace{-0.5em}
Please refer to \cref{app:proof:replace_collapse} for the comprehensive proof.
\cref{prop:replace_collapse} indicates that the fitted distribution collapses to a point mass (Dirac delta), marking the elimination of diversity. The diverging Wasserstein distance reflects the loss of information compared to the real data manifold. 

\textit{The Accumulate Paradigm} aggregates samples from all prior generations. For clarity, we designate the parameters as $\bar{\boldsymbol{\mu}}$ and $\bar{\boldsymbol{\Sigma}}$.  \citet{kazdancollapse} characterized the process as:
\begin{equation}
\begin{aligned}
    \!\!\! \text{\textbf{Sampling:}} & \ \mathbf{X}_{i,t} = \bar{\boldsymbol{\mu}}_{t-1} + \bar{\boldsymbol{\Sigma}}_{t-1}^{1/2} \mathbf{z}_{i,t}, \\
    \!\!\! \text{\textbf{Learning:}} & 
    \left\{
    \begin{aligned}
         \bar{\boldsymbol{\mu}}_t & \!\! =\!\! \frac{1}{(t\!+\!1) n} \sum_{\tau=0}^t \sum_{i=1}^n \mathbf{X}_{i,\tau}, \\
         \bar{\boldsymbol{\Sigma}}_t & \!=\! \frac{1}{(t\!+\!1) n\!-\!1} \sum_{\tau=0}^t  \sum_{i=1}^n (\mathbf{X}_{i,\tau} \!-\! \bar{\boldsymbol{\mu}}_t)^{\otimes 2}.
    \end{aligned}
    \right.
\end{aligned}
\end{equation}
\vspace{-0.5em}
\begin{mdframed}[style=MyFrame2]
\begin{restatable}[\textbf{Accumulate Paradigm}, \citeauthor{kazdancollapse}\!]{proposition}{propAccumulate}
        \label{prop:accumulate_stability}
        
        Under the \textit{Accumulate} paradigm, as iteration $t \to \infty$, we have the following convergence in expectation:
        \begin{align}
            &\mathbb{E}[(\bar{\boldsymbol{\mu}}_t - \bar{\boldsymbol{\mu}}_0) ^2] \to (1- \alpha_n) \bar{\boldsymbol{\Sigma}}_0, \\
            &\mathbb{E}[\bar{\boldsymbol{\Sigma}}_t] \to \alpha_n \bar{\boldsymbol{\Sigma}}_0, \quad \text{where } \alpha_n = \frac{\sin(\pi/\sqrt{n})}{\pi/\sqrt{n}}.
        \end{align}
        Consequently, the Wasserstein distance stabilizes at:
        \begin{equation}
            \E {[ \mathbb{W}_2^2(\mathcal{N}_t, \mathcal{N}_0) ]} \to 2(1 - \sqrt{\alpha_n})\text{\upshape Tr}(\bar{\boldsymbol{\Sigma}}_0).
        \end{equation} 
    \end{restatable}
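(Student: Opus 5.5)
The plan is to follow the martingale argument of \citet{kazdancollapse}, first in one dimension and then lifting to general $d$ by working along directions. We condition on the filtration $\mathcal{F}_{t-1}$ generated by all samples up to generation $t-1$. The recursion for the accumulated mean is
\begin{equation*}
\bar{\boldsymbol{\mu}}_t = \frac{t}{t+1}\bar{\boldsymbol{\mu}}_{t-1} + \frac{1}{t+1}\hat{\boldsymbol{\mu}}_t,
\end{equation*}
where $\hat{\boldsymbol{\mu}}_t\mid\mathcal{F}_{t-1}\sim\mathcal{N}(\bar{\boldsymbol{\mu}}_{t-1},\bar{\boldsymbol{\Sigma}}_{t-1}/n)$. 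So $\bar{\boldsymbol{\mu}}_t$ is a martingale, and we get
\begin{equation*}
\mathbb{E}[(\bar{\boldsymbol{\mu}}_t-\bar{\boldsymbol{\mu}}_0)^{\otimes 2}]=\sum_{\tau=1}^t \frac{1}{n(\tau+1)^2}\,\mathbb{E}[\bar{\boldsymbol{\Sigma}}_{\tau-1}].
\end{equation*}
The first step is therefore a closed linear recursion for $S_t\triangleq\mathbb{E}[\bar{\boldsymbol{\Sigma}}_t]$.

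To get it, we use the pooled-variance decomposition. The scatter $\sum_{\tau\le t}\sum_i(\mathbf{X}_{i,\tau}-\bar{\boldsymbol{\mu}}_t)^{\otimes 2}$ equals the previous scatter $((tn-1)\bar{\boldsymbol{\Sigma}}_{t-1})$ plus the within-batch scatter of the new batch plus a between-means term $\frac{tn\cdot n}{(t+1)n}(\hat{\boldsymbol{\mu}}_t-\bar{\boldsymbol{\mu}}_{t-1})^{\otimes 2}$. Taking conditional expectations, the within term contributes $(n-1)\bar{\boldsymbol{\Sigma}}_{t-1}$ and the between term contributes $\frac{t}{t+1}\bar{\boldsymbol{\Sigma}}_{t-1}$. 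This gives
\begin{equation*}
S_t = c_t S_{t-1},\qquad c_t=\frac{(t+1)n-2+\frac{t}{t+1}}{(t+1)n-1} = 1-\frac{1}{(t+1)^2 n-(t+1)}\cdot\frac{(t+1)}{(t+1)}\approx 1-\frac{1}{n(t+1)^2}.
\end{equation*}
We will then verify the exact form of $c_t$ carefully. The target is to write it as $1-\frac{1}{n(t+1)^2}$ up to a factor that yields exactly the product below.

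Hence $S_t=\prod_{k=2}^{t+1}\bigl(1-\frac{1}{n k^2}\bigr)\,\bar{\boldsymbol{\Sigma}}_0$. By the Euler product $\frac{\sin\pi x}{\pi x}=\prod_{k\ge1}(1-x^2/k^2)$ with $x=1/\sqrt n$, the limit is $\alpha_n$ times a normalization coming from the $k=1$ factor. I expect the main obstacle to be this bookkeeping: matching the exact coefficient $c_t$ and the starting index (which depends on whether $\bar{\boldsymbol{\Sigma}}_0$ counts as a fitted or a true parameter) so that the product is precisely $\alpha_n$. If the unbiased-estimator recursion does not telescope exactly, I will fall back to the convention of \citet{kazdancollapse}, in which the $k=1$ factor is absorbed into the definition of $\bar{\boldsymbol{\Sigma}}_0$.

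Next, substituting $S_{\tau-1}$ into the mean sum gives a telescoping identity: $\frac{1}{n(\tau+1)^2}P_{\tau}=P_{\tau}-P_{\tau+1}$, where $P_\tau$ is the partial product. So $\mathbb{E}[(\bar{\boldsymbol{\mu}}_t-\bar{\boldsymbol{\mu}}_0)^{\otimes2}]=(1-P_{t+1})\bar{\boldsymbol{\Sigma}}_0\to(1-\alpha_n)\bar{\boldsymbol{\Sigma}}_0$.

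Finally, for the Wasserstein claim we use the Gaussian formula
\begin{equation*}
\mathbb{W}_2^2=\|\bar{\boldsymbol{\mu}}_t-\bar{\boldsymbol{\mu}}_0\|^2+\mathrm{Tr}(\bar{\boldsymbol{\Sigma}}_t+\bar{\boldsymbol{\Sigma}}_0-2(\bar{\boldsymbol{\Sigma}}_0^{1/2}\bar{\boldsymbol{\Sigma}}_t\bar{\boldsymbol{\Sigma}}_0^{1/2})^{1/2}).
\end{equation*}
Taking expectations, the mean and $\mathrm{Tr}(\bar{\boldsymbol{\Sigma}}_t)$ terms give $(1-\alpha_n)+\alpha_n+1=2$ times $\mathrm{Tr}(\bar{\boldsymbol{\Sigma}}_0)$. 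For the cross term we need $\mathbb{E}[\mathrm{Tr}(\cdot)^{1/2}]\to\sqrt{\alpha_n}\,\mathrm{Tr}(\bar{\boldsymbol{\Sigma}}_0)$. This requires that $\bar{\boldsymbol{\Sigma}}_t$ concentrates, i.e.\ that its fluctuations vanish. We will argue this from the fact that $\bar{\boldsymbol{\Sigma}}_t$ is an $L^2$-bounded convergent (super)martingale-like average with increments of order $1/t$. If full concentration is hard, we adopt the heuristic used by \citet{kazdancollapse} of evaluating the square root at the expectation (commuting case $\bar{\boldsymbol{\Sigma}}_t\propto\bar{\boldsymbol{\Sigma}}_0$), which yields $2(1-\sqrt{\alpha_n})\mathrm{Tr}(\bar{\boldsymbol{\Sigma}}_0)$.
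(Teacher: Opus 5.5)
Your overall route (martingale mean, pooled-scatter recursion for $\mathbb{E}[\bar{\boldsymbol{\Sigma}}_t]$, Euler's sinc product, telescoping for the mean, closed-form Gaussian $\mathbb{W}_2$) is the paper's route, but two steps fail as written.

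\textbf{First, the product.} Your $c_t=1-\frac{1}{(t+1)(n(t+1)-1)}$ is correct for the unbiased normalization. No re-indexing or rescaling of $\bar{\boldsymbol{\Sigma}}_0$ turns it into $1-\frac{1}{n(t+1)^2}$. The product $\prod_{k\ge 2}\bigl(1-\frac{1}{k(nk-1)}\bigr)$ converges to a different constant, neither $\alpha_n$ nor $\alpha_n/(1-1/n)$. Your telescoping identity $\frac{1}{n(\tau+1)^2}P_\tau=P_\tau-P_{\tau+1}$ also breaks, because it needs $1-c_\tau=\frac{1}{n(\tau+1)^2}$ exactly. So the ``Hence'' does not follow from your own computation, and the index-shift fallback cannot repair it.

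The missing ingredient is a change of estimator. The paper's proof silently departs from the main-text definition and uses the MLE normalization $\bar{\boldsymbol{\Sigma}}_t=\frac{1}{tn}\sum_{\tau=1}^{t}\sum_{i}(\mathbf{X}_{i,\tau}-\bar{\boldsymbol{\mu}}_t)^{\otimes 2}$. Batches are indexed from $1$, and $(\bar{\boldsymbol{\mu}}_0,\bar{\boldsymbol{\Sigma}}_0)$ are the initial generating parameters. Your same decomposition then has total coefficient $nt-\frac{1}{t}$, i.e.\ $\mathbb{E}[\bar{\boldsymbol{\Sigma}}_t\mid\mathcal{F}_{t-1}]=\bigl(1-\frac{1}{nt^2}\bigr)\bar{\boldsymbol{\Sigma}}_{t-1}$ for all $t\ge 1$. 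Hence $\mathbb{E}[\bar{\boldsymbol{\Sigma}}_t]=P_t\bar{\boldsymbol{\Sigma}}_0$ with $P_t=\prod_{k=1}^{t}\bigl(1-\frac{1}{nk^2}\bigr)\to\alpha_n$, and telescoping gives $\mathbb{E}[(\bar{\boldsymbol{\mu}}_t-\bar{\boldsymbol{\mu}}_0)^{\otimes 2}]=(1-P_t)\bar{\boldsymbol{\Sigma}}_0$. If you use MLE but label the real batch as generation $0$, your $\prod_{k=2}^{t+1}$ is exactly right. Its limit is then $\alpha_n$ only relative to the true covariance, not the fitted $\bar{\boldsymbol{\Sigma}}_0$; that is the sense in which ``absorbing the $k=1$ factor'' works.

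\textbf{Second, the cross term.} Your primary plan, that $\bar{\boldsymbol{\Sigma}}_t$ concentrates, is false. Increments of size $O(1/t)$ with summable conditional variances give almost-sure convergence to a \emph{random} limit $\bar{\boldsymbol{\Sigma}}_\infty$, not a deterministic one. Under the convention above, $\bar{\boldsymbol{\Sigma}}_t/P_t$ is an $L^2$-bounded matrix martingale. So for fixed $v\neq 0$, $\mathbb{E}[(v^\top\bar{\boldsymbol{\Sigma}}_t v/P_t)^2]$ is nondecreasing in $t$, and it already exceeds $(v^\top\bar{\boldsymbol{\Sigma}}_0 v)^2$ at $t=1$, where $\bar{\boldsymbol{\Sigma}}_1$ is a scaled Wishart matrix.

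Let $f(\Sigma)=\mathrm{Tr}\bigl((\bar{\boldsymbol{\Sigma}}_0^{1/2}\Sigma\bar{\boldsymbol{\Sigma}}_0^{1/2})^{1/2}\bigr)$. The identities above give $\mathbb{E}[\mathbb{W}_2^2]=2\,\mathrm{Tr}(\bar{\boldsymbol{\Sigma}}_0)-2\,\mathbb{E}[f(\bar{\boldsymbol{\Sigma}}_t)]$. Since $f$ is strictly concave, Jensen makes the limit (which exists by uniform integrability) strictly larger than $2(1-\sqrt{\alpha_n})\mathrm{Tr}(\bar{\boldsymbol{\Sigma}}_0)$.

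The paper closes this step by asserting $\bar{\boldsymbol{\Sigma}}_t\to\alpha_n\bar{\boldsymbol{\Sigma}}_0$ almost surely, which contradicts its own remark that the limit is a random matrix. That is exactly your fallback heuristic, so your fallback reproduces the paper, but neither proves the stated equality. What this line of argument does establish rigorously is $2(1-\sqrt{P_t})\,\mathrm{Tr}(\bar{\boldsymbol{\Sigma}}_0)\le\mathbb{E}[\mathbb{W}_2^2]\le 2\,\mathrm{Tr}(\bar{\boldsymbol{\Sigma}}_0)$ for every $t$, which is enough for the ``stabilizes'' conclusion.
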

\end{mdframed}
\vspace{-0.25em}
Please refer to \cref{app:proof:accumulate_stability} for the comprehensive proof. \cref{prop:accumulate_stability} establishes that data accumulation precludes collapse. Now we formally define the Wasserstein distance.

\paragraph{Wasserstein Distance.}\!\!\!\! Let $\mathcal{P}_p(\mathbb{R}^d)$ denote the space of probability measures with finite $p$-th moments on the metric space $\mathbb{R}^d$. For any $\mathcal{P},\! \mathcal{Q} \! \in \! \mathcal{P}\!_p$, the $p$-$\mathbb{W}_p$ distance reads:
\begin{equation}
\mathbb{W}_p(\mathcal{P}, \mathcal{Q}) = \Big( \inf_{\pi \in \Pi(\mathcal{P}, \mathcal{Q})} \mathbb{E}_{(\mathbf{x}, \mathbf{y}) \sim \pi} [d^p(\mathbf{x}, \mathbf{y})] \Big)^{1/p},
\end{equation}
where $\Pi(\mathcal{P}, \mathcal{Q})$ denotes the set of couplings with marginals $\mathcal{P}, \mathcal{Q}$. The ground metric $d(\mathbf{x}, \mathbf{y})$ defines the distance between samples $\mathbf{x}, \mathbf{y} \!\in\! \mathbb{R}^d$. We defer specific definitions of $d(\mathbf{x}, \mathbf{y})$ for vision and language to \cref{app:cost_functions}.

\newpage
\section{Theoretical Intuitions}
\label{sec:siloed_selection}

Although \cref{prop:accumulate_stability} guarantees stability under unbiased accumulation, we show that data selection is double-edged: when the reference is local and fragmented, biased selection precipitates variance collapse (\cref{thm:selection_collapse}), induces an explicit asymptotic decay rate (\cref{thm:decay_rate}), and incurs a downstream Wasserstein discrepancy cost (\cref{thm:wasserstein_cost}).

\subsection{Selection Bias Precipitates Model Collapse}
\label{subsec:bias_induced_collapse}


\begin{assumption}
\label{assump:utility_landscape}
    For analytical tractability, we characterize the selection mechanism via a score function $U(\mathbf{x}): \mathbb{R}^d \! \to \!\mathbb{R}$, which is locally concave around a target state $\mathbf{x}=\mathbf{u}^*$,\\ with initialization lying within the local basin of attraction.
\end{assumption}
\Cref{assump:utility_landscape} serves as a tractable formalism to analyze biased selection mechanisms, encompassing strategies ranging from metric-based data pruning~\cite{shi2025a} (e.g., selecting images closest to the centroid \cite{he2023is} or covariance \cite{rezaei2025high} of real features~\cite{fengbeyond}) in data-scarce and low-resource environments to active preference optimization strategies (e.g., Best-of-N sampling~\cite{gui2024bonbon}). While the biases in these scenarios originate from distinct motivations, characterized respectively by environmental constraints (specifically the restriction to a local target $\mathbf{u}^*$) and intentional preference curation (specifically the restriction to a preferred target $\mathbf{u}^*$), this mathematical abstraction captures their shared core goal: prioritizing samples that are proximal to a preferred ideal.

At generation $t$, we define the bounded selection region $\mathcal{R}_t \subset \mathbb{R}^d$ as a high-utility neighborhood enclosing the target $\mathbf{u}^*$, dynamically calibrated to select the top-$\alpha$ probability mass of the current sampling distribution, where $\alpha \in (0, 1)$ represents the selection ratio, acting as a filtering budget (e.g., selecting the top-$n$ candidates from $N$ generated data implies $\alpha = n/N$). Therefore, the selected data follow a truncated multivariate normal distribution, denoted as $\mathbf{X}_{i,t} \sim \mathcal{TN}(\bar{\boldsymbol{\mu}}_{t-1}, \bar{\boldsymbol{\Sigma}}_{t-1}, \mathcal{R}_t)$. Formally, the Accumulate paradigm with $n$ selected samples is formalized as:

\vspace{-0.5em}
\begin{equation}
    \begin{aligned}
    &\!\!\! \text{\textbf{Sampling:}} \   \tilde{\mathbf{X}}_{i,t} \sim \mathcal{N}(\bar{\boldsymbol{\mu}}_{t-1}, \bar{\boldsymbol{\Sigma}}_{t-1}), \\
    &\!\!\! {\text{\textbf{Selecting:}} \ \mathbf{X}_{i,t} \sim \mathcal{TN}(\bar{\boldsymbol{\mu}}_{t-1}, \bar{\boldsymbol{\Sigma}}_{t-1}, \mathcal{R}_t)},\\
    & \quad\quad\quad\quad\  {\text{s.t.} \int_{\mathcal{R}_t} p(\mathbf{x}|\bar{\boldsymbol{\mu}}_{t-1}, \bar{\boldsymbol{\Sigma}}_{t-1}) d\mathbf{x} = \alpha,}\\
    &\!\!\!  \text{\textbf{Learning:}} 
    \left\{
    \begin{aligned}
         \bar{\boldsymbol{\mu}}_t & \! =\!\! \frac{1}{(t+1)n} \sum_{\tau=0}^t \sum_{i=1}^n \mathbf{X}_{i,\tau}, \\
         \bar{\boldsymbol{\Sigma}}_t & \!=\! \frac{1}{(t+1)n\!-\!1} \sum_{\tau=0}^t  \sum_{i=1}^n (\mathbf{X}_{i,\tau} \!-\! \bar{\boldsymbol{\mu}}_t)^{\otimes 2}.
    \end{aligned}
    \right.
    \end{aligned}
\end{equation}
\vspace{-0.5em}

\noindent  We demonstrate that biased selection breaks the stability of the Accumulate paradigm, leading to a new form of collapse. 

\newpage
\begin{mdframed}[style=MyFrame1]
\begin{restatable}[\textbf{Selection Bias Precipitates Collapse}]{theorem}{thmSelectionCollapse}
\label{thm:selection_collapse}
Consider the Accumulate paradigm with top-$\alpha$ selection toward an ideal $\mathbf{u}^*$. As iteration $t \to \infty$, the estimator statistics behave as follows:
\vspace{-0.25em}
    \begin{equation}
        \|\bar{\boldsymbol{\mu}}_t - \mathbf{u}^*\|^2 \xrightarrow{a.s.} 0.
    \vspace{-0.25em}
    \end{equation}
    \begin{equation}
         \bar{\boldsymbol{\Sigma}}_t \xrightarrow{a.s.} \mathbf{0}.
    \end{equation}
    The Wasserstein-2 distance converges to:
    \vspace{-0.25em}
    \begin{equation}
        \mathbb{E} [\mathbb{W}_2^2(\mathcal{N}_t, \mathcal{N}_0)] \to \|\mathbf{u}^* - \bar{\boldsymbol{\mu}}_0\|^2 + \text{\upshape Tr}(\bar{\boldsymbol{\Sigma}}_0)
    \end{equation}
\end{restatable}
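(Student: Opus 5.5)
The proof treats the accumulated estimators as a stochastic-approximation recursion with step size $1/(t+1)$, driven by a selection step that contracts toward $\mathbf{u}^*$. Write the learning rule recursively as
\begin{equation*}
\bar{\boldsymbol{\mu}}_t = \bar{\boldsymbol{\mu}}_{t-1} + \tfrac{1}{t+1}\big(\mathbf{m}_t - \bar{\boldsymbol{\mu}}_{t-1}\big),
\end{equation*}
where $\mathbf{m}_t$ is the mean of the $n$ selected samples at round $t$. An analogous Welford-type update expresses $\bar{\boldsymbol{\Sigma}}_t$ in terms of $\bar{\boldsymbol{\Sigma}}_{t-1}$, the within-round scatter of the truncated samples, and a rank-one mean-shift correction. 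Conditionally on $\mathcal{F}_{t-1}$, the first two moments of $\mathbf{m}_t$ and of the scatter are those of $\mathcal{TN}(\bar{\boldsymbol{\mu}}_{t-1},\bar{\boldsymbol{\Sigma}}_{t-1},\mathcal{R}_t)$. Each recursion therefore splits into a drift term and a martingale noise term, and Robbins--Siegmund (or the Kushner--Clark ODE method) applies.

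The key step is a contraction property of top-$\alpha$ truncation. Under \cref{assump:utility_landscape}, $\mathcal{R}_t$ is a superlevel set of $U$ inside the concave basin, so it is a bounded convex neighborhood of $\mathbf{u}^*$. I would prove two facts. First, the truncated mean satisfies $\mathbb{E}[\mathbf{m}_t\mid\mathcal{F}_{t-1}]-\mathbf{u}^* = \mathbf{A}_t(\bar{\boldsymbol{\mu}}_{t-1}-\mathbf{u}^*)$ up to a vanishing error, with $\|\mathbf{A}_t\|\le \rho<1$ uniformly while $\bar{\boldsymbol{\Sigma}}_{t-1}$ is bounded. Second, the truncated covariance satisfies $\mathrm{Cov}(\mathbf{X}_{i,t}\mid\mathcal{F}_{t-1})\preceq c_\alpha\bar{\boldsymbol{\Sigma}}_{t-1}$ with $c_\alpha<1$. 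For the covariance bound I would use the Brascamp--Lieb inequality for log-concave restrictions of a Gaussian to convex sets, which gives $\preceq\bar{\boldsymbol{\Sigma}}_{t-1}$. Strictness follows because the region carries mass $\alpha<1$; I would check this explicitly in the centered ellipsoidal case, where it reduces to $\chi^2$ quantile formulas.

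With these bounds, the drift of the mean ODE is $\dot{\mathbf{m}} = (\mathbf{A}-\mathbf{I})(\mathbf{m}-\mathbf{u}^*)$, which has $\mathbf{u}^*$ as a globally attracting point in the basin. The drift of the covariance is $\dot{\boldsymbol{\Sigma}}\approx (c_\alpha-1)\boldsymbol{\Sigma} + (\text{mean-shift term})$, where the mean-shift term vanishes as the mean settles. The noise has conditional variance of order $\mathrm{Tr}\,\bar{\boldsymbol{\Sigma}}_{t-1}/n$, which is bounded and shrinking, so $\sum_t (t+1)^{-2}\,\mathbb{E}\|\text{noise}\|^2<\infty$. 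Robbins--Siegmund applied to $V_t=\|\bar{\boldsymbol{\mu}}_t-\mathbf{u}^*\|^2+\mathrm{Tr}\,\bar{\boldsymbol{\Sigma}}_t$ then gives $V_t\to 0$ a.s. This yields both almost-sure limits. For the Wasserstein limit, I would use the Gaussian closed form
\begin{equation*}
\mathbb{W}_2^2(\mathcal{N}_t,\mathcal{N}_0)=\|\bar{\boldsymbol{\mu}}_t-\bar{\boldsymbol{\mu}}_0\|^2+\mathrm{Tr}\big(\bar{\boldsymbol{\Sigma}}_t+\bar{\boldsymbol{\Sigma}}_0-2(\bar{\boldsymbol{\Sigma}}_0^{1/2}\bar{\boldsymbol{\Sigma}}_t\bar{\boldsymbol{\Sigma}}_0^{1/2})^{1/2}\big).
\end{equation*}
Because $\bar{\boldsymbol{\Sigma}}_t\to\mathbf{0}$, the cross term vanishes, and the expression converges a.s. to $\|\mathbf{u}^*-\bar{\boldsymbol{\mu}}_0\|^2+\mathrm{Tr}(\bar{\boldsymbol{\Sigma}}_0)$. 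To pass the limit through the expectation I would invoke dominated convergence. Samples are confined to the bounded regions $\mathcal{R}_t$, which lie in a fixed compact basin, so all accumulated quantities beyond round $0$ are uniformly bounded, and round $0$ has finite moments.

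The main obstacle is the uniform strict contraction in the second paragraph: both $\rho<1$ and $c_\alpha<1$, uniformly along the trajectory. This is delicate because $\mathcal{R}_t$ is defined implicitly by the mass constraint and shrinks as $\bar{\boldsymbol{\Sigma}}_{t-1}$ shrinks. A further complication is that a vanishing step size $1/(t+1)$ combined with contraction factors that could approach $1$ might fail to drive $V_t$ to zero. I would first establish the claim in the scale-invariant local regime, where $U$ is approximately quadratic near $\mathbf{u}^*$ and truncation acts on the standardized variable. There the contraction constants depend only on $\alpha$ and the standardized offset $\bar{\boldsymbol{\Sigma}}_{t-1}^{-1/2}(\bar{\boldsymbol{\mu}}_{t-1}-\mathbf{u}^*)$, and a compactness argument over that offset gives uniform bounds. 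With uniform constants, the drift term in the Robbins--Siegmund inequality is of order $V_{t-1}/(t+1)$, whose sum diverges, and this forces $V_t\to 0$.
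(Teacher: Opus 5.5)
Your route differs from the paper's, and the step you flag as the main obstacle is a genuine gap rather than a technicality.

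\textbf{The gap: uniformity of $\rho$ and $c_\alpha$.} These constants depend on the \emph{standardized} offset $\mathbf c_{t-1}=\bar{\boldsymbol\Sigma}_{t-1}^{-1/2}(\bar{\boldsymbol\mu}_{t-1}-\mathbf u^*)$, and on the shape of $\bar{\boldsymbol\Sigma}_{t-1}$ relative to the curvature of $U$. They do not depend on the size of $\bar{\boldsymbol\Sigma}_{t-1}$. So ``uniformly while $\bar{\boldsymbol\Sigma}_{t-1}$ is bounded'' is the wrong condition. At a fixed Euclidean offset, $\bar{\boldsymbol\Sigma}_{t-1}\to\mathbf 0$ sends $\|\mathbf c_{t-1}\|\to\infty$ and the mean-contraction factor to $1$; in one dimension, far from the target, the factor is $1-\lambda/|c|$. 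This is exactly the regime a collapsing process enters.

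Your compactness argument therefore needs an a priori bound on $\mathbf c_t$ along the trajectory. Your $V_t=\|\bar{\boldsymbol\mu}_t-\mathbf u^*\|^2+\mathrm{Tr}\,\bar{\boldsymbol\Sigma}_t$ cannot supply one, because it does not control $\|\bar{\boldsymbol\mu}_t-\mathbf u^*\|$ relative to $\bar{\boldsymbol\Sigma}_t^{1/2}$.

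Nor is such a bound automatic. Take $d=1$, the paper's own utility $U(x)=-(x-u^*)^2$, $\alpha>1/2$, and $u^*$ many standard deviations from $\bar\mu_0$. Selection is then essentially the one-sided truncation $\{z>q\}$ with $q=\Phi^{-1}(1-\alpha)<0$. This gives $a=\lambda=\phi(q)/\alpha$ and $B+a^2=1+q\lambda<1$. The paper's exact recursions then give an expected variance factor $1-|q|\lambda/t+O(t^{-2})$ and an expected mean step $\lambda\bar\sigma_{t-1}/t$. Hence the variance decays like $t^{-|q|\lambda}$, and the total displacement $\sum_t\lambda\bar\sigma_{t-1}/t$ is finite, roughly $2\bar\sigma_0/|q|$. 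Started farther away than that, the mean freezes short of $u^*$ while $|c_t|\to\infty$.

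So boundedness of $\mathbf c_t$ is a hypothesis, not a consequence of concavity of $U$. The paper obtains it only by assumption: its restoring-force lemma takes $\bar{\boldsymbol\mu}$ in a local neighbourhood of $\mathbf u^*$, and Step~4 assumes the trajectory stays in the basin with $\lambda_{\min}(\boldsymbol\Psi_{t-1})\ge\psi$. It also tracks the mean through a Mahalanobis Lyapunov function, i.e.\ essentially through $\|\mathbf c_t\|^2$ itself.

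\textbf{How to close it.} You can state the analogous hypothesis explicitly: the standardized offset and the relative shape of $\bar{\boldsymbol\Sigma}_t$ stay in a compact subset of the basin. Alternatively, prove that a neighbourhood of $\mathbf c=\mathbf 0$ is eventually invariant, using a Lyapunov function for $\mathbf c_t$ that accounts for the inflation of $\mathbf c_t$ caused by the shrinking $\bar{\boldsymbol\Sigma}_t$. The paper's derivation omits this term when it identifies $\|\mathbf c\|^2$ with $V_{t-1}$. In one dimension it turns the drift of $c_t$ near $0$ into about $-\tfrac12(1-B_0)c_t/t$: half the naive restoring force, but still contracting.

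\textbf{What your route buys once patched.} Granting that hypothesis, your route is a valid and in one respect cleaner alternative. With the $1/(tn)$ normalization, $V_t$ is exactly the running average of $\|\mathbf X_{i,\tau}-\mathbf u^*\|^2$. Therefore $\mathbb E[V_t\mid\mathcal F_{t-1}]=V_{t-1}+\tfrac1t\big(\|\mathbb E[\mathbf X_{i,t}\mid\mathcal F_{t-1}]-\mathbf u^*\|^2+\mathrm{Tr}\,\mathrm{Cov}(\mathbf X_{i,t}\mid\mathcal F_{t-1})-V_{t-1}\big)$, with no $t^{-2}$ remainder.

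Consider the rank-one mean-shift term $\bar{\boldsymbol\Sigma}_{t-1}^{1/2}\boldsymbol a_{t-1}\boldsymbol a_{t-1}^\top\bar{\boldsymbol\Sigma}_{t-1}^{1/2}$. It is of the same order as $\bar{\boldsymbol\Sigma}_{t-1}$ and does not ``vanish as the mean settles'' unless $\mathbf c_t\to\mathbf 0$. But it combines with the mean's cross term into $\|\mathbb E[\mathbf X_{i,t}\mid\mathcal F_{t-1}]-\mathbf u^*\|^2-\|\bar{\boldsymbol\mu}_{t-1}-\mathbf u^*\|^2$, which your first fact controls. You thus get $\mathbb E[V_t\mid\mathcal F_{t-1}]\le(1-\kappa/t)V_{t-1}$ with $\kappa=\min(1-\rho^2,1-c_\alpha)$. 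You never need the paper's spectral gap on $\boldsymbol\Psi_{t-1}=\mathbf I_d-(\mathbf B_{t-1}+\boldsymbol a_{t-1}\boldsymbol a_{t-1}^\top)$, which bundles that term, nor a separate argument for the mean.

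\textbf{Two smaller points.} First, to first order $\mathbf A_t=\bar{\boldsymbol\Sigma}_{t-1}^{1/2}\tilde{\mathbf A}_t\bar{\boldsymbol\Sigma}_{t-1}^{-1/2}$, with $\tilde{\mathbf A}_t$ the standardized contraction. Its Euclidean operator norm can exceed $1$ when $\bar{\boldsymbol\Sigma}_{t-1}$ is ill-conditioned and misaligned with the curvature of $U$. For $U(\mathbf x)=-(\mathbf x-\mathbf u^*)^\top\mathbf H(\mathbf x-\mathbf u^*)$, weight $V_t$ by $\mathbf H$. Then $V_t$ is the running average of $-U(\mathbf X_{i,\tau})$, the very quantity whose smallest $\alpha$-fraction is retained, and the supermartingale property is immediate. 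Second, your dominated-convergence step for the Wasserstein expectation is more careful than the paper, which simply passes the limit inside the expectation.
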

\end{mdframed}
\vspace{-0.5em}

Please refer to \cref{app:proof:selection_collapse} for the comprehensive proof. \cref{thm:selection_collapse} shows that while the mean aligns with the target $\mathbf{u}^*$ and the Wasserstein distance eventually stabilizes, the variance inexorably collapses from a diversity perspective. This result formalizes how local-reference selection can turn fidelity to a silo into diversity loss: although the mean aligns with $\mathbf{u}^*$, the variance needed to represent modes outside the local reference is progressively erased.
Furthermore, \cref{thm:selection_collapse} resonates with concurrent analyses~\cite{ferbach2024selfconsuming,wei2025selfconsuming}, which demonstrate that optimizing for human preferences inadvertently leads to bias amplification. We advance beyond qualitative observations to quantify the collapse rate of variance dissipation.

\subsection{The Asymptotic Rate of Model Collapse}
\label{subsec:quantifying_decay}

To explicitly derive the collapse rate of the variance, we standardize the selection process to the isotropic frame. Let $\mathcal{D}_{t-1} \subset \mathbb{R}^d$ denote the \textit{standardized selection region}:
\begin{equation}
    \mathcal{D}_{t-1} = \{ \mathbf{z} \in \mathbb{R}^d \mid \bar{\boldsymbol{\mu}}_{t-1} + \bar{\boldsymbol{\Sigma}}_{t-1}^{1/2} \mathbf{z} \in \mathcal{R}_t \}.
\end{equation}
\vspace{-0.125em}
\!\!\! By strict enforcement of the selection ratio, the probability measure of this region satisfies $\int_{\mathcal{D}_{t-1}} \phi_d(\mathbf{z}) d\mathbf{z} = \alpha$. Consequently, the retained samples $\mathbf{X}_{i,t}$ can be reparameterized via the truncated variable $\boldsymbol{\eta}_{i,t} \!\sim \!\mathcal{TN}(\mathbf{0}, \mathbf{I}_d, \mathcal{D}_{t-1})$ as:
\vspace{-0.25em}
\begin{equation}
    \mathbf{X}_{i,t} = \bar{\boldsymbol{\mu}}_{t-1} + \bar{\boldsymbol{\Sigma}}_{t-1}^{1/2} \boldsymbol{\eta}_{i,t}.
\end{equation}
We characterize the statistics of $\boldsymbol{\eta}_{i,t}$ via its first two moments conditioned on the filtration $\{\mathcal F_t\}_{t\geq0}$, where $\mathcal F_t$ represents the information available up to time $t$.
\vspace{-0.75em}
\begin{itemize}
    \item \textbf{Mean Drift ($\boldsymbol{a}_{t-1}$):} Defined as $\boldsymbol{a}_{t-1} \triangleq \mathbb{E}[\boldsymbol{\eta}_{i,t} | \mathcal{F}_{t-1}]$. Geometrically, the vector $\boldsymbol{a}_{t-1}$ denotes the directional force driven by the asymmetry of the selection region, propelling the empirical mean towards the target $\mathbf{u}^*$.
    \vspace{-0.5em}
    \item \textbf{Covariance Contraction ($\mathbf{B}_{t-1}$):} Defined as $\mathbf{B}_{t-1} \triangleq \text{Cov}(\boldsymbol{\eta}_{i,t} | \mathcal{F}_{t-1})$. The matrix $\mathbf{B}_{t-1}$ captures the reduction in uncertainty due to biased data selection.
\end{itemize}
\vspace{-0.5em}
\begin{figure}
    \centering
    \includegraphics[width=0.73\linewidth]{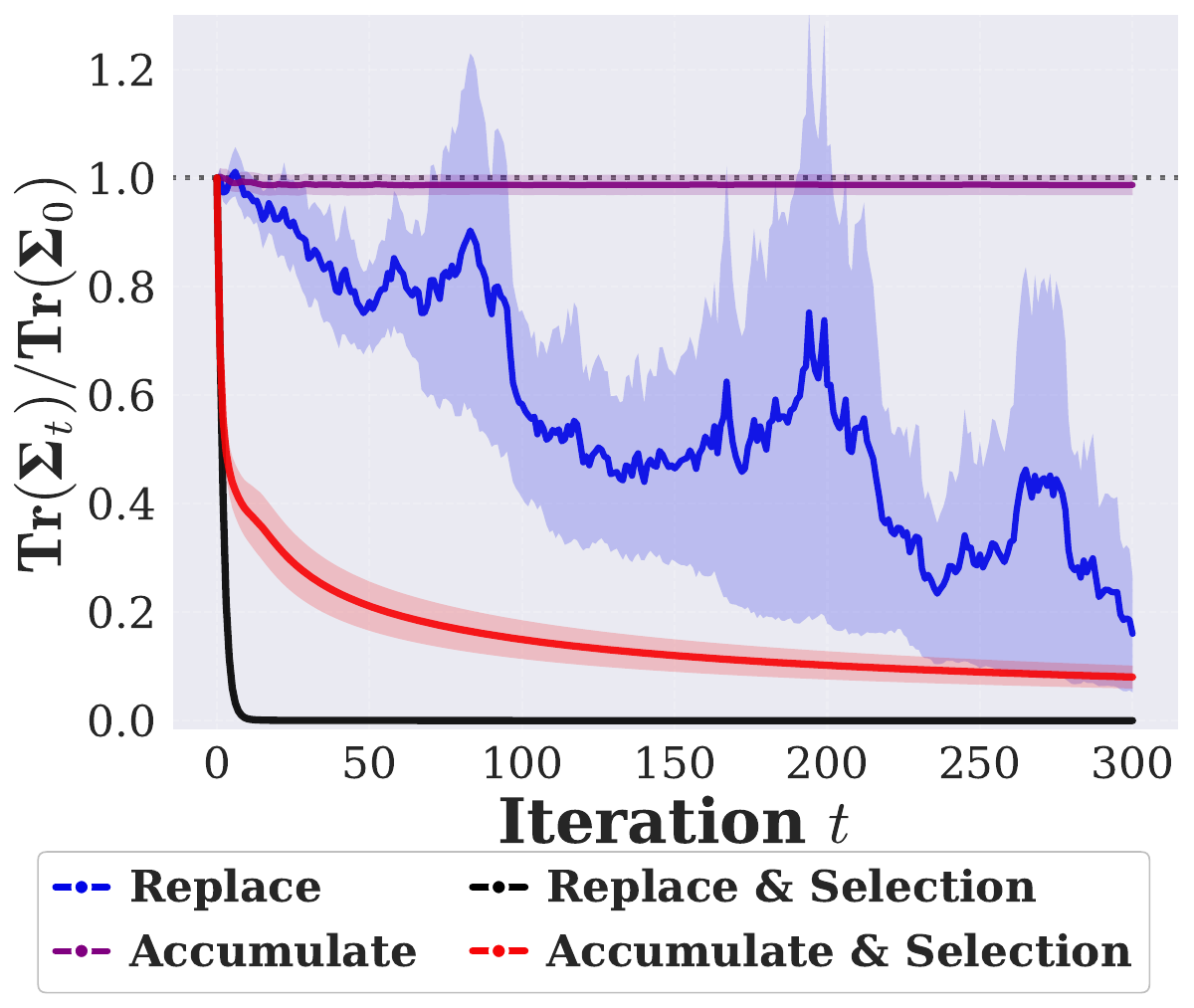}
    \caption{\textbf{Multivariate Gaussian Modeling}. We observe that \textit{sample selection bias precipitates model collapse} in both Replace and Accumulate paradigms. In this setting, diversity under the Accumulate paradigm with selection dissipates at a power-law rate.}
    \label{fig:variance_collapse}
    \vspace{-1em}
\end{figure}
Accordingly, we define the dissipation matrix 
$\mathbf{\Psi}_{t-1}$ as:
\vspace{-0.5em}
\begin{equation}
    \mathbf{\Psi}_{t-1} \triangleq \mathbf{I}_d - (\mathbf{B}_{t-1} + \boldsymbol{a}_{t-1}\boldsymbol{a}_{t-1}^\top).
    \vspace{-0.65em}
\end{equation}
Under the local-basin analysis above, the first-order dissipation matrix admits a uniform positive spectral gap. In particular, there exists a constant $\psi \!>\!0$ such that $\lambda_{\min}(\mathbf{\Psi}_{t-1}) \! \ge \! \psi$
along the trajectory.
\begin{mdframed}[style=MyFrame1]
\begin{restatable}[\textbf{Asymptotic Rate of Model Collapse}]{theorem}{corDecayRate}
\label{thm:decay_rate}
Assume that the trajectory remains in the local basin for all iterations. As iteration $t \to \infty$,
\vspace{-0.7em}
\begin{equation}
\text{\upshape Tr}(\bar{\boldsymbol{\Sigma}}_t) = \mathcal{O}_{a.s.}\big(t^{-\psi}\big).
\end{equation}
\end{restatable}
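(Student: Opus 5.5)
We track the accumulated second-moment recursion of $\bar{\boldsymbol{\Sigma}}_t$ and show it is a stochastic-approximation scheme with step size $1/(t+1)$ and contraction factor at least $\psi$. That structure yields the rate $t^{-\psi}$.

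\textbf{Step 1: the recursion.} Write $N_t=(t+1)n$ and the unnormalised scatter $\mathbf{S}_t=\sum_{\tau\le t}\sum_i(\mathbf{X}_{i,\tau}-\bar{\boldsymbol{\mu}}_t)^{\otimes 2}$. The standard pooled-update identity gives
\begin{equation*}
\mathbf{S}_t=\mathbf{S}_{t-1}+\sum_{i}(\mathbf{X}_{i,t}-\hat{\boldsymbol{\mu}}_t)^{\otimes 2}+\tfrac{N_{t-1}n}{N_t}(\hat{\boldsymbol{\mu}}_t-\bar{\boldsymbol{\mu}}_{t-1})^{\otimes 2},
\end{equation*}
where $\hat{\boldsymbol{\mu}}_t$ is the batch mean. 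Substitute the reparameterisation $\mathbf{X}_{i,t}=\bar{\boldsymbol{\mu}}_{t-1}+\bar{\boldsymbol{\Sigma}}_{t-1}^{1/2}\boldsymbol{\eta}_{i,t}$ and take the conditional expectation given $\mathcal F_{t-1}$. The new batch then contributes $\bar{\boldsymbol{\Sigma}}_{t-1}^{1/2}\big((n-1)\mathbf{B}_{t-1}+\tfrac{N_{t-1}}{N_t}(\mathbf{B}_{t-1}+n\boldsymbol{a}_{t-1}\boldsymbol{a}_{t-1}^\top)\big)\bar{\boldsymbol{\Sigma}}_{t-1}^{1/2}$. To first order in $1/t$ this is $n\,\bar{\boldsymbol{\Sigma}}_{t-1}^{1/2}(\mathbf{B}_{t-1}+\boldsymbol{a}_{t-1}\boldsymbol{a}_{t-1}^\top)\bar{\boldsymbol{\Sigma}}_{t-1}^{1/2}$. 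Dividing by $N_t-1$ gives
\begin{equation*}
\mathbb{E}[\bar{\boldsymbol{\Sigma}}_t\mid\mathcal F_{t-1}]=\bar{\boldsymbol{\Sigma}}_{t-1}-\tfrac{1}{t+1}\bar{\boldsymbol{\Sigma}}_{t-1}^{1/2}\mathbf{\Psi}_{t-1}\bar{\boldsymbol{\Sigma}}_{t-1}^{1/2}+\mathbf{R}_t,
\end{equation*}
with a remainder $\|\mathbf{R}_t\|=\mathcal O(t^{-2})\operatorname{Tr}(\bar{\boldsymbol{\Sigma}}_{t-1})$. This is exactly where the dissipation matrix $\mathbf{\Psi}_{t-1}$ defined above appears.

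\textbf{Step 2: scalar supermartingale.} Take traces and use $\operatorname{Tr}(\mathbf{A}^{1/2}\mathbf{\Psi}\mathbf{A}^{1/2})\ge\lambda_{\min}(\mathbf{\Psi})\operatorname{Tr}(\mathbf{A})\ge\psi\operatorname{Tr}(\mathbf{A})$; this uses the uniform spectral gap and the assumption that the trajectory stays in the basin. With $V_t=\operatorname{Tr}(\bar{\boldsymbol{\Sigma}}_t)$ we get
\begin{equation*}
\mathbb{E}[V_t\mid\mathcal F_{t-1}]\le\Big(1-\tfrac{\psi}{t+1}+\tfrac{c}{t^2}\Big)V_{t-1}.
\end{equation*}
Set $Y_t=V_t\prod_{s\le t}(1-\psi/(s+1)+c/s^2)^{-1}$. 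Then $Y_t$ is a nonnegative supermartingale, so $Y_t\to Y_\infty<\infty$ a.s. Since $\prod_{s\le t}(1-\psi/(s+1)+c/s^2)\asymp t^{-\psi}$, because $\sum c/s^2<\infty$ and $\sum\log(1-\psi/s)=-\psi\log t+\mathcal O(1)$, it follows that $V_t=\mathcal O_{a.s.}(t^{-\psi})$.

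\textbf{Main obstacle.} The hard step is controlling the fluctuation and remainder terms uniformly in $t$. The terms in $(\hat{\boldsymbol{\mu}}_t-\bar{\boldsymbol{\mu}}_{t-1})$ have conditional expectation governed by $\boldsymbol{a}_{t-1}$ and $\mathbf{B}_{t-1}$, and these in turn depend on the adaptive region $\mathcal{D}_{t-1}$. I would handle this as follows:
\begin{itemize}
\item use boundedness of $\mathcal{R}_t$ and the fixed mass $\alpha$ to bound $\|\boldsymbol{a}_{t-1}\|$ and $\|\mathbf{B}_{t-1}\|$ by constants depending only on $(\alpha,d)$, which keeps $c$ uniform;
\item rely on the Robbins--Siegmund form of the supermartingale convergence theorem, which tolerates the summable $\mathcal O(t^{-2})$ perturbation.
\end{itemize}
If the spectral-gap hypothesis only holds eventually, for instance after $\bar{\boldsymbol{\mu}}_t$ has entered a neighbourhood of $\mathbf{u}^*$ as guaranteed by \cref{thm:selection_collapse}, I would start the product at that random time. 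This changes only the constant and not the exponent $\psi$.
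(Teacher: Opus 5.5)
Your proposal is correct and follows essentially the same route as the paper. Both derive $\mathbb{E}[\bar{\boldsymbol{\Sigma}}_t\mid\mathcal F_{t-1}]=\bar{\boldsymbol{\Sigma}}_{t-1}-(\text{step})\,\bar{\boldsymbol{\Sigma}}_{t-1}^{1/2}\mathbf{\Psi}_{t-1}\bar{\boldsymbol{\Sigma}}_{t-1}^{1/2}+(\text{second order})$, take traces and apply $\lambda_{\min}(\mathbf{\Psi}_{t-1})\ge\psi$ (via Ruhe's inequality in the paper), and divide by a deterministic product of order $t^{-\psi}$ to obtain an a.s.-convergent nonnegative supermartingale.

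The only difference is bookkeeping. The paper uses the $nt$-normalized estimator, whose second-order term $-\mathbf{K}_{t-1}/t^2$ is negative semidefinite and is simply dropped. With your $(t+1)n-1$ normalization the remainder is $\mathbf{R}_t=-\frac{1}{(t+1)((t+1)n-1)}\bar{\boldsymbol{\Sigma}}_{t-1}^{1/2}(\mathbf{I}_d+(n-1)\boldsymbol{a}_{t-1}\boldsymbol{a}_{t-1}^\top)\bar{\boldsymbol{\Sigma}}_{t-1}^{1/2}\preceq\mathbf{0}$, so it can be dropped the same way. That makes the uniform bounds on $\boldsymbol{a}_{t-1},\mathbf{B}_{t-1}$ and the Robbins--Siegmund remark unnecessary, though your absolute bound also works, for example via $\mathbb{E}[\|\boldsymbol{\eta}_{i,t}\|^2\mid\mathcal F_{t-1}]\le d/\alpha$.
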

\end{mdframed}
\vspace{-0.7em}
Please refer to \cref{app:proof:decay_rate} for the proof.  \cref{thm:decay_rate} demonstrates a \textit{power-law} rate. Intuitively, this shows a two-phase collapse dynamic: \ding{172} an initial phase of rapid homogenization driven by strict sample filtering, \ding{173} followed by slow, asymptotic convergence to a Dirac point mass.  This quantification reveals the tension: stricter alignment with the local target $\mathbf{u}^*$ can improve local fidelity while increasing the dissipation magnitude $\lambda_{\min}(\mathbf{\Psi}_{\infty})$. In data-scarce or low-resource environments, this means that a verifier with limited reference coverage may accelerate tail erosion by selecting samples that appear locally high-quality.

Empirically, we validate these findings in \cref{fig:variance_collapse} using the Multivariate Gaussian Modeling framework in \citet{kazdancollapse,shumailov2024ai}. Monitoring the ratio $\text{\upshape Tr}(\bar{\boldsymbol{\Sigma}}_t) / \text{\upshape Tr}(\bar{\boldsymbol{\Sigma}}_0)$, we compare the Replace paradigm ($n=200$ samples per iteration) against the Accumulate paradigm. When incorporating a selection mechanism that filters the top-$n$ samples closest to a target state $\mathbf{u}^*$, we observe diversity collapse in both paradigms. Specifically, the Replace paradigm suffers from rapid variance depletion; the Accumulate paradigm exhibits the power-law decay predicted in \cref{thm:selection_collapse,thm:decay_rate}: it mirrors the rapid initial collapse of the Replace paradigm but subsequently transitions into a protracted asymptotic decay. Due to space limitations, we defer additional results on non-Gaussian distributions and extended configurations to \cref{app:exeperiment:nongaussian}.

   

\newpage

\subsection{The Wasserstein Cost of Model Collapse}
\label{subsec:wasserstein_cost}
While our variance-decay analysis leverages Gaussian approximations to obtain explicit dynamics, the impact of collapse on the generalization performance of the downstream prediction task does not hinge on such parametric assumptions. To quantify this effect under minimal distributional structure, 
we adopt standard regularity conditions commonly used in learning-theoretic optimal transport generalization~\cite{NIPS2017_0070d23b,redko2017theoretical}.

\begin{assumption}
\label{assump:regularity_selection}
Let $\mathcal{H}$ be a hypothesis class of predictors $h:\mathcal{X}\to\mathcal{Y}$, and we assume the following regularity conditions.
\vspace{-2em}
\begin{itemize}[leftmargin=0.3343cm, itemindent=0cm]
    \item \textbf{Hypothesis Lipschitzness.}
    Every $h$ is $\epsilon$-Lipschitz w.r.t.\ $d_{\mathcal{X}}$, i.e.,
    $\|h(x)-h(x')\|\le \epsilon\, d_{\mathcal{X}}(x,x')$ for all $x,x'\in\mathcal{X}$.
\vspace{-1.5em}
    \item \textbf{Loss Lipschitzness.}
    The loss $\mathcal{L}(\hat{y}, y)$ is $\ell$-Lipschitz in both inputs w.r.t.\ the bounded task-metric $d_{\mathcal{Y}}$, i.e.,
    $|\mathcal{L}(\hat{y}_1, y_1) - \mathcal{L}(\hat{y}_2, y_2)|
    \le \ell (d_{\mathcal{Y}}(\hat{y}_1, \hat{y}_2) + d_{\mathcal{Y}}(y_1, y_2))$ for all $\hat{y},y \in \mathcal{Y}$.
\vspace{-1.5em}
    \item \textbf{$(\epsilon,\delta)$-Probabilistic Cross-Lipschitzness.}
    Let $g^*:\mathcal{X}\to\mathcal{Y}$ denote the global oracle mapping (ground truth).
    The local verification process imposes a biased effective supervision signal $g_t:\mathcal{X}\to\mathcal{Y}$ on the selected samples.
    We assume the local view $g_t$ and the global view $g^*$ satisfy $(\epsilon,\delta)$-probabilistic cross-Lipschitzness~\cite{just2023lava}.
\end{itemize}
\end{assumption}
\vspace{-0.25em}
Let $\mathcal{R}_{\mathcal{D}}(h;g)\triangleq \mathbb{E}_{x\sim \mathcal{D}}[\mathcal{L}(h(x),g(x))]$ denote the expected risk of predictor $h$ under distribution $\mathcal{D}$. Following \citet{just2023lava}, we have the following generalization cost:
\begin{mdframed}[style=MyFrame1]
\begin{restatable}[\textbf{Wasserstein Cost of Model Collapse}]{theorem}{thmWassersteinCost}
\label{thm:wasserstein_cost}
Let \cref{assump:regularity_selection} hold. Let $\mathcal{D}_t$ be the distribution of synthetic data filtered by local verification at generation $t$, and let $\mathcal{D}^*$ be the true data manifold. For any $h_t \!\in\! \mathcal{H}$ trained on $\mathcal{D}_t$, the expected risk is bounded by
\vspace{-0.25em}
\begin{equation}
\label{eq:wasserstein_cost}
\begin{aligned}
   \!\! \mathcal{R}_{\mathcal{D}^*}&(h_t;g^*)
 \le \\
2& \ell\epsilon\,\mathbb{W}_p(\mathcal{D}_t,\mathcal{D}^*)
\ +\mathcal{R}_{\mathcal{D}_t}(h_t;g_t)
 + \  \mathcal{O}(\ell\,\delta).
\end{aligned}
\end{equation}
\end{restatable}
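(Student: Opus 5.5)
The bound is a coupling argument in the style of \citet{just2023lava} and \citet{NIPS2017_0070d23b}. The risk is transferred from $\mathcal{D}^*$ to $\mathcal{D}_t$ along an optimal transport plan, and each discrepancy is paid for with one of the Lipschitz conditions in \cref{assump:regularity_selection}. Let $\pi^*\in\Pi(\mathcal{D}_t,\mathcal{D}^*)$ be an optimal coupling for $\mathbb{W}_p$. Since $x'\sim\mathcal{D}^*$ is the second marginal of $\pi^*$, we can write
\begin{equation*}
\mathcal{R}_{\mathcal{D}^*}(h_t;g^*)=\mathbb{E}_{(x,x')\sim\pi^*}\big[\mathcal{L}(h_t(x'),g^*(x'))\big].
\end{equation*}
Inside the expectation we add and subtract $\mathcal{L}(h_t(x),g_t(x))$, whose $\pi^*$-expectation is exactly $\mathcal{R}_{\mathcal{D}_t}(h_t;g_t)$.

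The remaining term is $\mathbb{E}_{\pi^*}\big[\mathcal{L}(h_t(x'),g^*(x'))-\mathcal{L}(h_t(x),g_t(x))\big]$. Loss Lipschitzness bounds it by
\begin{equation*}
\ell\,\mathbb{E}_{\pi^*}\big[d_{\mathcal{Y}}(h_t(x'),h_t(x))\big]+\ell\,\mathbb{E}_{\pi^*}\big[d_{\mathcal{Y}}(g^*(x'),g_t(x))\big].
\end{equation*}
By hypothesis Lipschitzness, identifying $d_{\mathcal Y}$ with the norm in that condition, the first term is at most $\ell\epsilon\,\mathbb{E}_{\pi^*}[d_{\mathcal{X}}(x,x')]$.

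The second term uses the $(\epsilon,\delta)$-probabilistic cross-Lipschitzness between $g_t$ and $g^*$. I take it in the form of \citet{just2023lava}: under the coupling, the event $d_{\mathcal Y}(g_t(x),g^*(x'))\le\epsilon\, d_{\mathcal X}(x,x')$ holds with probability at least $1-\delta$. Split the expectation on this event and its complement. On the event the contribution is at most $\epsilon\,\mathbb{E}_{\pi^*}[d_{\mathcal X}(x,x')]$. On the complement, boundedness of $d_{\mathcal Y}$ (say by $M$) gives at most $M\delta$, which after multiplying by $\ell$ is the $\mathcal{O}(\ell\delta)$ term.

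Collecting terms yields $2\ell\epsilon\,\mathbb{E}_{\pi^*}[d_{\mathcal X}(x,x')]$. By Jensen (or Hölder), $\mathbb{E}_{\pi^*}[d]\le(\mathbb{E}_{\pi^*}[d^p])^{1/p}=\mathbb{W}_p(\mathcal{D}_t,\mathcal{D}^*)$ for $p\ge1$, which is the claim. If $\mathbb{W}_p$ is not attained, take $\eta$-optimal couplings and let $\eta\to0$.

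The main obstacle is the cross-Lipschitz step. The definition must be pinned down so that it applies to the pair $(x,x')$ drawn from the optimal plan rather than to an arbitrary pair. The event also has to be measurable with respect to $\pi^*$, so that the $\delta$-mass bound transfers. If the definition in \citet{just2023lava} is stated for a fixed coupling, I would adopt it directly for the optimal plan, as LAVA does. The constant in $\mathcal{O}(\ell\delta)$ is then the diameter bound of $d_{\mathcal Y}$.
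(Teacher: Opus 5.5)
Your proposal is correct and follows essentially the same route as the paper. The paper takes an optimal coupling $\pi\in\Pi(\mathcal{D}_t,\mathcal{D}^*)$ and splits the integrand into a hypothesis-variation term bounded by $\ell\epsilon\, d_{\mathcal X}$ and a target-shift term, handles the latter by splitting into an aligned set and a misaligned set of $\pi$-mass at most $\delta$ (controlled by the diameter $C_{\mathcal Y}$ of $d_{\mathcal Y}$), and finishes with Jensen to reach $\mathbb{W}_p$; your only difference is applying the joint loss-Lipschitz condition in one step instead of inserting the intermediate term $\mathcal{L}(h_t(x_t),g^*(x^*))$, which is equivalent.
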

\end{mdframed}
\vspace{-0.75em}
Please refer to \cref{app:proof:wasserstein_cost} for the comprehensive proof. \\
\cref{thm:wasserstein_cost} decomposes the expected risk on the true manifold $\mathcal{D}^*$ into three components: (i) the distributional discrepancy measured by the Wasserstein distance $\mathbb{W}_p(\mathcal{D}_t, \mathcal{D}^*)$; (ii) the expected risk on the current filtered distribution $\mathcal{D}_t$; and (iii) an error term due to probabilistic violations. When $h_t$ is well-optimized on the local synthetic data $\mathcal{D}_t$, the second term becomes negligible. Consequently, the generalization performance is primarily dominated by the first term.  \cref{thm:wasserstein_cost} suggests that access to ground truth could theoretically preclude mode collapse. However, low-resource and sparsity constraints render such access infeasible in siloed environments. Unlike preference-based curation, where bias stems from explicit human objectives and is mitigable via established techniques~\cite{NEURIPS2019_d76d8dee,Chen_2024_CVPR}, the verification bias in data silos is an intrinsic consequence of fragmented access to the global distribution.
     
\section{Methodology}
\label{sec:methodology}


\subsection{Wasserstein-Gradient-Based Selection}
\label{subsec:selection}
Inspired by \citet{kessler2025sava,li2024data}, we delineate the theoretical foundation of the selection mechanism. The core objective is to quantify the contribution of individual synthetic samples to the global distributional alignment. Specifically, given a synthetic dataset $\mathcal{P} = \{x_i\}_{i=1}^n$ and a reference real dataset $\mathcal{Q}$, we examine the sensitivity of the Wasserstein distance $\mathbb{W}_p(\mathcal{P}, \mathcal{Q})$ with respect to perturbations of $x_i$. Conceptually, this is similar to influence functions (IF) \citep{pmlr-v70-koh17a} which were adopted for adversarial curation strategies in self-consuming generative models \citep{wei2025selfconsuming}.
A key distinction is that, while IF-based methods rely on linear approximations for infinitesimal perturbations, the discrete Wasserstein distance is formulated as a linear programming problem (LP). According to the Sensitivity Theorem \citep{bertsekas1997nonlinear}, the gradients derived from the LP dual solution remain valid for perturbations within a local polytope. This allows us to reliably predict the variation in Wasserstein distance induced by re-weighting a sample without the need for re-calculation.

Similar to \citet{li2024data}, we utilize the first-order variation induced by the Kantorovich dual potentials. Recall the dual formulation: $\mathbb{W}_p(\mathcal{P}, \mathcal{Q}) = \sup_{(f, g) \in \Phi_c} \left( \langle f, \mathcal{P} \rangle + \langle g, \mathcal{Q} \rangle \right)$,
where $\Phi_c$ denotes the set of admissible potentials satisfying $f(x) + g(y) \le d^p(x, y)$. The optimal dual potential $f^*$ serves as the subgradient of the transport cost with respect to the probability mass $P$, i.e., $\nabla_\mathcal{P}{\mathbb{W}_p(\mathcal{P}, \mathcal{Q})} = (f^*)^\top$. 

To eliminate the degree of freedom arising from the translational invariance of the simplex constraint ($\sum_{i=1}^N P = 1$), the zero-sum convention is adopted to fix the subgradients, following the strategy in \citet{just2023lava}. Consequently, for a sample $x_i \in \mathcal{P}$, the calibrated gradients are derived by:
\vspace{-0.65em}
\begin{equation}
  \!\! \!\! \mathcal{S}(x_i)  \triangleq  \frac{\partial\mathbb{W}_p(\mathcal{P}, \mathcal{Q})}{\partial \mathcal{P}(x_i)} =f^*(x_i) - \frac{1}{N-1} \sum_{j \neq i}^N f^*(x_j).
    \label{eq:calibrated_gradient}
    \vspace{-0.35em}
\end{equation}
\begin{remark}
    \cref{eq:calibrated_gradient} yields intriguing interpretations. Specifically, a \textbf{positive score} $\mathcal{S}(x_i)$ implies that removing the sample $x_i$ reduces the total discrepancy $\mathbb{W}_p(\mathcal{P}, \mathcal{Q})$, whereas a \textbf{negative score} suggests that its removal would exacerbate the divergence. Consequently, this drives us to prune samples exhibiting positive scores to shift the distribution $\mathcal{P}$ closer to the target manifold $\mathcal{Q}$ in Wasserstein distance.
\label{remark}
\end{remark}

Intuitively, the synthetic dataset $\mathcal{P}$ could be transmitted to decentralized parties holding real data shards $\mathcal{Q}_k$ ($k=1,\dots,K$) to compute the sensitivity scores $\mathcal{S}_k(x_i)$. However, recent studies~\cite{ganev2025inadequacy,10.1145/3372297.3417238,pmlr-v206-breugel23a} highlight that synthetic data remains susceptible to privacy leakage. This critical limitation motivates privacy-preserving methods based on the following Wasserstein geometry property.


\subsection{Wasserstein Geometry Property}
\label{subsec:geometry}

We first introduce intrinsic geometric properties in $\mathcal{P}_p(\mathbb{R}^d)$.

\begin{property}[\textbf{Wasserstein Barycenter}, \citet{agueh2011barycenters}]
\label{property:barycenter}
Let $\{\mathcal{Q}_k\}_{k=1}^K \subset \mathcal{P}_p(\mathbb{R}^d)$ denote a collection of square-integrable probability measures with weights $\{\lambda_k\}_{k=1}^K$ such that $\sum_{k=1}^K \lambda_k = 1$, where typically $\lambda_k = 1/K$. The Wasserstein barycenter $\mathcal{Q}^*$ is characterized as:
\vspace{-0.25em}
\begin{equation}
    \mathcal{Q}^* = \mathop{\arg\min}_{\mathcal{Q} \in \mathcal{P}_p(\mathbb{R}^d)} \sum_{k=1}^K \lambda_k \mathbb{W}_p^p(\mathcal{Q}, \mathcal{Q}_k).
    \vspace{-0.125em}
\end{equation}
\end{property}
\vspace{-0.125em}
\begin{property}[\textbf{McCann’s Interpolation}, \citet{mccann1997convexity,rakotomamonjy2024federated}]
\label{property:mccann}
For any two continuous measures $\mathcal{P}_0, \mathcal{P}_1 \in \mathcal{P}_p(\mathbb{R}^d)$, the displacement interpolation $\mathcal{P}_t$ for $t \in [0,1]$ defines the optimal trajectory between them. It is constructed via a transport map $\mathbf{T}$ pushing $\mathcal{P}_0$ to $\mathcal{P}_1$:
\vspace{-0.125em}
\begin{equation}
    \mathcal{P}_t = ((1-t)\mathbf{I}_d + t\mathbf{T})_{\#} \mathcal{P}_0.
    \vspace{-0.25em}
\end{equation}
In the discrete setting, we approximate the intractable map $\mathbf{T}$ using barycentric mapping~\citep{courty2018learning}. The interpolation is realized by the trajectory $x_i(t) = (1-t)x_i + t \cdot n(\mathbf{P}^{\star}\mathbf{X}^{\prime})_i$, where the barycentric projection $n(\mathbf{P}^{\star}\mathbf{X}^{\prime})_i$ acts as the empirical proxy for $\mathbf{T}(x_i)$ \cite{peyre2019computational}.
\end{property}

\begin{property}[\textbf{Wasserstein Geodesics}, \citet{ambrosio2005gradient,kolouri2017optimal}]
\label{property:geodesics}
Let $\mathbf{T}$ be the optimal transport map pushing $\mathcal{P}$ to $\mathcal{Q}$. For any $t \in (0,1)$, the intermediate interpolant $\xi^*$ is constructed as 
$    \xi^* = ((1-t)\mathbf{I}_d + t\mathbf{T})_{\#} \mathcal{P}$.
This interpolant satisfies:
\vspace{-0.1em}
\begin{equation}
    \mathbb{W}_p(\mathcal{P}, \mathcal{Q}) = \mathbb{W}_p(\mathcal{P}, \xi^*) + \mathbb{W}_p(\xi^*, \mathcal{Q}).
    \vspace{-0.3em}
\end{equation}
\end{property}

\paragraph{Main Intuition.} \citet{li2024data} show that Wasserstein distances and barycenters can be computed without sharing raw data by constructing a proxy interpolation $\xi^*$ (via \cref{property:mccann}) on the geodesic between two distributions $\mathcal{P}$ and $\mathcal{Q}$; this interpolant satisfies the geodesic property in \cref{property:geodesics}. 



\subsection{Collaborative Geodesic Interpolation Calculation}

We now present $\texttt{Scheme I}$. Motivated by \cref{thm:selection_collapse}, by broadening the selection criteria to encompass multiple target preferences rather than a solitary preference $u^*$, one can expand the selection space $\mathcal{R}_t$, thereby mitigating collapse.

We utilize \cref{property:geodesics} to avoid direct data communication. When $\xi_k^*$ is an interpolation measure (computed by \cref{property:mccann}) on the geodesic between $\mathcal{P}$ and $\mathcal{Q}_k$, the dual formulation is $\mathbb{W}_p(\mathcal{P}, \mathcal{Q}_k) = \sup_{(f_k,g_k)\in \Phi_c} \left( \langle f_k, \mathcal{P} \rangle \!+\! \langle g_k, \xi_k^* \rangle \right) \!+\! \sup_{(h_k,j_k)\in \Phi_c} \left( \langle h_k, \xi_k^* \rangle \!+\! \langle j_k, \mathcal{Q}_k \rangle \right)$. 

We thus have
$\nabla_\mathcal{P}{\mathbb{W}_p(\mathcal{P},  \mathcal{Q}_k)}\approx\nabla_\mathcal{P}{\mathbb{W}_p(\mathcal{P}, \xi_k^*)} = (f^*)^\top$, which implies we can compute $\mathcal{S}_k(x_i)$  using the proxy interpolations $\xi_k^*$ without directly accessing real data $\mathcal{Q}_k$.
\vspace{-0.25em}
\begin{equation}
    \!\!\mathcal{S}_k(x_i) = \!\frac{\partial\mathbb{W}_p(\mathcal{P}, \xi_k^*)}{\partial \mathcal{P}(x_i)} = f^*(x_i) - \frac{1}{\!N\!-\!1} \sum_{j \neq i}^N f^*(x_j)
    \vspace{-0.25em}
\end{equation}
Therefore, the key problem is finding interpolations on the Wasserstein geodesics between $\mathcal{P}$ and $\mathcal{Q}_k$ without sharing $\mathcal{P}$ and $\mathcal{Q}_k$. As illustrated in \cref{fig:method_1}, we abstract the overall process as a geometric procedure:
For round $r=1,\ldots,R$:
\\
\textbf{Initialization.} An arbitrary proxy measure $\xi_k^{(0)}$ is initialized, typically as a Gaussian distribution or a selection from a public dataset proximate to the target distribution.
\vspace{-1em}
\begin{enumerate}[leftmargin=0.3343cm, itemindent=0cm]
    \item \textbf{Interpolation.} The interpolations $\xi_{\mathcal{P}}^{(r)}$ and $\xi_{\mathcal{Q}_k}^{(r)}$ are computed from the proxy $\xi_k^{(r)}$ to $\mathcal{P}$ and $\mathcal{Q}_k$, respectively, i.e., $\xi_{\mathcal{P}}^{(r)} \in \arg \min _{\xi} \mathcal{W}_p(\mathcal{P}, \xi)+\mathcal{W}_p(\xi,\xi_k^{(r)})$ and $\xi_{\mathcal{Q}_{k}}^{(r)} \in \arg \min _{\xi} \mathcal{W}_p\left(\mathcal{Q}_k, \xi\right)+\mathcal{W}_p(\xi,\xi_k^{(r)})$. 
    \vspace{-0.75em}
    \item \textbf{Communication.} The interpolant $\xi_{\mathcal{P}}^{(r)}$ is transmitted by the party holding $\mathcal{P}$ to the party holding $\mathcal{Q}_k$.
    \vspace{-0.5em}
    \item \textbf{Update.} The updated proxy measure $\xi_k^{(r+1)}$ is calculated via an interpolation measure between $\xi_{\mathcal{P}}^{(r)}$ and $\xi_{\mathcal{Q}_k}^{(r)}$ and is subsequently returned to the synthetic data party.
    \vspace{-0.25em}
\end{enumerate}

\begin{mdframed}[style=MyFrame1]
\begin{restatable}[\textbf{Monotonicity and  Interpolation Convergence, \citet{rakotomamonjy2024federated}}]{theorem}{thmInterpolationConvergence}
\label{thm:interpolation_convergence} For interpolation $\xi_k^{(r)}$ at iteration $r$, define the sequence as:
\vspace{-0.5em}
\begin{equation}
    \mathcal{E}_k^{(r)} =  \mathcal{W}_p(\mathcal{Q}_k, \xi_k^{(r)}) + \mathcal{W}_p(\xi_k^{(r)}, \mathcal{P}) 
    \vspace{-0.25em}
\end{equation}
Then, the sequence $\{\mathcal{E}^{(r)}\}_{r \ge 0}$  is non-increasing and converges to its infimum $\mathcal{E}_k^* = \mathcal{W}_p(\mathcal{Q}_k, \mathcal{P})$.
\end{restatable}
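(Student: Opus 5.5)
The argument rests on the triangle inequality in $(\mathcal{P}_p(\mathbb{R}^d),\mathcal{W}_p)$ together with the geodesic splitting in \cref{property:geodesics}. We track one round of the scheme and compare $\mathcal{E}_k^{(r+1)}$ with $\mathcal{E}_k^{(r)}$.

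\textbf{Step 1.} By the interpolation step, $\xi_{\mathcal{P}}^{(r)}$ lies on a geodesic between $\mathcal{P}$ and $\xi_k^{(r)}$. \Cref{property:geodesics} then gives
\[
\mathcal{W}_p(\mathcal{P},\xi_k^{(r)})=\mathcal{W}_p(\mathcal{P},\xi_{\mathcal{P}}^{(r)})+\mathcal{W}_p(\xi_{\mathcal{P}}^{(r)},\xi_k^{(r)}).
\]
Likewise, $\xi_{\mathcal{Q}_k}^{(r)}$ lies on a geodesic between $\mathcal{Q}_k$ and $\xi_k^{(r)}$, so the analogous identity holds with $\mathcal{Q}_k$ in place of $\mathcal{P}$.

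\textbf{Step 2.} The update takes $\xi_k^{(r+1)}$ on a geodesic between $\xi_{\mathcal{P}}^{(r)}$ and $\xi_{\mathcal{Q}_k}^{(r)}$. Applying the triangle inequality on each side,
\[
\mathcal{E}_k^{(r+1)}\le \mathcal{W}_p(\mathcal{Q}_k,\xi_{\mathcal{Q}_k}^{(r)})+\mathcal{W}_p(\xi_{\mathcal{Q}_k}^{(r)},\xi_k^{(r+1)})+\mathcal{W}_p(\xi_k^{(r+1)},\xi_{\mathcal{P}}^{(r)})+\mathcal{W}_p(\xi_{\mathcal{P}}^{(r)},\mathcal{P}).
\]
The two middle terms sum to $\mathcal{W}_p(\xi_{\mathcal{Q}_k}^{(r)},\xi_{\mathcal{P}}^{(r)})$, again by \cref{property:geodesics}. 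The triangle inequality through $\xi_k^{(r)}$ bounds this by $\mathcal{W}_p(\xi_{\mathcal{Q}_k}^{(r)},\xi_k^{(r)})+\mathcal{W}_p(\xi_k^{(r)},\xi_{\mathcal{P}}^{(r)})$. Regrouping with the identities of Step 1 yields $\mathcal{E}_k^{(r+1)}\le\mathcal{E}_k^{(r)}$, which is monotonicity. The triangle inequality also gives $\mathcal{E}_k^{(r)}\ge\mathcal{W}_p(\mathcal{Q}_k,\mathcal{P})$ for every $r$, so the sequence is bounded below and converges to some limit $L\ge\mathcal{W}_p(\mathcal{Q}_k,\mathcal{P})$.

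\textbf{Step 3 (main obstacle).} It remains to show $L=\mathcal{W}_p(\mathcal{Q}_k,\mathcal{P})$, and monotonicity alone does not give this. The plan is to quantify the decrease in Step 2. It equals the triangle-inequality defect
\[
\Delta^{(r)}=\mathcal{W}_p(\xi_{\mathcal{Q}_k}^{(r)},\xi_k^{(r)})+\mathcal{W}_p(\xi_k^{(r)},\xi_{\mathcal{P}}^{(r)})-\mathcal{W}_p(\xi_{\mathcal{Q}_k}^{(r)},\xi_{\mathcal{P}}^{(r)})\ge 0.
\]
Since the $\mathcal{E}_k^{(r)}$ are monotone and bounded, $\sum_r\Delta^{(r)}<\infty$, hence $\Delta^{(r)}\to0$.

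Next we need compactness. In the discrete setting, all measures are supported in the convex hull of the finite supports of $\mathcal{P}$, $\mathcal{Q}_k$ and $\xi_k^{(0)}$, so the iterates are tight and a subsequence of $\xi_k^{(r)}$ converges. At a limit point $\bar\xi$, the interpolants become fixed points of the map. With interpolation parameter $t\in(0,1)$, a vanishing defect forces $\bar\xi$ to lie on a geodesic joining points on the geodesics $[\mathcal{Q}_k,\bar\xi]$ and $[\bar\xi,\mathcal{P}]$. Concatenating these shows that $\mathcal{Q}_k\to\bar\xi\to\mathcal{P}$ is itself a geodesic, i.e.\ $L=\mathcal{W}_p(\mathcal{Q}_k,\bar\xi)+\mathcal{W}_p(\bar\xi,\mathcal{P})=\mathcal{W}_p(\mathcal{Q}_k,\mathcal{P})$.

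The delicate point is this last geometric step, where vanishing defect with fixed interior parameter must imply exact alignment. For $p>1$ I would try the strict convexity of $|\cdot|^p$ along displacement interpolations. If that fails, I would defer to the argument in \citet{rakotomamonjy2024federated}, using continuity of $\mathcal{W}_p$ under weak convergence on compact supports to pass to the limit.
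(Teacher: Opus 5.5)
Your Steps 1 and 2 are exactly the paper's argument. Both use the geodesic splitting at $\xi_{\mathcal{P}}^{(r)}$ and $\xi_{\mathcal{Q}_k}^{(r)}$, the triangle inequality through $\xi_k^{(r)}$, and the lower bound $\mathcal{E}_k^{(r)}\ge\mathcal{W}_p(\mathcal{Q}_k,\mathcal{P})$. To identify the limit, the paper does nothing beyond citing \citet{rakotomamonjy2024federated}, which is your fallback.

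The self-contained route in your Step 3 fails at ``concatenating these shows that $\mathcal{Q}_k\to\bar\xi\to\mathcal{P}$ is itself a geodesic''. Zero defect only makes the broken path geodesic on overlapping pieces, i.e.\ a \emph{local} geodesic. In $\mathcal{W}_2(\mathbb{R}^d)$ with $d\ge 2$, which is positively curved, local geodesics need not be minimizing. Every particle can move on a straight line at constant speed while the composed plan $\mathcal{Q}_k\to\mathcal{P}$ is still not cyclically monotone. Strict convexity of $|\cdot|^p$ cannot repair this, because the obstruction is global.

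In fact the limit claim cannot be recovered without extra hypotheses. Work in $\mathbb{R}^2$ with $p=2$. Let $p=(\frac12,0)$, $q=(-\frac14,\frac12)$, $m=\frac{p+q}{2}=(\frac18,\frac14)$, and set $\mathcal{P}=\frac12(\delta_p+\delta_{-p})$, $\mathcal{Q}_k=\frac12(\delta_q+\delta_{-q})$, $\xi_k^{(0)}=\frac12(\delta_m+\delta_{-m})$.

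For symmetric two-atom measures with atoms $\pm a$ and $\pm b$, the pairing $a\mapsto b$ is optimal iff $\langle a,b\rangle\ge 0$, and $\mathcal{W}_2^2=\min(|a-b|^2,|a+b|^2)$.

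\begin{itemize}
\item Since $\langle m,p\rangle=\frac{1}{16}>0$ and $\langle m,q\rangle=\frac{3}{32}>0$, the midpoint interpolants have atoms $\pm\frac{m+p}{2}$ and $\pm\frac{m+q}{2}$.
\item Their inner product is $\frac{7}{256}>0$, so the midpoint update returns atoms $\pm\frac12\left(\frac{m+p}{2}+\frac{m+q}{2}\right)=\pm m$.
\item Hence $\xi_k^{(0)}$ is a fixed point of the scheme with $t=\frac12$, and $\mathcal{E}_k^{(r)}=|q-m|+|m-p|=\frac{\sqrt{13}}{4}\approx 0.90$ for all $r$.
\item But $\langle p,q\rangle=-\frac18<0$, so $\mathcal{W}_2(\mathcal{Q}_k,\mathcal{P})=|p+q|=\frac{\sqrt5}{4}\approx 0.56$.
\end{itemize}

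So the triangle-inequality and geodesic structure yields only monotonicity and $L\ge\mathcal{W}_p(\mathcal{Q}_k,\mathcal{P})$. Equality needs an additional assumption. One example is $d=1$: quantile functions embed $\mathcal{W}_p$ isometrically into $L^p(0,1)$, the scheme becomes an affine contraction toward the midpoint, and local geodesics are global. Neither your sketch nor the paper's appeal to the reference supplies such an assumption.
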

\end{mdframed}
Please refer to \cref{app:proof:interpolation_convergence} for the comprehensive proof.
\label{subsec:scheme1}
\begin{figure}
    \centering
    \includegraphics[width=0.55\linewidth]{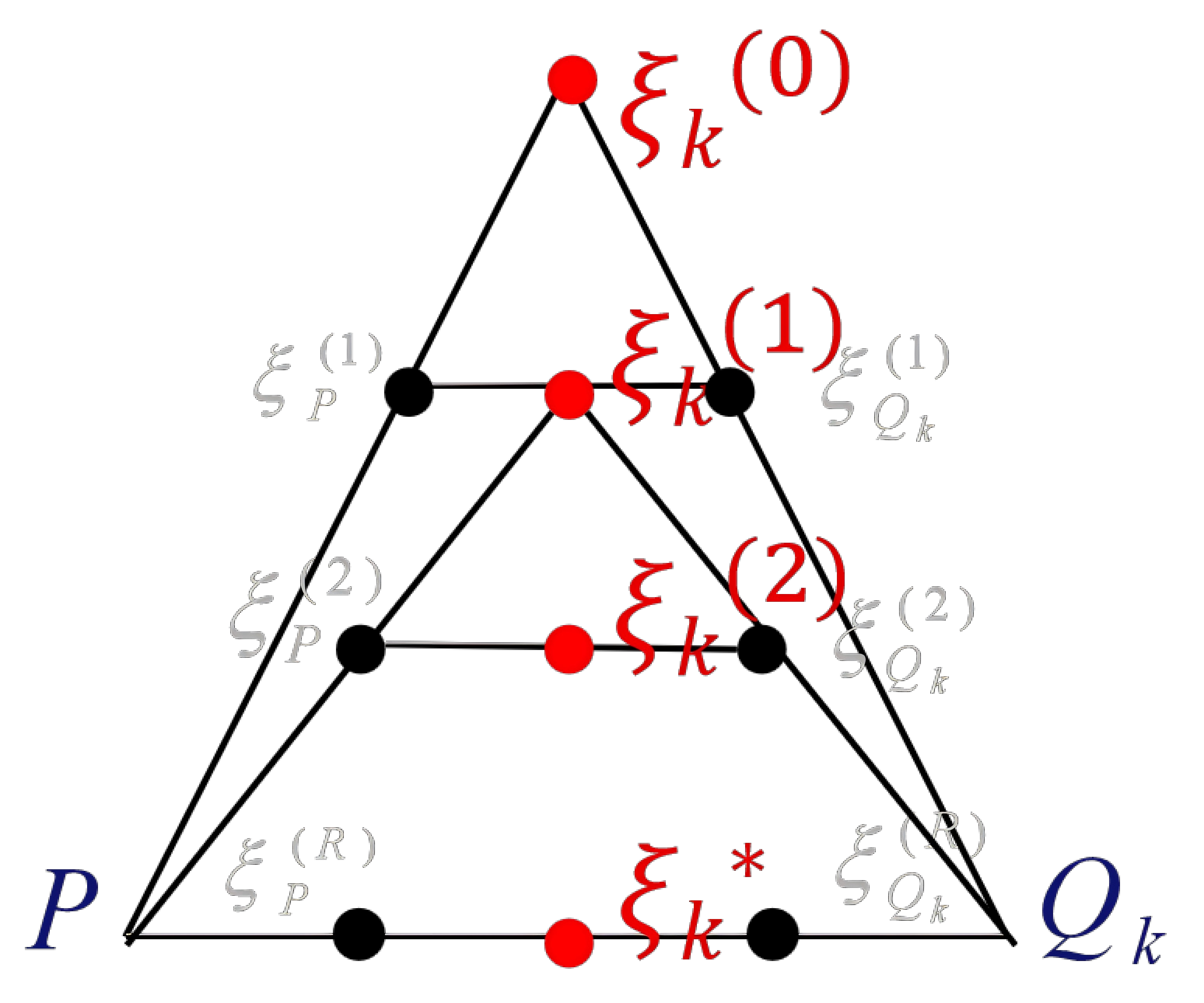}
   \caption{\textbf{Collaborative Geodesic Interpolation Calculation.} Holders of synthetic data \textcolor[RGB]{20,20,119}{$\mathcal{P}$} and real data \textcolor[RGB]{20,20,119}{$\mathcal{Q}_k$} jointly compute the proxy measure \textcolor[RGB]{255,0,0}{$\xi_k^*$} along the geodesic without raw data exchange. \textcolor[RGB]{179,179,179}{$\xi_{\mathcal{P}}^{(r)}$} and \textcolor[RGB]{179,179,179}{$\xi_{\mathcal{Q}_k}^{(r)}$} denote intermediate interpolants between the proxy measure \textcolor[RGB]{255,0,0}{$\xi_k^{(r)}$} and the distributions \textcolor[RGB]{20,20,119}{$\mathcal{P}$} and \textcolor[RGB]{20,20,119}{$\mathcal{Q}_k$}, respectively.}
    \label{fig:method_1}
\end{figure}
\begin{figure*}
    \centering
    \includegraphics[width=0.28\linewidth]{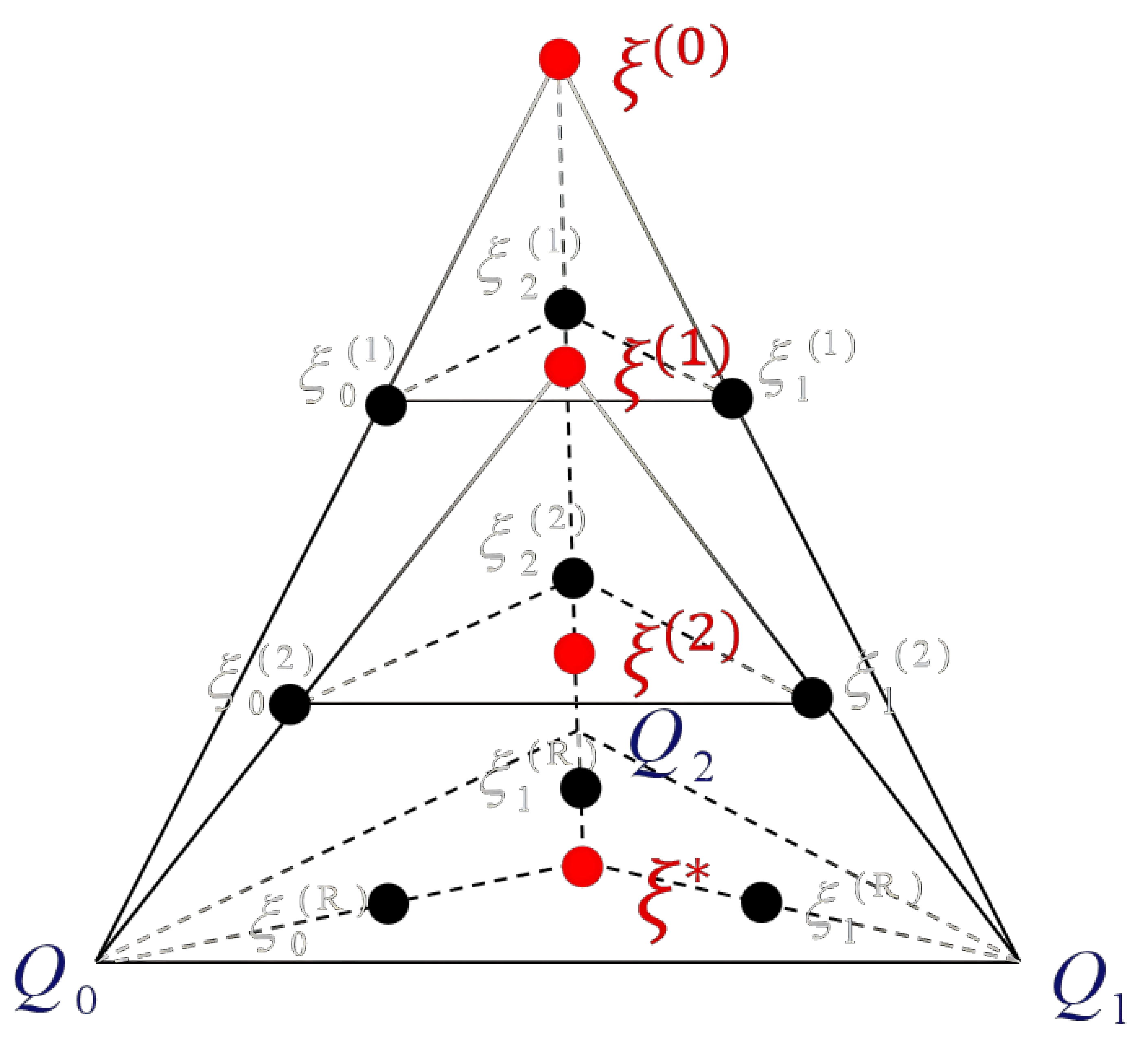}
    \includegraphics[width=0.50\linewidth]{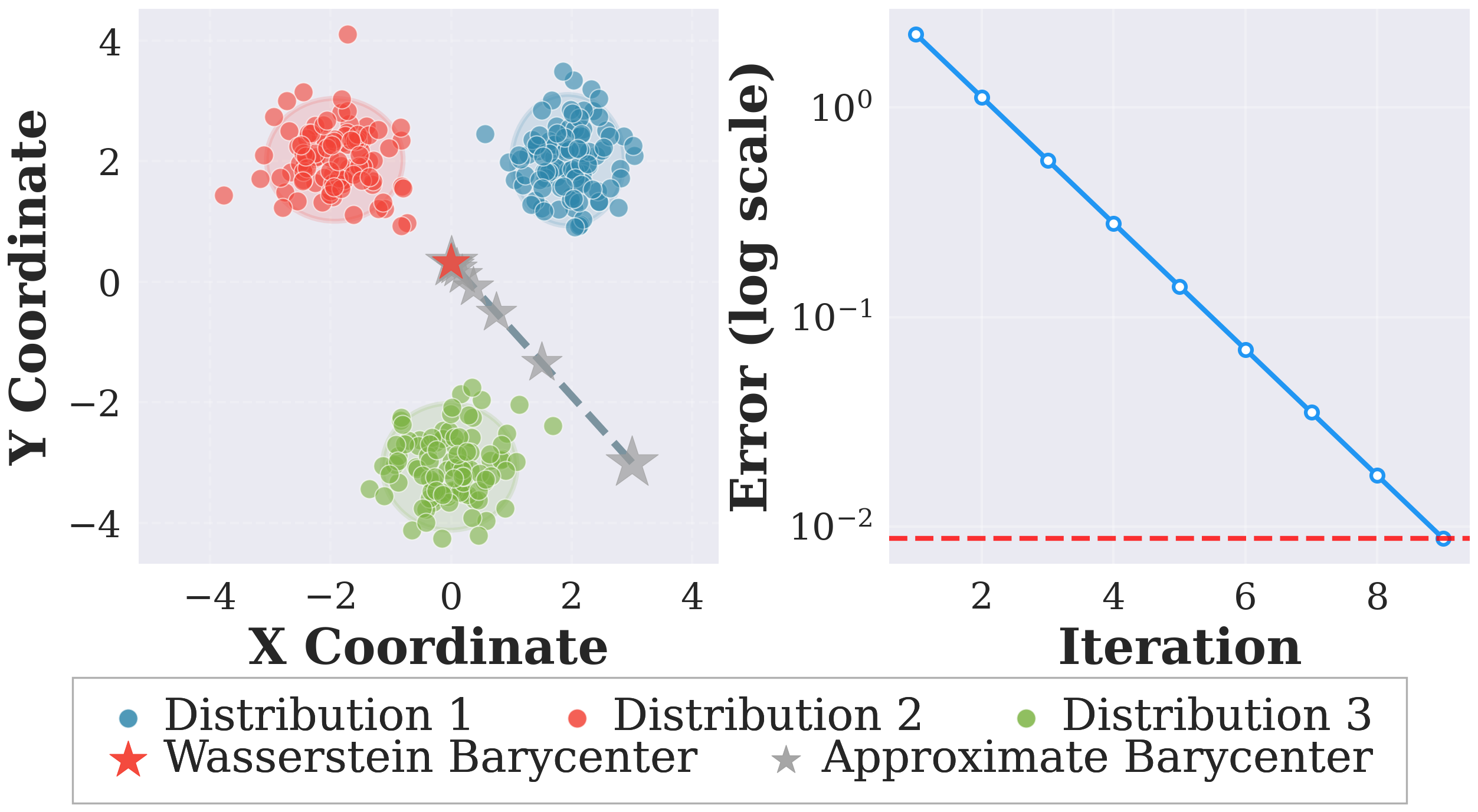}
    \caption{\textbf{Collaborative Wasserstein Barycenter Estimation.} (\textbf{Left}) Schematic illustration of the iterative interpolation process for computing the approximate barycenter \textcolor[RGB]{255,0,0}{$\xi^{(r)}$}. (\textbf{Middle}) Empirical visualization of the optimization trajectory, demonstrating the convergence from an arbitrary initialization to the barycenter. (\textbf{Right}) Numerical convergence results showing the monotonic decrease.}
    \label{fig:method_2}
    \vspace{-0.8em}
\end{figure*}
\!\!The above process is fully consistent with the process in \citet{rakotomamonjy2024federated} for computing the Wasserstein distance, while we use $\xi_k^*$ as a proxy signal of $\mathcal{Q}_k$ for data selection.
Specifically, let the candidate synthetic dataset be represented as an empirical measure $\mathcal{P} = \frac{1}{N} \sum_{i=1}^N \delta_{x_i}$. For each candidate sample $x_i$, each party $k$ contributes a score $\mathcal{S}_k(x_i)$; colloquially, every party scores $x_i$. We select a subset of indices $\mathcal{I}$ with cardinality constraint $|\mathcal{I}| \!\le \! n < N$, yielding the filtered distribution $\mathcal{P}_\mathcal{I}\! =\! \frac{1}{n}\! \sum_{i \in \mathcal{I}} \delta_{x_i}$.

To ensure $\mathcal{P}_\mathcal{I}$ covers the distributional modes of all parties while avoiding redundancy, we draw inspiration from recommendation systems that maximize the coverage of diverse user interests~\citep{krause2014submodular} and mitigate social bias in synthetic data \cite{10.1145/3543507.3583195}.  We formulate the selection as a \textit{monotone submodular maximization problem} by modeling the diminishing return of adding similar local synthetic data for each individual party. Colloquially, this means that multiple parties score synthetic data, instead of a single biased data silo.
\vspace{-0.5em}
\begin{equation}
    \underset{\mathcal{I} \subseteq \{1, \dots, N\}}{\text{maximize}}\quad \sum_{k=1}^K g\big( \sum_{i \in \mathcal{I}} (1 - \tilde{\mathcal{S}}_k(x_i)) \big) \quad \text{s.t.} \ \ |\mathcal{I}| \le n,
    \label{eq:submodular_opt}
    \vspace{-0.1em}
\end{equation}
where $\tilde{\mathcal{S}}_k(x_i)$ denotes the normalized score (min-max normalization) and $g: \mathbb{R}_{\ge 0} \to \mathbb{R}$ is a non-decreasing concave function (e.g., $g(z) \!=\! \log(1+z)$) that penalizes redundancy. 
Since the composition of a concave function with a non-negative modular sum is guaranteed to be submodular~\citep{bach2013learning}, this problem can be solved using a standard greedy algorithm, which provides a guarantee of $(1 - 1/e)$-approximation to the optimum \citep{nemhauser1978analysis}.

However, this approach is limited in scalability, as any modification to the synthetic dataset $\mathcal{P}$ necessitates a complete re-computation of the interpolation. Therefore, we propose an offline framework using a Wasserstein barycenter.

\subsection{Collaborative Wasserstein Barycenter Estimation}

\label{subsec:scheme2}

We now present $\texttt{Scheme II}$. Motivated by \cref{thm:wasserstein_cost}, which suggests that if the filtered distribution approximates the ground truth, we can theoretically mitigate mode collapse.
To achieve this, we compute a proxy for the ground truth distribution, i.e., the Wasserstein barycenter (\cref{property:barycenter}). 
Similar to \cref{subsec:scheme1}, we abstract the process as the following geometric procedure illustrated in \cref{fig:method_2}:\\
\textbf{Initialization.} A central server initializes an estimated barycenter $\xi^{(0)}$ with multiple parties holding real data distributions $\mathcal{Q}_k$ for $k=0, 1, 2$.
For round $r=0, \ldots, R-1$:
\begin{enumerate}[leftmargin=0.3343cm, itemindent=0cm]
\item \textbf{Interpolation.} The server broadcasts the current estimate $\xi^{(r)}$. Each party $k$ computes the geodesic interpolation $\xi_{k}^{(r)}$ between $\xi^{(r)}$ and its local distribution $\mathcal{Q}_k$, i.e., $\xi_k^{(r)} \in \arg \min _{\xi} \mathcal{W}_p\left(\mathcal{Q}_k, \xi\right)+\mathcal{W}_p\left(\xi, \xi^{(r)}\right)$.
\vspace{-0.25em}
\item \textbf{Communication.} The server aggregates the interpolations $\{ \xi_{k}^{(r)} \}_k$ from all participating parties.
\vspace{-0.25em}
\item \textbf{Update.} The server updates the estimated barycenter $\xi^{(r+1)} =   \sum_{k=1}^K 1/K\cdot \xi_k^{(r)}$.
\vspace{-0.5em}
\end{enumerate}

\begin{mdframed}[style=MyFrame1]
\vspace{-0.2em}
\begin{restatable}[\textbf{Monotonicity and Barycenter Convergence}]{theorem}{thmBarycenterConvergence}
\label{thm:barycenter_convergence}
Let $\mathcal{Q}_k$ be the distribution of the $k$-th client with weight $\lambda_k > 0$. Let $\xi^{(r)}$ denote the global approximated barycenter at iteration $r$. Define the true Fr\'echet variance (objective sequence) as:
\vspace{-0.65em}
\begin{equation}
    \mathcal{E}^{(r)} = \sum_{k=1}^{K} \lambda_k \mathcal{W}_2^2(\mathcal{Q}_k, \xi^{(r)})
    \vspace{-0.6em}
\end{equation}
Then, the sequence $\{\mathcal{E}^{(r)}\}_{r \ge 0}$ is non-increasing and converges to its infimum $\mathcal{E}^* = \sum_{k=1}^{K} \lambda_k \mathcal{W}_2^2(\mathcal{Q}_k, \xi^*)$.
\end{restatable}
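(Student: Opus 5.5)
The plan is to treat the server update as a fixed-point (majorize–minimize) iteration for the Fréchet functional, in the spirit of the Álvarez-Esteban et al.\ fixed-point argument for Wasserstein barycenters. The argument rests on \cref{property:mccann} and \cref{property:geodesics}. Write $\mathbf{T}_k^{(r)}$ for the optimal map pushing $\xi^{(r)}$ to $\mathcal{Q}_k$. The interpolant computed by party $k$ is then $\xi_k^{(r)} = ((1-t)\mathbf{I}_d + t\mathbf{T}_k^{(r)})_{\#}\xi^{(r)}$.

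I will read the averaging step as averaging of transport maps over the common base $\xi^{(r)}$, not as a mixture of measures:
\begin{equation*}
\xi^{(r+1)} = \Big(\textstyle\sum_k \lambda_k \big((1-t)\mathbf{I}_d + t\mathbf{T}_k^{(r)}\big)\Big)_{\#}\xi^{(r)} .
\end{equation*}
This is what the barycentric-mapping implementation in the discrete setting actually does. Writing $\bar{\mathbf{T}}^{(r)} = \sum_k \lambda_k \mathbf{T}_k^{(r)}$, the update map is $\mathbf{G}^{(r)} = (1-t)\mathbf{I}_d + t\bar{\mathbf{T}}^{(r)}$.

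\textbf{Step 1 (descent).} Couple $\xi^{(r+1)}$ with $\mathcal{Q}_k$ through $x \mapsto (\mathbf{G}^{(r)}(x), \mathbf{T}_k^{(r)}(x))$ with $x \sim \xi^{(r)}$. This coupling is admissible but generally not optimal, so
\begin{equation*}
\mathcal{W}_2^2(\mathcal{Q}_k,\xi^{(r+1)}) \le \int \|\mathbf{G}^{(r)}(x) - \mathbf{T}_k^{(r)}(x)\|^2 \, d\xi^{(r)}(x).
\end{equation*}
Sum this with weights $\lambda_k$ and use the bias--variance identity $\sum_k \lambda_k\|y - a_k\|^2 = \|y-\bar a\|^2 + \sum_k\lambda_k\|a_k-\bar a\|^2$ pointwise. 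With $y = \mathbf{G}^{(r)}(x)$ we have $y - \bar{\mathbf{T}}^{(r)}(x) = (1-t)(x - \bar{\mathbf{T}}^{(r)}(x))$. Comparing with $\mathcal{E}^{(r)} = \int \sum_k\lambda_k\|x - \mathbf{T}_k^{(r)}(x)\|^2 d\xi^{(r)}$ gives
\begin{equation*}
\mathcal{E}^{(r+1)} \le \mathcal{E}^{(r)} - \big(1-(1-t)^2\big)\int\|x-\bar{\mathbf{T}}^{(r)}(x)\|^2 d\xi^{(r)}(x).
\end{equation*}
This yields monotonicity, and $t=1$ gives the sharpest decrease. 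Since $\mathcal{E}^{(r)} \ge 0$, the sequence converges, and summing the decrements shows that the residual $\int\|x-\bar{\mathbf{T}}^{(r)}\|^2 d\xi^{(r)} \to 0$.

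\textbf{Step 2 (limit is the infimum).} The second moments of $\xi^{(r)}$ are bounded, because $\mathcal{E}^{(r)} \le \mathcal{E}^{(0)}$ and each $\mathcal{Q}_k$ has finite second moment. Hence the sequence $\xi^{(r)}$ is tight and has a $\mathcal{W}_2$-subsequential limit $\xi^\infty$; strictly, this needs uniform integrability, which I would get from the fact that the iterates lie in the convex hull of push-forwards of the $\mathcal{Q}_k$. Stability of optimal plans under weak convergence then passes the vanishing residual to the limit, giving $\bar{\mathbf{T}}^{\infty} = \mathbf{I}_d$ $\xi^\infty$-a.e.

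This is exactly the first-order optimality condition for the Fréchet functional (Agueh--Carlier, \cref{property:barycenter}). I would then invoke uniqueness of the barycenter when some $\mathcal{Q}_k$ is absolutely continuous, or, in the discrete case, the known fact that fixed points of this map are barycenters. Together these identify $\xi^\infty = \xi^*$ and $\mathcal{E}^{(r)} \to \mathcal{E}^*$.

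\textbf{Main obstacle.} Step 2 is the hard part. Fixed points of the averaging map are known to be barycenters only under regularity assumptions, namely absolute continuity of the iterates so that optimal maps exist, together with uniqueness. In full generality a fixed point is merely a ``Karcher mean'' (a stationary point). I would first try to assume absolute continuity of the $\mathcal{Q}_k$, which propagates along the iterates as in Álvarez-Esteban et al. Failing that, I would state convergence to $\inf_r \mathcal{E}^{(r)}$ and identify this with $\mathcal{E}^*$ only under those assumptions, for instance in the Gaussian setting, where the argument is explicit.
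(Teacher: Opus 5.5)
Your Step 1 is the paper's argument written with maps. The paper also majorizes $\mathcal{E}$ by freezing the optimal couplings $\mathbf{P}_k^{(r-1)}$. It then uses the parallel-axis identity to reduce the surrogate to $\sum_i w_i\sum_k\lambda_k\|\mathbf{z}_i-\hat{\mathbf{x}}_{k,i}\|^2$, and updates $\mathbf{y}_i^+=(1-t)\mathbf{y}_i+t\sum_k\lambda_k\hat{\mathbf{x}}_{k,i}$. So it reads the server step as map averaging, exactly as you do.

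The paper works with discrete measures, where optimal maps need not exist, so you should replace $\mathbf{T}_k^{(r)}(\mathbf{y}_i)$ by the barycentric projection $\hat{\mathbf{x}}_{k,i}$. The within-cell variance $\sum_j P_{k,ij}\|\mathbf{x}_{k,j}-\hat{\mathbf{x}}_{k,i}\|^2$ then appears identically in the bounds for $\mathcal{E}^{(r)}$ and $\mathcal{E}^{(r+1)}$ and cancels. Your explicit decrement $(1-(1-t)^2)\sum_i w_i\|\mathbf{y}_i-\mathbf{z}_i^*\|^2$ therefore survives. This is sharper than the paper's qualitative ``descent step'' and gives summability of the residuals.

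Step 2 is where you diverge from the paper, and it contains a genuine error. Your discrete fallback, ``the known fact that fixed points of this map are barycenters,'' is false for this scheme. Take $K=1$, $\mathcal{Q}_1$ uniform on $\{(\pm a,\pm b)\}$ with $a>b>0$, and $S=2$ atoms of weight $1/2$ at $(0,\pm b)$.
\begin{itemize}
\item The unique optimal plan sends each atom to its two nearest corners, whose barycentric projection is the atom itself. The iterates are therefore stationary for every $t$, with $\mathcal{E}^{(r)}\equiv a^2$.
\item Atoms at $(\pm a,0)$, which are also a fixed point, give $b^2$.
\item The true barycenter value is $\mathcal{E}^*=0$.
\end{itemize}
So without extra hypotheses you cannot get more than monotonicity and convergence to $\inf_r\mathcal{E}^{(r)}$.

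The paper does not close this gap either. It invokes the monotone convergence theorem and then ``displacement convexity'' of the Fr\'echet functional to identify the limit with a unique barycenter. However, $\mathcal{W}_2^2(\cdot,\mathcal{Q}_k)$ is semiconcave and in general not convex along geodesics; it is convex only along generalized geodesics based at $\mathcal{Q}_k$, and for $K\ge 2$ there is no common base. The example above shows the conclusion fails in the fixed-support setting the paper's proof actually uses.

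Your continuous route is the right way to make the identification with $\mathcal{E}^*$ rigorous. It consists of uniform integrability of the iterates, passage to the limit in the fixed-point equation, then absolute continuity plus uniqueness in the style of \'Alvarez-Esteban et al. The limit passage needs lower semicontinuity of $\pi\mapsto\int\|x-\mathrm{bary}(\pi)(x)\|^2$ under $\mathcal{W}_2$ convergence, which does hold. The cost is that this leaves the empirical, discrete client setting. Drop the discrete claim, and either state the result under those hypotheses or read $\xi^*$ merely as a limit point of the iterates.
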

\end{mdframed}
Please refer to \cref{app:proof:barycenter_convergence} for the comprehensive proof. Empirically, \cref{fig:method_2} shows that our method yields an approximation that is virtually indistinguishable from the true Wasserstein barycenter computed by the standard free-support algorithm \citep{cuturi2014fast}.

By utilizing the calibrated gradient $\mathcal{S}(x_i) = \!\frac{\partial\mathbb{W}_p(\mathcal{P}, \xi^*)}{\partial \mathcal{P}(x_i)} = f^*(x_i) - \frac{1}{N-1} \sum_{j \neq i}^N f^*(x_j)$, we identify top-$\alpha$ samples aligned with the global ground truth. Crucially, this decouples the proxy estimation from the synthetic data generation, allowing $\xi^*$ to be reused even as synthetic data $\mathcal{P}$ changes.

\begin{remark}
       As stated in \citet{rakotomamonjy2024federated}, compressing a dataset into a proxy makes the reconstruction problem ill-posed. To seek formal guarantees beyond implicit protection, one can further integrate the differential privacy framework~\cite{le2019differentially} (\cref{app:exeperiment:DPOT}).
\end{remark}


\begin{table*}[t]
    \centering
    \caption{\textbf{Computational Time Scalability Analysis}. Wall-clock time in seconds for one round of data selection under varying candidate set size $N$, reference set size $M$, and number of clients $K$. Unless otherwise varied, we fix $N=100{,}000$, $M=5{,}000$, and $K=10$.}
    \label{tab:time_benchmark}
    \vspace{-0.4em}
    \small
    \setlength{\tabcolsep}{4.2pt}
    \renewcommand{\arraystretch}{1.08}

    \begin{tabular*}{0.98\textwidth}{@{\extracolsep{\fill}}rccrccrcc@{}}
        \toprule
        \multicolumn{3}{c}{\textbf{Candidate Set Size}} &
        \multicolumn{3}{c}{\textbf{Reference Set Size per Client}} &
        \multicolumn{3}{c}{\textbf{Number of Clients}} \\
        \cmidrule(lr){1-3}
        \cmidrule(lr){4-6}
        \cmidrule(lr){7-9}
        $N$ & \shortstack{Scheme I\\(s)} & \shortstack{Scheme II\\(s)} &
        $M$ & \shortstack{Scheme I\\(s)} & \shortstack{Scheme II\\(s)} &
        $K$ & \shortstack{Scheme I\\(s)} & \shortstack{Scheme II\\(s)} \\
        \midrule
         10,000 &  6.7 &  2.1 &  1,000 & 24.8 & 20.8 &  5 & 45.0 & 41.5 \\
         50,000 & 14.4 & 10.3 &  2,000 & 29.1 & 25.4 & 10 & 45.5 & 41.5 \\
        100,000 & 24.7 & 20.8 &  5,000 & 45.5 & 41.6 & 20 & 45.8 & 41.6 \\
        150,000 & 34.9 & 31.0 &  8,000 & 64.5 & 60.7 & 50 & 47.5 & 41.8 \\
        200,000 & 45.1 & 41.4 & 10,000 & 79.6 & 75.8 & 100 & 49.2 & 42.1 \\
        \bottomrule
    \end{tabular*}

    \vspace{-0.6em}
\end{table*}

\newpage

\section{Computational Complexity}
\label{sec:complexity}

We further evaluate the scalability of the proposed selection mechanisms by measuring the wall-clock time required for one round of data selection. CIFAR-10 is used as the base dataset, and the candidate pool is enlarged to $N=200{,}000$ by replicating the original 50,000 images. We compare two variants: \textit{Interpolation Proxy} (\cref{alg:scheme1}) and \textit{Barycenter Proxy} (\cref{alg:scheme2}). The benchmark follows a parallel simulation protocol, where client-side computations are assumed to run concurrently, matching the federated setting.

Unless otherwise specified, the default configuration is $N=200{,}000$ candidate samples, $M=5{,}000$ reference samples per client, $K=10$ clients, and $R=10$ interpolation rounds. For each experiment, we vary one parameter while keeping the remaining parameters fixed. The reported runtime includes GPU data transfer, distance matrix computation, and selection or transport operations, while excluding initial data loading and communication overhead.

We now summarize the leading computational cost. Let $N=|\mathcal{P}|$ denote the size of the synthetic candidate pool, $M=|\mathcal{Q}_k|$ the size of each local real dataset, $S$ the support size of the proxy distribution, $L$ the number of Sinkhorn scaling iterations, $R$ the number of interpolation rounds in \texttt{Scheme I}, $T$ the number of barycenter estimation rounds in \texttt{Scheme II}, and $n$ the selection budget. The complexity is reported as parallel wall-clock complexity.

\begin{mdframed}[style=MyFrame1]
\begin{theorem}[\textbf{Computational Complexity}]
\label{thm:complexity}
Under Sinkhorn-based OT with $L$ scaling iterations, the leading parallel complexities of our schemes are
\vspace{-0.5em}
\begin{equation}
\begin{aligned}
 \!\! \!\!\!\!   &\mathcal{T}_{\texttt{Scheme I}}
=
\mathcal{O}\!\left(RL(N+M+S)S+nNK\right),
\\
\!\!\!\!\!\! &  \mathcal{T}_{\texttt{Scheme II}}
=
\mathcal{O}\!\left(TLMS+LNS\right).
\end{aligned}
\label{eq:complexity}
\vspace{-0.5em}
\end{equation}
\end{theorem}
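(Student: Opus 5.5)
The plan is to account for the cost of each scheme stage by stage, using the fact that one Sinkhorn iteration on an $a\times b$ cost matrix costs $\mathcal{O}(ab)$ (two matrix--vector products plus elementwise scalings). I will also use that the cost matrix itself is formed once per OT call in $\mathcal{O}(ab)$ time, with feature dimension treated as a constant. Because the complexity is parallel wall-clock complexity, work done concurrently by the $K$ clients counts once, not $K$ times. Only work that must be serialized or aggregated over clients is multiplied by $K$.

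For \texttt{Scheme I}, a single round $r$ for client $k$ has three parts. The \emph{Interpolation} step solves two OT problems: $\mathcal{P}$ against the proxy $\xi_k^{(r)}$, of size $N\times S$, and $\mathcal{Q}_k$ against $\xi_k^{(r)}$, of size $M\times S$. Each then needs a barycentric projection, as in \cref{property:mccann}, which costs $\mathcal{O}((N+M)S)$ and is dominated by the Sinkhorn cost $\mathcal{O}(L(N+M)S)$. The \emph{Update} step interpolates between $\xi_{\mathcal{P}}^{(r)}$ and $\xi_{\mathcal{Q}_k}^{(r)}$, which are both $S$-supported measures. This is an $S\times S$ problem costing $\mathcal{O}(LS^2)$. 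Summing gives $\mathcal{O}(L(N+M+S)S)$ per round, run in parallel across $k$, so the $R$ rounds total $\mathcal{O}(RL(N+M+S)S)$. Scoring then needs one more $N\times S$ OT against the converged $\xi_k^*$ to get the dual potentials $f^*$. The calibrated gradient \cref{eq:calibrated_gradient} can be computed for all $i$ in $\mathcal{O}(N)$ using the precomputed sum $\sum_j f^*(x_j)$. Both costs are absorbed into the first term. Finally, the greedy solver for \cref{eq:submodular_opt} runs $n$ steps. At each step it evaluates the marginal gain of all $\le N$ candidates, and each gain sums $K$ concave terms whose running inner sums are cached. This gives $\mathcal{O}(nNK)$, and it is sequential on the server, so it is not parallelized over $k$.

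For \texttt{Scheme II}, each of the $T$ barycenter rounds has every client solve an $M\times S$ OT between $\mathcal{Q}_k$ and $\xi^{(r)}$, then compute a barycentric projection. This costs $\mathcal{O}(LMS)$ in parallel. The server averages $K$ supports of size $S$ in $\mathcal{O}(KS)$, which is lower order, so the barycenter stage costs $\mathcal{O}(TLMS)$. Selection solves one $N\times S$ OT between $\mathcal{P}$ and $\xi^*$, costing $\mathcal{O}(LNS)$. It then computes the scores in $\mathcal{O}(N)$ and takes a top-$\alpha$ selection in $\mathcal{O}(N\log N)$, or $\mathcal{O}(N)$ with linear-time selection. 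I will argue that $\log N \le LS$ in any regime of interest, so this term is absorbed. Adding the stages gives \cref{eq:complexity}.

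The main obstacle is the bookkeeping conventions rather than any deep step. First, I have to fix that the update in \texttt{Scheme II} uses the $1/K$ averaging of $S$-point supports, so the support size does not grow with $K$. Second, I have to justify that the greedy step costs $\mathcal{O}(NK)$ per iteration rather than more. I would handle the second by maintaining, for each $k$, the running value $\sum_{i\in\mathcal{I}}(1-\tilde{\mathcal{S}}_k(x_i))$. Each marginal gain then becomes $K$ evaluations of $g$, each in $\mathcal{O}(1)$. I would state these conventions explicitly as standing assumptions before the accounting.
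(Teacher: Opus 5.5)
Your stage-by-stage accounting is correct and follows essentially the paper's own justification, which is the informal paragraph after the theorem. That paragraph attributes $RL(N+M+S)S$ to the $R$ rounds of collaborative geodesic interpolation, $nNK$ to greedy marginal-gain evaluation over the $K$ client scores, $TLMS$ to barycenter estimation independent of $N$, and $LNS$ to a single Sinkhorn scoring pass against the reused barycenter. You add more explicit bookkeeping and state the conventions the paper leaves implicit: $S$-supported interpolants so the update is an $S\times S$ problem, coordinate-wise averaging of the $S$ support points in \texttt{Scheme II} as in the paper's barycenter convergence proof, and treating the $K$ per-proxy solves, including the synthetic-data side's $N\times S$ problems, as concurrent.
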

\end{mdframed}

The first term in $\mathcal{T}_{\texttt{Scheme I}}$ comes from the collaborative geodesic interpolation between the synthetic distribution $\mathcal{P}$ and each local real distribution $\mathcal{Q}_k$. Since this proxy depends on the current candidate pool, a new synthetic pool generally requires recomputing the interpolation. The second term corresponds to greedy sample selection, where marginal gains are evaluated across $K$ client-side scores.
By contrast, \texttt{Scheme II} separates proxy estimation from candidate selection. Its barycenter estimation cost, $TLMS$, depends on the local real distributions and the barycenter support size, but not on the synthetic candidate size $N$. Once the barycenter proxy is obtained, scoring a new synthetic pool requires only a single Sinkhorn-based pass of cost $LNS$. This decoupling is important in iterative synthetic data generation: when $\mathcal{P}$ changes, \texttt{Scheme II} can reuse the barycenter proxy, whereas \texttt{Scheme I} must recompute the interpolation proxy.
As shown in \cref{tab:time_benchmark}, both methods scale approximately linearly with $N$ and $M$, consistent with \cref{thm:complexity}. However, their behavior differs more clearly as $K$ increases. \texttt{Scheme I} becomes more expensive because it repeatedly interacts with client-specific interpolation proxies and aggregates client-wise selection scores. In contrast, \texttt{Scheme II} remains nearly flat in the parallel setting, since the barycenter estimation is decoupled from the synthetic candidate pool and can be amortized across multiple rounds of synthetic data generation. This explains why the empirical gap in wall-clock time is larger than what a single-round comparison alone suggests.

\begin{figure*}[t]
    \centering
    \includegraphics[width=0.35\linewidth]{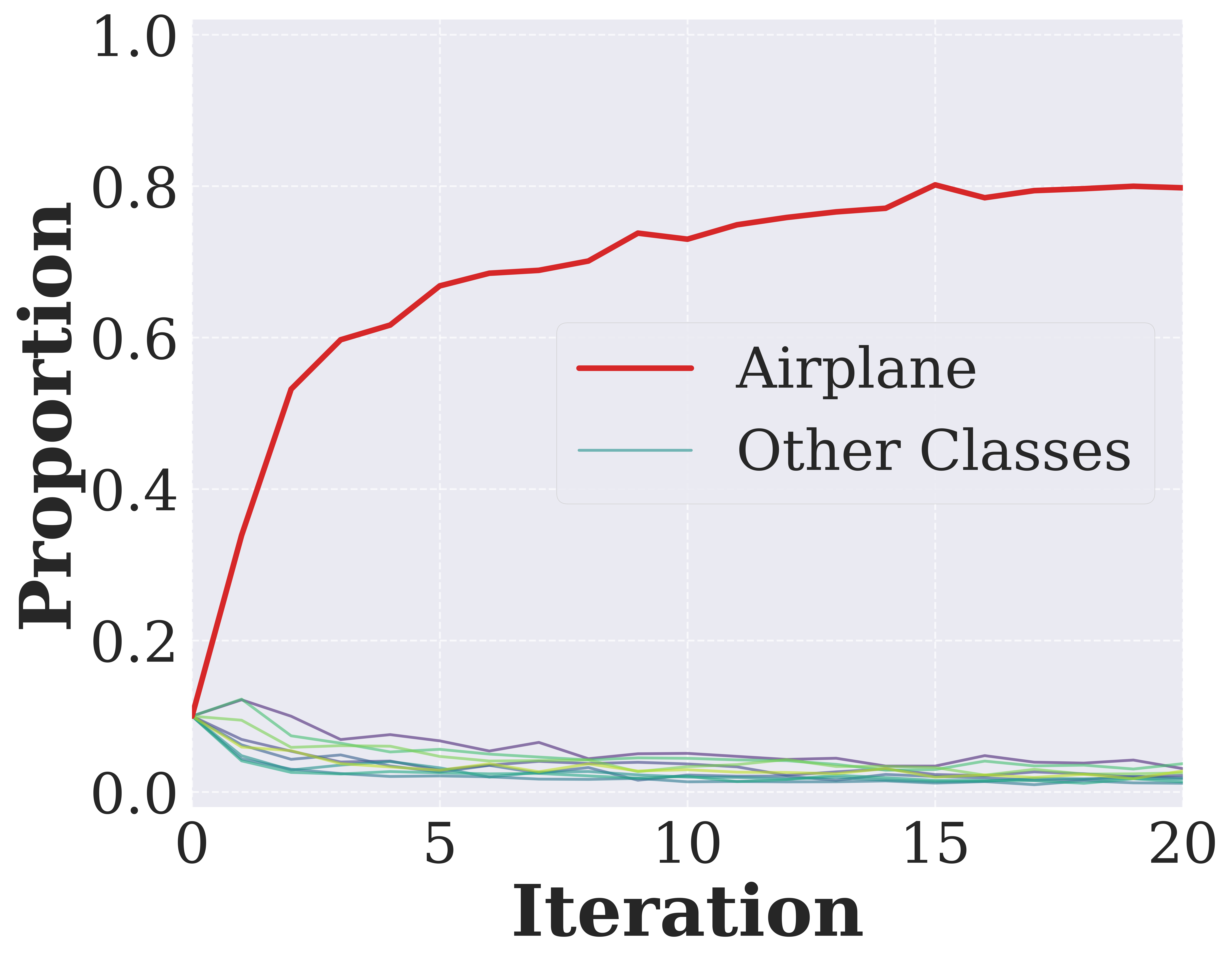}
    \includegraphics[width=0.58\linewidth]{Figures/fid_trends_combined.png}
    \caption{
\textbf{Evolution of the proportion} of the Airplane class and the nine other classes using the Airplane class as a local reference dataset (Left).
\textbf{Evolution of FID ($\downarrow$)} of the generated images using ExDir $(1,0.1)$ (Middle) and IID partition (Right) as local reference datasets.
}
    \label{fig:experiments:collapse}
\end{figure*}

\begin{table*}[t]
    \centering
    \footnotesize 
    \setlength{\tabcolsep}{4pt} 
\caption{\textbf{Results after 10 iterations} with ExDir $(1,0.1)$ reference set.
Colors indicate the \gold{1st}, \silver{2nd}, and \bronze{3rd} best results.}
    \label{tab:main_comparison}
    \begin{tabular}{l ccc ccc ccc}
        \toprule
        & \multicolumn{3}{c}{\textbf{CIFAR-10}} & \multicolumn{3}{c}{\textbf{STL-10}} & \multicolumn{3}{c}{\textbf{CelebA}} \\
        \cmidrule(lr){2-4} \cmidrule(lr){5-7} \cmidrule(lr){8-10}
        \textbf{Method} & FID $\downarrow$ & Precision $\uparrow$ & Recall $\uparrow$ & FID $\downarrow$ & Precision $\uparrow$ & Recall $\uparrow$ & FID $\downarrow$ & Precision $\uparrow$ & Recall $\uparrow$ \\

        \midrule

        $\texttt{Random}$ & 106 & 0.53 & \bronze{0.48} & 95 & 0.49 & 0.53 & 96 & 0.51 & 0.28 \\
        $\texttt{K-means}$ \cite{lin2023explore} & \bronze{102} & \bronze{0.56} & 0.40 & \bronze{89} & 0.54 & 0.53 & {87} & 0.59 & 0.48 \\
        $\texttt{CenterMatch}$ \cite{he2023is} & 116 & 0.50 & 0.35 & 111 & {0.57} & \bronze{0.58} & \bronze{87} & {0.64} & 0.46 \\
        $\texttt{CovMatch}$ \cite{rezaei2025high} & 115 & 0.51 & 0.47 & 131 & \silver{0.59} & 0.55 & 92 & \bronze{0.65} & \bronze{0.51} \\
        
        \midrule
        
        \texttt{\textbf{Scheme II}} (\cref{subsec:scheme2}) & \silver{85} & \silver{0.57} & \silver{0.57} & \silver{69} & \bronze{0.57} & \silver{0.63} & \silver{75} & \gold{0.70} & \silver{0.62} \\
        $\texttt{\textbf{Scheme I}}$ (\cref{subsec:scheme1}) & \gold{71} & \gold{0.60} & \gold{0.58} & \gold{65} & \gold{0.66} & \gold{0.71} & \gold{69} & \silver{0.69} & \gold{0.71} \\
        \bottomrule
    \end{tabular}
\end{table*}

\section{Experiments}
\label{sec:experiments}

We follow the settings (baselines, configurations, and metrics) in \citet{shidani2025beyond,rezaei2025high,bertrand2024on}. Due to space constraints, detailed settings and supplementary results are deferred to \cref{app:exeperiment}.

\textsc{\textbf{Baselines}}
Following \citet{rezaei2025high},
$\texttt{CovMatch}$ \cite{rezaei2025high} uses a greedy strategy to match the feature covariance of real data;
$\texttt{CenterMatch}$ \cite{he2023is} selects images closest to the centroid of real features;
and $\texttt{K-means}$ \cite{lin2023explore} clusters the generated pool into groups and selects images closest to each cluster center.

\textsc{\textbf{Metrics}}
Following \citet{bertrand2024on}, we report Fréchet Inception Distance (\textbf{FID}) \cite{heusel2017gans} for quality, as well as \textbf{Precision} for fidelity and \textbf{Recall} for diversity based on Inception-V3 \cite{kynkaanniemi2019improved}.

\textsc{\textbf{Setup}}
Following \citet{shi2025a}, we conduct experiments with DDPM \cite{ho2020denoising} across \textbf{CIFAR-10} \cite{krizhevsky2009learning}, \textbf{CelebA} \cite{liu2015faceattributes}, and \textbf{STL-10} \cite{coates2011analysis}.
In the main text, we primarily investigate the \textit{Accumulate-Subsample} paradigm: we select $n$ instances from $N=4n$ generated candidates to augment the data pool, and subsequently subsample $n$ instances for training from the pool. Since Dirichlet partitions yield weak initial models, we first utilize the training set of size $n=50,000$ to establish a strong generator. Note that the training set is not added to the accumulated pool; instead, the data is distributed among 10 parties to serve as local reference sets. In the specific case of STL-10, we allocate the 5,000 labeled samples across the 10 parties.

\cref{fig:experiments:collapse} (Left) illustrates the proportion of each class in the training set predicted by a pre-trained VGG11 (92.39\% test accuracy) when only the Airplane class is used as the reference dataset for \texttt{CenterMatch}. The results highlight that when the selection mechanism is guided by a skewed local prior, the diversity deteriorates rapidly, leading to homogenization.
\cref{fig:experiments:collapse} (Middle \& Right) illustrates that while selection baselines mitigate collapse in IID scenarios, they surprisingly lag behind \texttt{Random} in non-IID scenarios.

\cref{tab:main_comparison} shows that baselines with poor performance on CIFAR-10 and STL-10 achieve better results on face generation. Intuitively, this stems from the highly structured nature of face data: even with a biased reference set, the filtered images still retain the basic characteristics. 
\section{Lessons Learned}

\textbf{Sample Selection Bias Accelerates Collapse in Data Silos.}
Although local filtering is introduced to mitigate model collapse, optimizing synthetic data solely against local reference signals can have the opposite effect. In siloed environments, the local real-data distribution provides only a partial view of the target distribution. As a result, filtering synthetic samples by their agreement with local ground truth induces a confirmation-bias effect: samples that deviate from the local reference, including valid but underrepresented modes, are more likely to be discarded. This gradually narrows the synthetic training distribution and accelerates diversity loss across recursive generations.

\textbf{Low-Resource Regimes Are Especially Vulnerable.}
This failure mode becomes more pronounced when real-data coverage is scarce or fragmented. In low-resource regimes, tail regions are weakly represented even before synthetic augmentation begins, so local-reference selection can easily confuse rare but valid modes with low-quality generations. The filtering process therefore does not merely remove noise; it systematically suppresses underrepresented regions of the target distribution. Data scarcity is thus amplified into persistent tail pruning, making low-resource silos particularly susceptible to collapse.

\newpage

\section*{Impact Statement}

\paragraph{Advancing the Sustainability of Generative AI.} This research addresses a critical existential risk facing the future of artificial intelligence: model collapse driven by recursive training on synthetic data. As the digital ecosystem becomes saturated with machine-generated content, future models risk degenerating into homogenized representations of reality that lose statistical fidelity. By theoretically quantifying the diversity (variance) decay rate as a power law and providing a geometric solution to mitigate it, this work offers a pathway to sustain the self-improvement of Generative AI systems without necessitating a continuous influx of human-generated data, which is becoming a scarce resource.

\paragraph{Ethical Considerations and Limitations.} There remains a risk that if the collaborative proxy is poisoned or biased by a majority of participating nodes, the selection mechanism could enforce a collective bias rather than a true ground truth. Therefore, deployment of these methods requires careful governance to ensure the participating entities in a siloed network represent a sufficiently diverse cross-section of the target domain. Although the algorithmic dynamics regarding adversarial attacks and defenses are intriguing, this setting exceeds the scope of this study, which is dedicated to investigating model collapse within data silos and taking the first step toward its resolution by shifting from collaborative learning to collaborative evaluation.

\section*{Acknowledgements}
This work was supported in part by the Chongqing Key Laboratory of Trusted Perception and Interaction Technology for Intelligent and Connected Vehicles, the State Key Laboratory of Intelligent Vehicle Safety Technology, Chongqing Changan Automobile Co., Ltd., and the Chongqing Natural Science Foundation (Grant No. CSTB2024NSCQ-LZX0172), and in part by the National Natural Science Foundation of China (Grant No. 62572433).


\bibliography{mybib}
\bibliographystyle{icml2026}

\clearpage
\appendix
\onecolumn

\startcontents[appendixtoc]
\printcontents[appendixtoc]{l}{1}{
  \begin{center}
    {\Large \bfseries Table of Contents}
  \end{center}
}

\clearpage

\begin{wrapfigure}{r}{0.6\linewidth}
    \vspace{-1.0em}
    \centering
    \includegraphics[width=\linewidth]{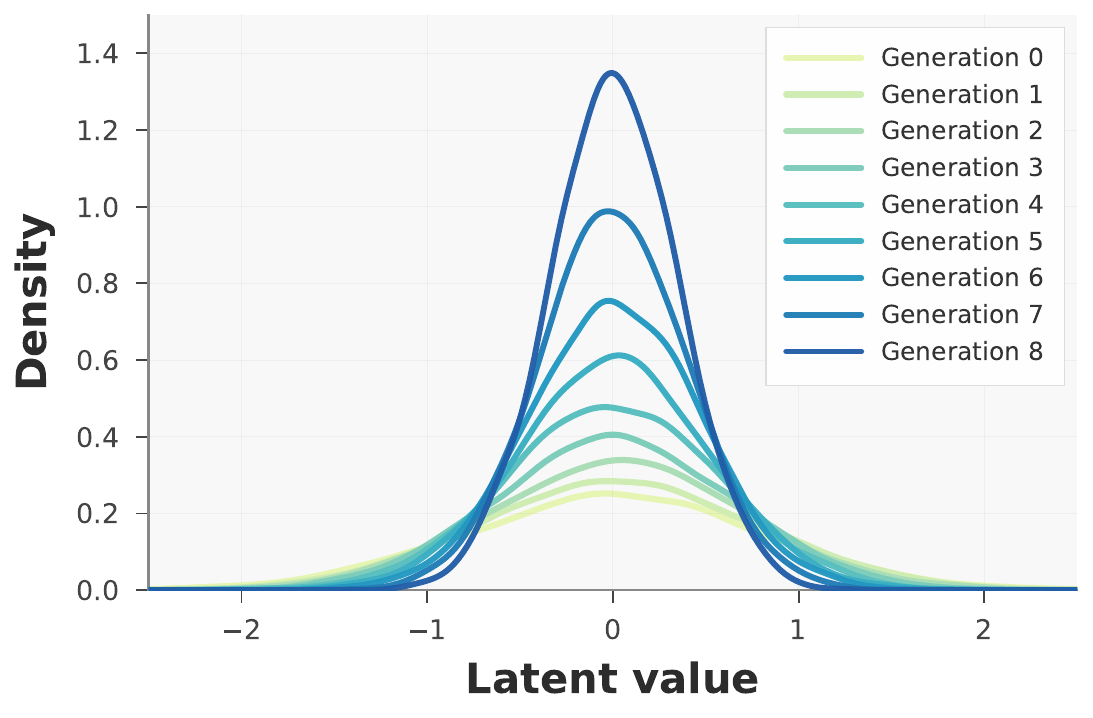}
    \caption{How latent-space distributions evolve over the course of training using generated data. As seen in the Gaussian instance, the model tends to ignore extreme values (tails) and gravitate toward the mean.}
    \label{app:related_works:collapse}
\end{wrapfigure}

\section{Related Work}
\label{app:related_works}

\subsection{Model Collapse}

The phenomenon of model collapse is not a monolith but rather a spectrum of pathological behaviors that have been analyzed from different theoretical perspectives. A significant body of literature characterizes collapse primarily through the lens of prediction error or population risk on the original data distribution. Within this framework, collapse is often defined as a catastrophic, nonlinear degradation of model performance, rendering the model functionally useless after a limited number of recursive generations \cite{schaeffer2025position}. A more rigorous mathematical formulation extends this to the asymptotic divergence of the test loss, describing a scenario where the model drifts indefinitely away from the ground-truth manifold \cite{shumailov2024ai, dohmatob2024model}. While these risk-based definitions provide a high-level signal of failure, they tend to obscure the underlying geometric deformations occurring within the learned distribution, leading to recent calls for more granular definitions based on distribution shifts \cite{schaeffer2025position}.

Complementary to risk-based metrics, a distinct stream of research focuses on the statistical degeneration of the generated data's topology. This perspective characterizes collapse as a transition from generalization to memorization, where the model's output diversity shrinks significantly \cite{shi2025a}. As shown in \cref{app:related_works:collapse}, the core of this phenomenon is \textit{Variance Collapse}, where recursive training causes the variance of the learned distribution to contract asymptotically toward zero, effectively reducing the distribution to a point estimate \cite{alemohammad2024selfconsuming}. This process is frequently preceded by the erosion of distributional tails, where low-probability events (the ``long tail'') are statistically washed away, leaving only a homogenized core \cite{hataya2023will,shumailov2024ai}. In multi-modal settings, this manifests as mode dropping or mode entanglement, simplifying the complex data manifold into a few high-density regions \cite{alemohammad2024selfconsuming}. Importantly, such tail erosion is not merely a statistical artifact: it can have uneven social and linguistic consequences. \citet{jarvis2026position} argue that low-resource languages and marginalized communities often occupy precisely these underrepresented regions of the data distribution, and thus may experience model collapse earlier and more severely than high-resource communities. This perspective reframes collapse as a distributional inequity problem, where the loss of rare modes corresponds to the erasure of culturally, linguistically, or demographically underrepresented content.

To understand the mechanics of this decay, recent works have analyzed the interplay between loss functions and high-dimensional geometry. Theoretical underpinnings attribute variance collapse to the smoothing nature of standard loss functions (e.g., MSE or cross-entropy), which encourage models to approximate the conditional mean of the data and act as low-pass filters that dampen high-frequency variations \cite{NEURIPS2020_2288f691}. This effect is exacerbated in high-dimensional regimes. As the dimensionality increases, the volume of the distribution support grows exponentially, making the tails increasingly difficult to cover with synthetic samples. Consequently, the covariance shift between the target distribution and the synthetic approximation becomes the dominant factor in generalization error, accelerating the collapse process \cite{rezaei2025high}.
The severity of these pathological behaviors is intrinsically linked to the training paradigm employed. Theoretical analyses typically distinguish between \textit{Replace} and \textit{Accumulate} strategies. Under the \textit{Replace} paradigm, where training data is entirely substituted by synthetic samples at each generation, variance collapse is theoretically proven to be inevitable regardless of model capacity \cite{shumailov2024ai, dohmatob2024model, NEURIPS2020_2288f691}. Conversely, the \textit{Accumulate} paradigm (where synthetic data is appended to historical real data) has been shown to offer asymptotic stability \cite{kazdancollapse}. Empirical studies suggest that maintaining access to the original data (or a high-quality subset) anchors the distribution, preventing the unbounded drift of the mean and variance \cite{gerstgrasser2024is}.

\subsection{Data Selection}

Recognizing the inevitability of model collapse under naive recursive training, the research community has pivoted toward \textit{data selection} \cite{dohmatob2025less} as a primary mitigation strategy. Existing approaches can be broadly categorized into three methodological families: fidelity-based filtering, diversity-preserving alignment, and oracle-guided verification.

The intuitive approach involves curating synthetic data based on scalar quality metrics, such as perplexity, log-likelihood, or confidence scores. Classic strategies employ truncation or rejection sampling to discard samples that deviate significantly from the model's high-density regions \cite{alemohammad2024selfconsuming}. More recent advances leverage \textit{coreset selection} techniques, aiming to identify a minimal subset of synthetic data that approximates the gradient or loss landscape of the full distribution \cite{pooladzandi2022generating}. While empirical studies show that pruning data based on hardness or quality metrics can beat power-law scaling \cite{sorscher2022beyond}, these quality-centric methods often inadvertently accelerate \textit{Variance Collapse}. By systematically favoring safe modal samples \cite{mai2026nexus}, human preferences reinforce the very homogenization that characterizes the collapse regime \cite{alemohammad2024selfconsuming}.

To counteract variance reduction, a second stream of research focuses on maximizing distributional coverage. From an information-theoretic perspective, \cite{shi2025a} identify a critical phase transition from generalization to memorization, proposing entropy thresholds to maintain manifold richness. Geometrically, \cite{rezaei2025high} prove that for linear models, generalization error is dominated by covariance shift, establishing \textit{Covariance Matching} as an asymptotically optimal strategy. Moreover, recent works have adopted \textit{Influence Functions} to quantify the causal effect of individual samples on reasoning capabilities \cite{mlodozeniec2025influence,lin2025diffusion,qiao2026beyond,humane2025influence,qiao2025hessianfree}. 

The theoretically grounded line of inquiry posits that scaling up with synthetic data is only viable given a reliable verification mechanism. \cite{fengbeyond} formalized this hypothesis, demonstrating that access to a high-precision verifier (e.g., an external oracle) allows synthetic data training to surpass baselines trained solely on real data. Similarly, \cite{shoshan2023synthetic} explore using generative models to synthesize validation sets for model selection, contingent on the ability to calibrate synthetic error against real-domain risk. Under this framework, the efficacy of selection is strictly bounded by the precision and recall of the external supervisor relative to the ground truth.

\textbf{Research Gap: The Absence of Global Verification.} 
A critical, often unstated assumption unifies the diverse strategies discussed above: they all presume access to a centralized ``ground truth'' or global statistics. Specifically, coreset selection \cite{pooladzandi2022generating} and covariance matching \cite{rezaei2025high} require access to the full target distribution to compute gradients or statistics, while verification frameworks \cite{fengbeyond, shoshan2023synthetic} assume an oracle capable of distinguishing real from synthetic distributions. In real-world decentralized deployments, such as fragmented data silos, this assumption is fundamentally invalid. Local entities possess only partial, biased views of the global distribution. Consequently, applying existing selection strategies locally leads to local selection bias.

\subsection{Optimal Transport}

Optimal Transport (OT) provides a rigorous geometric framework for comparing probability distributions, offering distinct advantages over statistical divergences by capturing the underlying metric space structure \cite{villani2008optimal}. While computational advancements such as Sinkhorn iterations \cite{cuturi2013sinkhorn} have enabled OT applications in high-dimensional machine learning, standard algorithms require centralized access to raw samples, rendering them inapplicable in privacy-sensitive environments \cite{mai2025secemb}. 
To address these constraints, recent scholarship has extended OT to decentralized settings. In the context of \textit{metric estimation}, \cite{rakotomamonjy2024federated} proposed the federated Wasserstein Distance, utilizing geodesic interpolants to approximate transport costs across disparate clients without exchanging raw data. The most closely related work to ours is \citet{li2024data}, which inspires our study by leveraging Wasserstein barycenters for data valuation in FL. From an \textit{optimization} perspective, \cite{dvurechenskii2018decentralize} established distributed algorithms for computing barycenters over networks via consensus protocols. Regarding \textit{privacy}, \cite{le2019differentially} introduced Differentially Private OT, employing randomized projections to optimize transportation plans under privacy budgets.

A fundamental distinction exists between these prior works and our proposed framework. Existing literature utilizes distributed OT primarily for \textit{metric estimation} (accurately estimating the global $\mathbb{W}_2$ distance \cite{rakotomamonjy2024federated}) or \textit{adaptation} (learning a mapping between source and target domains \cite{le2019differentially}). 
In contrast, our work repurposes the OT geometry for \textit{gradient-based data selection}. By computing the calibrated gradient of the Wasserstein distance with respect to individual synthetic samples, we quantify the marginal contribution of each sample to ground-truth manifold alignment. This transforms distributed OT from a passive metric into an active selection mechanism.
\newpage

\section{Proof}
\label{app:proof}

\subsection{Auxiliary Lemmata}
Before presenting the main proofs, we introduce several key technical lemmas required for our analysis, including \cref{lemma:rs_zero_convergence,lemma:ruhe}, which are used in the proof of \cref{thm:selection_collapse}.

\begin{mdframed}[style=MyFrame2]
\begin{lemma}
\label{lemma:restoring_force_parameter}
Consider a selection mechanism on a multivariate normal distribution $\mathcal{N}(\boldsymbol{\mu}, \boldsymbol{\Sigma})$ defined by a fixed, locally concave utility function $U(\mathbf{x})$ maximized at $\mathbf{u}^*$. Let $\mathcal{R}$ be the high-utility selection region preserving probability mass $\alpha \in (0, 1)$.
Let $\boldsymbol{\Delta} \triangleq \mathbb{E}[\mathbf{x} \mid \mathbf{x} \in \mathcal{R}] - \boldsymbol{\mu}$ be the mean drift vector.
Assuming $\boldsymbol{\mu}$ is in a local neighborhood of $\mathbf{u}^*$, there exists a constant $\kappa > 0$ such that the drift acts as a restoring force in the standardized (Mahalanobis) metric:
\begin{equation}
    (\boldsymbol{\mu} - \mathbf{u}^*)^\top \boldsymbol{\Sigma}^{-1} \boldsymbol{\Delta} \le -\kappa \|\boldsymbol{\mu} - \mathbf{u}^*\|_{\boldsymbol{\Sigma}^{-1}}^2
\end{equation}
\end{lemma}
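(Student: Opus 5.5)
Our plan is to move to the standardized frame used in \cref{subsec:quantifying_decay}, where the statement reduces to a geometric fact about a truncated isotropic Gaussian. Write $\mathbf{x} = \boldsymbol{\mu} + \boldsymbol{\Sigma}^{1/2}\mathbf{z}$ with $\mathbf{z}\sim\mathcal{N}(\mathbf{0},\mathbf{I}_d)$. The region $\mathcal{R}$ then becomes $\mathcal{D} = \{\mathbf{z} : \boldsymbol{\mu}+\boldsymbol{\Sigma}^{1/2}\mathbf{z}\in\mathcal{R}\}$, which has standard Gaussian mass $\alpha$. Set $\mathbf{v} \triangleq \boldsymbol{\Sigma}^{-1/2}(\mathbf{u}^*-\boldsymbol{\mu})$ and $\mathbf{a} \triangleq \mathbb{E}[\mathbf{z}\mid\mathbf{z}\in\mathcal{D}]$. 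Then $\boldsymbol{\Delta} = \boldsymbol{\Sigma}^{1/2}\mathbf{a}$, and the claim reads
\begin{equation*}
\mathbf{v}^\top\mathbf{a} \ \ge\ \kappa\|\mathbf{v}\|^2 .
\end{equation*}
In words, the truncated mean must point toward the standardized target with a component at least linear in its distance.

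The key steps, in order, are as follows.

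\textbf{Step 1: make the region concrete.} Local concavity of $U$ at $\mathbf{u}^*$ (\cref{assump:utility_landscape}) lets us take the top-$\alpha$ region to be, to leading order, a superlevel set $\{U\ge c\}$. This set is convex and contains $\mathbf{u}^*$. In the standardized frame, $\mathcal{D}$ is a convex set containing $\mathbf{v}$. Near $\mathbf{u}^*$ we use the quadratic model $U(\mathbf{x}) \approx U(\mathbf{u}^*) - \tfrac12(\mathbf{x}-\mathbf{u}^*)^\top\mathbf{H}(\mathbf{x}-\mathbf{u}^*)$, so $\mathcal{D}$ is approximately an ellipsoid $\{\mathbf{z}:(\mathbf{z}-\mathbf{v})^\top\mathbf{G}(\mathbf{z}-\mathbf{v})\le r^2\}$. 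Here $\mathbf{G}=\boldsymbol{\Sigma}^{1/2}\mathbf{H}\boldsymbol{\Sigma}^{1/2}\succ 0$, and $r=r(\mathbf{v})$ is fixed by the mass constraint.

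\textbf{Step 2: first-order expansion in $\mathbf{v}$.} Define $F(\mathbf{v}) \triangleq \mathbb{E}[\mathbf{z}\mid\mathbf{z}\in\mathcal{D}(\mathbf{v})]$. When $\mathbf{v}=\mathbf{0}$, the ellipsoid is centered and symmetric, so $F(\mathbf{0})=\mathbf{0}$ by symmetry of $\phi_d$. We then differentiate the ratio $\int_{\mathcal{D}}\mathbf{z}\phi_d/\alpha$. Since the mass is held at $\alpha$, $r$ changes only at second order, because $\partial_{\mathbf{v}}$ of the mass vanishes at $\mathbf{v}=\mathbf{0}$ by symmetry. The shift of the domain gives
\begin{equation*}
\mathbf{J} \triangleq \nabla F(\mathbf{0}) = \tfrac1\alpha\int_{\partial\mathcal{D}_0}\mathbf{z}\,\mathbf{n}(\mathbf{z})^\top\phi_d(\mathbf{z})\,dS .
\end{equation*}
Equivalently, by the divergence theorem applied to the translation $\mathcal{D}_0+\mathbf{v}$,
\begin{equation*}
\mathbf{J}=\tfrac1\alpha\int_{\mathcal{D}_0}(\mathbf{I}-\mathbf{z}\mathbf{z}^\top)\phi_d\,d\mathbf{z} = \mathbf{I}-\mathbf{B}_0,
\end{equation*}
where $\mathbf{B}_0=\mathrm{Cov}(\mathbf{z}\mid\mathcal{D}_0)$. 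This is the same dissipation structure $\mathbf{\Psi}$ defined in \cref{subsec:quantifying_decay}, with $\mathbf{a}=\mathbf{0}$ at the base point.

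\textbf{Step 3: positivity and the constant.} Truncation of a Gaussian to a convex set strictly reduces covariance, by the Brascamp--Lieb inequality or a log-concavity argument, so $\mathbf{B}_0\prec\mathbf{I}$. Set $\kappa_0=\lambda_{\min}(\mathbf{I}-\mathbf{B}_0)>0$. Then $\mathbf{v}^\top F(\mathbf{v}) = \mathbf{v}^\top\mathbf{J}\mathbf{v}+O(\|\mathbf{v}\|^3)\ge(\kappa_0-C\|\mathbf{v}\|)\|\mathbf{v}\|^2$. Restricting to the local neighborhood $\|\mathbf{v}\|\le\kappa_0/(2C)$ gives the claim with $\kappa=\kappa_0/2$.

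The main obstacle is Step 2 together with the remainder control in Step 3. The region $\mathcal{D}$ is only approximately an ellipsoid, and it depends on $\boldsymbol{\Sigma}$ through $\mathbf{G}$. So we need $C$ and $\kappa_0$ to be uniform over the relevant range of $\boldsymbol{\Sigma}$. We would first try to prove the inequality exactly for ellipsoidal regions, using the identity $\mathbf{v}^\top\mathbf{a}=\int_0^1\mathbf{v}^\top(\mathbf{I}-\mathbf{B}(s\mathbf{v})-\ldots)\mathbf{v}\,ds$ along the path $s\mathbf{v}$. We would then absorb the non-quadratic part of $U$ as a small boundary perturbation, controlled by the smoothness of $U$ within the basin.
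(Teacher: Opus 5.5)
Your Steps 1--3 are essentially the paper's argument. The paper also whitens, with $\mathbf c=\boldsymbol\Sigma^{-1/2}(\boldsymbol\mu-\mathbf u^*)=-\mathbf v$. It takes $\boldsymbol a(\mathbf 0)=\mathbf 0$ from ``local symmetry around the optimum'' and uses $\nabla_{\mathbf c}\boldsymbol a(\mathbf c)=\mathrm{Cov}(\mathbf z\mid\mathbf z\in\mathcal D(\mathbf c))-\mathbf I$, which is your $-\mathbf J$. It gets $\mathrm{Cov}\preceq(1-\kappa)\mathbf I$ from Brascamp--Lieb and concludes from $\mathbf c^\top\boldsymbol a(\mathbf c)=\mathbf c^\top\bigl(\int_0^1\nabla\boldsymbol a(s\mathbf c)\,ds\bigr)\mathbf c$.

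The only difference in route is this. The paper asserts the Jacobian bound uniformly along the whole segment from $\mathbf 0$ to $\mathbf c$. You linearize only at $\mathbf v=\mathbf 0$ and pay with a remainder and a smaller ball ($\kappa=\kappa_0/2$ for $\|\mathbf v\|\le\kappa_0/(2C)$). Your version is the more careful one. $\mathrm{Cov}-\mathbf I$ is the derivative of the truncated mean under a rigid translation of a fixed region. But the top-$\alpha$ region also rescales to hold its mass at $\alpha$, and that extra term vanishes only at the symmetric base point, which is exactly where you use the formula. The paper's route, if its uniform bound is granted, gives the inequality on the whole segment rather than on an explicit small ball.

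The step of your plan that would fail is the last one: absorbing the non-quadratic part of $U$ as a small boundary perturbation. A generic perturbation destroys the symmetry of $\mathcal D_0$, so $F(\mathbf 0)=\boldsymbol\varepsilon\neq\mathbf 0$. Then $\mathbf v^\top F(\mathbf v)=\mathbf v^\top\boldsymbol\varepsilon+\mathbf v^\top\mathbf J\mathbf v+O(\|\mathbf v\|^3)$. The linear term dominates for $\|\mathbf v\|$ small relative to $\|\boldsymbol\varepsilon\|$, and it is negative when $\mathbf v$ points against $\boldsymbol\varepsilon$. So no $\kappa>0$ works near the origin.

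A one-dimensional instance: $U(x)=-x^2-x^3$ is concave near $u^*=0$, but its superlevel sets reach farther left than right. At $\mu=0$ the truncated mean is therefore strictly negative, and the inequality fails for every small $\mu<0$. Hence exact symmetry of the selection region about $\mathbf u^*$ when $\boldsymbol\mu=\mathbf u^*$ (equivalently $F(\mathbf 0)=\mathbf 0$) must be an explicit hypothesis. This is what the paper's ``local symmetry'' silently adds to the statement. Alternatively, $\mathbf u^*$ must be replaced by the actual fixed point of the selection map.

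Your uniformity concern is also genuine, and the paper does not address it. If $\mathbf G=\boldsymbol\Sigma^{1/2}\mathbf H\boldsymbol\Sigma^{1/2}$ is badly conditioned, the mass-$\alpha$ ellipsoid approaches a slab and $\kappa_0=\lambda_{\min}(\mathbf I-\mathbf B_0)\to0$. A $\kappa$ that is uniform along a trajectory of covariances therefore needs a bound on the condition number of $\mathbf G$.
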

\end{mdframed}

\begin{proof}
    We analyze the drift by strictly operating in the standardized isotropic frame to avoid the non-commutativity issues of matrix products in the parameter space.
    
    \textbf{1. Transformation to Standardized Space:}
    Let $\mathbf{z} = \boldsymbol{\Sigma}^{-1/2}(\mathbf{x} - \boldsymbol{\mu})$ be the whitening transformation. The selection region $\mathcal{R}$ corresponds to a region $\mathcal{D}(\mathbf{c})$ in the standardized space, where $\mathbf{c} = \boldsymbol{\Sigma}^{-1/2}(\boldsymbol{\mu} - \mathbf{u}^*)$ is the standardized error vector.
    
    The mean drift vector in the parameter space is given by $\boldsymbol{\Delta} = \boldsymbol{\Sigma}^{1/2} \boldsymbol{a}(\mathbf{c})$, where $\boldsymbol{a}(\mathbf{c}) = \mathbb{E}[\mathbf{z} \mid \mathbf{z} \in \mathcal{D}(\mathbf{c})]$ is the drift in the standardized frame. We aim to show that $\mathbf{c}$ and $\boldsymbol{a}(\mathbf{c})$ are opposed.

    \textbf{2. Gradient Analysis (Strict Contraction):}
    The Jacobian of the standardized drift is related to the conditional covariance:
    \begin{equation}
        \nabla_{\mathbf{c}} \boldsymbol{a}(\mathbf{c}) = \text{Cov}(\mathbf{z} \mid \mathbf{z} \in \mathcal{D}(\mathbf{c})) - \mathbf{I}_d.
    \end{equation}
    Since the utility function $U(\mathbf{x})$ is locally concave, the selection induces a log-concave constraint. By applying the Brascamp-Lieb inequality (or properties of log-concave measures), the conditional covariance is strictly contracted relative to the identity matrix. Thus, there exists $\kappa > 0$ such that:
    \begin{equation}
        \text{Cov}(\mathbf{z} \mid \mathbf{z} \in \mathcal{D}(\mathbf{c})) \preceq (1 - \kappa)\mathbf{I}_d \implies \nabla_{\mathbf{c}} \boldsymbol{a}(\mathbf{c}) \preceq -\kappa \mathbf{I}_d.
    \end{equation}
    
    \textbf{3. Inner Product Bound:}
    Instead of approximating the vector $\boldsymbol{a}(\mathbf{c})$, we apply the Mean Value Theorem directly to the inner product $\mathbf{c}^\top \boldsymbol{a}(\mathbf{c})$. Note that $\boldsymbol{a}(\mathbf{0}) = \mathbf{0}$ due to the local symmetry around the optimum.
    \begin{equation}
        \mathbf{c}^\top \boldsymbol{a}(\mathbf{c}) = \mathbf{c}^\top \left( \int_{0}^{1} \nabla \boldsymbol{a}(t\mathbf{c}) dt \right) \mathbf{c}.
    \end{equation}
    Since the Jacobian is uniformly bounded by $-\kappa \mathbf{I}_d$, the quadratic form is bounded as:
    \begin{equation}
        \mathbf{c}^\top \boldsymbol{a}(\mathbf{c}) \le -\kappa \|\mathbf{c}\|^2.
    \end{equation}

    \textbf{4. Mapping back to Parameter Space:}
    We substitute the standardized variables back into the Mahalanobis inner product:
    \begin{align}
        (\boldsymbol{\mu} - \mathbf{u}^*)^\top \boldsymbol{\Sigma}^{-1} \boldsymbol{\Delta} &= (\boldsymbol{\Sigma}^{1/2}\mathbf{c})^\top \boldsymbol{\Sigma}^{-1} (\boldsymbol{\Sigma}^{1/2}\boldsymbol{a}(\mathbf{c})) \\
        &= \mathbf{c}^\top \boldsymbol{\Sigma}^{1/2} \boldsymbol{\Sigma}^{-1} \boldsymbol{\Sigma}^{1/2} \boldsymbol{a}(\mathbf{c}) \\
        &= \mathbf{c}^\top \boldsymbol{a}(\mathbf{c}) \\
        &\le -\kappa \|\mathbf{c}\|^2 \\
        &= -\kappa \|\boldsymbol{\mu} - \mathbf{u}^*\|_{\boldsymbol{\Sigma}^{-1}}^2.
    \end{align}
    This confirms that the drift acts as a restoring force in the natural geometry induced by the covariance $\boldsymbol{\Sigma}$.
\end{proof}
\newpage

\begin{mdframed}[style=MyFrame2]
\begin{lemma}[Robbins--Siegmund Theorem, \citet{robbins1971convergence}]
\label{lemma:robbins_siegmund}
Let $(\Omega,\mathcal F,P)$ be a probability space equipped with a filtration
$\{\mathcal F_n\}_{n\geq0}$, where $\mathcal F_n$ represents the information available up to time $n$.
Let $\{z_n\}_{n\ge0}$, $\{a_n\}_{n\ge0}$, $\{x_n\}_{n\ge0}$ and $\{y_n\}_{n\ge0}$
be nonnegative $\mathcal F_n$-measurable random variables. Assume that, almost surely,
\begin{equation}
\tag{RS}
\label{eq:RS}
    \mathbb{E}_n[z_{n+1}\mid\mathcal F_n]
\le (1+a_n) z_n + x_n - y_n,
\end{equation}
and that
\begin{equation}
    \sum_{n=0}^{\infty} a_n < \infty \quad\text{a.s.},\qquad
\sum_{n=0}^{\infty} x_n < \infty \quad\text{a.s.}
\end{equation}
Then,
\begin{enumerate}
      \item $\{z_n\}$ converges almost surely to a finite random variable $z_\infty$.
      \item $\sum_{n=0}^{\infty} y_n < \infty \quad\text{a.s.}$
\end{enumerate}
\end{lemma}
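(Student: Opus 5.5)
The plan follows the classical almost-supermartingale argument. First, the nonnegative quantity
$$M_n \triangleq z_n\prod_{k=0}^{n-1}(1+a_k)^{-1},\qquad \text{and}\qquad V_n \triangleq M_n-\sum_{k=0}^{n-1}\frac{x_k-y_k}{\prod_{j=0}^{k}(1+a_j)}$$
turns the recursion into a supermartingale. Dividing \eqref{eq:RS} by $\prod_{k=0}^{n}(1+a_k)$, which is $\mathcal F_n$-measurable, gives
$$\mathbb{E}[M_{n+1}\mid\mathcal F_n]\le M_n+\frac{x_n-y_n}{\prod_{k=0}^{n}(1+a_k)},$$
so $\mathbb{E}[V_{n+1}\mid\mathcal F_n]\le V_n$. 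Thus $V_n$ is a supermartingale. The products $\prod_k(1+a_k)$ increase to a finite limit $\pi_\infty$ almost surely, because $\log(1+a_k)\le a_k$ and $\sum a_k<\infty$. They are also bounded below by $1$.

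The main obstacle is that $V_n$ need not be bounded below or integrable, since the partial sums of the $x$-terms are only almost surely finite. I would handle this by localization. For $m\in\mathbb{N}$, define the stopping time
$$\tau_m=\inf\Big\{n:\ \sum_{k=0}^{n}\frac{x_k}{\prod_{j\le k}(1+a_j)}>m\Big\}.$$
This is a stopping time because the sum up to $n$ is $\mathcal F_n$-measurable. The stopped process $V_{n\wedge\tau_m}$ is a supermartingale. It is bounded below by $-m$, since $M\ge0$, the $y$-terms enter $V$ with a plus sign, and the $x$-sum before $\tau_m$ is at most $m$. Hence $V_{n\wedge\tau_m}+m$ is a nonnegative supermartingale. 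By Doob's convergence theorem it converges almost surely to a finite limit. (If one prefers to avoid integrability issues for $V_0$, one can also condition on $\{z_0\le c\}$.)

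On the event $\{\tau_m=\infty\}$, $V_n$ therefore converges to a finite limit. Because $\sum x_k<\infty$ almost surely, $\bigcup_m\{\tau_m=\infty\}$ has full probability, so $V_n$ converges almost surely to a finite limit. The sum of the $x$-terms also converges, hence $M_n+\sum_{k<n}y_k/\prod_{j\le k}(1+a_j)$ converges to a finite limit. Both summands are nonnegative and the second is nondecreasing, so $\sum_k y_k/\prod_{j\le k}(1+a_j)<\infty$. Since each denominator is at most $\pi_\infty<\infty$, this gives $\sum y_k<\infty$ almost surely, which is part 2.

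For part 1: the second summand converges, so $M_n$ converges. Then $z_n=M_n\prod_{k<n}(1+a_k)$ converges almost surely to the finite limit $z_\infty=M_\infty\pi_\infty$.
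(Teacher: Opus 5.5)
Your proof is correct. The normalization by $\prod_{k<n}(1+a_k)$, the compensated process $V_n$, the stopping time $\tau_m$ (which gives $V_{n\wedge\tau_m}\ge -m$ because the $x$-sum in $V_n$ only runs to $n-1$), the $\mathbf{1}_{\{z_0\le c\}}$ localization for integrability (valid since $V_0=z_0$), and the unwinding to $\sum_n y_n<\infty$ and $z_n\to M_\infty\pi_\infty$ all check out, and this is essentially the original argument of \citet{robbins1971convergence}. The paper gives no proof of this lemma and simply imports it by citation, so there is no different route to compare against.
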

\end{mdframed}

\begin{mdframed}[style=MyFrame2]
\begin{lemma}[Special Case of \eqref{eq:RS}, \citet{liu2025extensions}]\label{lemma:rs_zero_convergence}
Maintain the setting and notation of Lemma~\ref{lemma:robbins_siegmund}.
Let $\{T_n\}_{n\ge0}$ be a deterministic sequence with $0<T_n<1$ for all $n$.
Let $\alpha>0$ and $\eta>0$ be constants and suppose the sequences are identified as
$a_n = 0,\ x_n = \eta T_n^2,\ y_n = \alpha T_n z_n$. This yields the following recursion:
\begin{equation}
   \mathbb{E}_n[z_{n+1}\mid\mathcal F_n] \leq\left(1-\alpha T_n\right) z_n+\eta T_n^2 .
\end{equation}
If this recursion satisfies the Robbins--Monro condition
\begin{equation}
    \sum_{n=0}^\infty T_n = \infty \qquad\text{and}\qquad \sum_{n=0}^\infty T_n^2 < \infty    
\end{equation}
Then
\begin{equation}
    z_{n} \xrightarrow{a.s.} \mathbf{0}
\end{equation}
\end{lemma}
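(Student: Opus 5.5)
The plan is to reduce the statement to Lemma~\ref{lemma:robbins_siegmund} and then identify the limit by a pathwise argument. With the identification $a_n = 0$, $x_n = \eta T_n^2$, $y_n = \alpha T_n z_n$, the recursion
\begin{equation*}
\mathbb{E}_n[z_{n+1}\mid\mathcal F_n] \le (1-\alpha T_n) z_n + \eta T_n^2
\end{equation*}
is exactly \eqref{eq:RS} with $(1+a_n)z_n + x_n - y_n$.

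First I check the hypotheses of Lemma~\ref{lemma:robbins_siegmund}.
\begin{itemize}
\item $a_n = 0$ is trivially summable.
\item $x_n = \eta T_n^2$ is deterministic and nonnegative, and $\sum_n x_n = \eta \sum_n T_n^2 < \infty$ by the second Robbins--Monro condition.
\item $y_n = \alpha T_n z_n$ is nonnegative and $\mathcal F_n$-measurable, because $\alpha>0$, $T_n>0$, $z_n \ge 0$, and $T_n$ is deterministic.
\end{itemize}
Lemma~\ref{lemma:robbins_siegmund} then gives an almost sure event $\Omega_0$ of full measure on which two things hold: (i) $z_n \to z_\infty(\omega)$ for some finite $z_\infty(\omega) \ge 0$, and (ii) $\sum_n \alpha T_n z_n(\omega) < \infty$.

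Second, I show that $z_\infty = 0$ on $\Omega_0$. Fix $\omega \in \Omega_0$ and suppose, for contradiction, that $z_\infty(\omega) = c > 0$. Since $z_n(\omega) \to c$, there is an $N(\omega)$ such that $z_n(\omega) \ge c/2$ for all $n \ge N(\omega)$. Then
\begin{equation*}
\sum_{n \ge N(\omega)} \alpha T_n z_n(\omega) \ge \frac{\alpha c}{2} \sum_{n \ge N(\omega)} T_n = \infty,
\end{equation*}
because a tail of a divergent series of positive terms still diverges. This contradicts (ii). Hence $z_\infty = 0$ on $\Omega_0$, that is, $z_n \xrightarrow{a.s.} 0$.

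The main point needing care is this last step. The divergence $\sum T_n = \infty$ is only used through its tails, so it is essential that $T_n$ is deterministic and positive. The contradiction must also be run pathwise on the single full-measure event $\Omega_0$, rather than through expectations. A route through expectations (for example, showing $\mathbb{E}[z_n] \to 0$ by iterating $\prod_n (1-\alpha T_n) = 0$) would only give convergence in $L^1$. One would then have to combine it with the a.s. existence of the limit from (i) via Fatou's lemma, which also works. I would present the pathwise argument as the main proof and mention the Fatou route as a sanity check.
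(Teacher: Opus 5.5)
Your proof is correct. The paper gives no proof of this lemma and simply defers to the cited reference, and your argument is the standard derivation of this special case: Robbins--Siegmund gives $z_n\to z_\infty$ a.s. together with $\sum_n T_n z_n<\infty$ a.s., and $\sum_n T_n=\infty$ then forces $z_\infty=0$ pathwise. One small caveat concerns your alternative Fatou route: it additionally needs an integrability assumption such as $\mathbb{E}[z_0]<\infty$, which the statement does not provide, so you are right to keep it only as a side remark.
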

\end{mdframed}

\begin{mdframed}[style=MyFrame2]
\begin{lemma}[Ruhe's Trace Inequality, \citet{ruhe1970perturbation}]
\label{lemma:ruhe}
Let $\mathbf{A}$ and $\mathbf{B}$ be $n \times n$ positive semidefinite Hermitian matrices with eigenvalues denoted by $a_1 \ge \dots \ge a_n \ge 0$ and $b_1 \ge \dots \ge b_n \ge 0$, respectively. 
Then, the trace of their product is bounded by:
\begin{equation}
    \sum_{i=1}^{n} a_i b_{n-i+1} \le \text{\upshape Tr}(\mathbf{AB}) \le \sum_{i=1}^{n} a_i b_i.
\end{equation}
\end{lemma}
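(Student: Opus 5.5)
The statement to prove is Ruhe's trace inequality (\cref{lemma:ruhe}). I plan to reduce it to a statement about doubly stochastic matrices and then apply the rearrangement inequality.

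\textbf{Step 1: diagonalize.} Write $\mathbf{A} = \mathbf{U}\,\mathrm{diag}(a_1,\dots,a_n)\,\mathbf{U}^*$ and $\mathbf{B} = \mathbf{V}\,\mathrm{diag}(b_1,\dots,b_n)\,\mathbf{V}^*$ with $\mathbf{U},\mathbf{V}$ unitary. Set $\mathbf{W} = \mathbf{U}^*\mathbf{V}$. Cyclicity of the trace then gives
\begin{equation}
\text{\upshape Tr}(\mathbf{AB}) = \sum_{i,j=1}^n a_i b_j |W_{ij}|^2 .
\end{equation}
The matrix $\mathbf{S}$ with entries $S_{ij} = |W_{ij}|^2$ is doubly stochastic, because $\mathbf{W}$ is unitary, so each of its rows and columns has unit Euclidean norm.

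\textbf{Step 2: reduce to permutations.} The map $\mathbf{S} \mapsto \sum_{i,j} a_i b_j S_{ij}$ is linear. By Birkhoff--von Neumann, the set of doubly stochastic matrices is the convex hull of the permutation matrices. A linear function on this polytope attains its maximum and minimum at vertices, so
\begin{equation}
\min_{\sigma} \sum_i a_i b_{\sigma(i)} \le \text{\upshape Tr}(\mathbf{AB}) \le \max_{\sigma} \sum_i a_i b_{\sigma(i)} .
\end{equation}

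\textbf{Step 3: rearrangement.} Since both sequences are sorted in decreasing order, the rearrangement inequality identifies the extremes. The maximum over $\sigma$ is the identity pairing $\sum_i a_i b_i$. The minimum is the reversed pairing $\sum_i a_i b_{n-i+1}$. Together these give the claim.

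\textbf{Where the care goes.} The main point needing care is Step 2, namely invoking Birkhoff's theorem correctly, including that $\mathbf{S}$ is genuinely doubly stochastic. If I prefer to avoid Birkhoff, there is an alternative via Abel summation. Write $a_i = \sum_{k\ge i}(a_k - a_{k+1})$ with $a_{n+1}=0$, so each coefficient $a_k - a_{k+1}$ is nonnegative. This reduces the upper bound to showing
\begin{equation}
\sum_{i\le k}\sum_j b_j S_{ij} \le \sum_{j\le k} b_j .
\end{equation}
That inequality holds because the weights $c_j = \sum_{i\le k} S_{ij}$ lie in $[0,1]$ and sum to $k$, so putting all weight on the largest $b_j$ maximizes the sum. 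The lower bound is symmetric.

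Note that positive semidefiniteness is not actually needed beyond the Hermitian property, which gives real spectra. Nonnegativity of the eigenvalues is harmless here.
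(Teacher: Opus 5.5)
The paper does not prove this lemma: it cites \citet{ruhe1970perturbation} and uses the result as a black box in Step 4 of the proof of \cref{thm:selection_collapse}. Your argument is therefore the only proof on the table, and it is correct.

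\textbf{Main route.} Each step is sound:
\begin{itemize}
\item the identity $\operatorname{Tr}(\mathbf{AB})=\sum_{i,j}a_ib_j|W_{ij}|^2$ with $\mathbf{W}=\mathbf{U}^*\mathbf{V}$ unitary;
\item the double stochasticity of $(|W_{ij}|^2)$;
\item the Birkhoff--von Neumann reduction to permutations;
\item the rearrangement inequality.
\end{itemize}

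\textbf{Abel-summation variant.} This is a valid Birkhoff-free alternative. The weights $c_j=\sum_{i\le k}S_{ij}$ lie in $[0,1]$ and sum to $k$. Hence $\sum_j c_jb_j$ lies between the sum of the $k$ smallest and the sum of the $k$ largest $b_j$, which handles both bounds.

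\textbf{Hermitian suffices.} Your remark that Hermitian symmetry is enough is also right. In the Abel route the $k=n$ coefficient is $a_n$, which is nonnegative only under positive semidefiniteness. However, the $k=n$ inequality is an equality (all rows are summed), so its sign is irrelevant.

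\textbf{What each approach buys.} Your version makes the appendix self-contained; the citation buys brevity. Note that the paper only ever uses the consequence
$\operatorname{Tr}(\bar{\boldsymbol{\Sigma}}_{t-1}\mathbf{\Psi}_{t-1})\ge\lambda_{\min}(\mathbf{\Psi}_{t-1})\operatorname{Tr}(\bar{\boldsymbol{\Sigma}}_{t-1})$.
This follows from your lower bound, since $a_i\ge 0$ gives $\sum_i a_ib_{n-i+1}\ge b_n\sum_i a_i$. It also has a one-line direct proof:
$\operatorname{Tr}(\mathbf{AB})-\lambda_{\min}(\mathbf{B})\operatorname{Tr}(\mathbf{A})=\operatorname{Tr}\bigl(\mathbf{A}^{1/2}(\mathbf{B}-\lambda_{\min}(\mathbf{B})\mathbf{I})\mathbf{A}^{1/2}\bigr)\ge 0$.
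That is where positive semidefiniteness genuinely enters, and only for $\mathbf{A}=\bar{\boldsymbol{\Sigma}}_{t-1}$, not in the lemma itself.
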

\end{mdframed}

\newpage

\subsection{Proof of \cref{prop:replace_collapse}}
\label{app:proof:replace_collapse}

\begin{mdframed}[style=MyFrame1]
    \propReplaceCollapse*
\end{mdframed}

\begin{proof}
    We analyze the two claims separately: the divergence of the Wasserstein distance and the collapse of the variance.

    \paragraph{1. Wasserstein Divergence ($\mathbb{E}[\mathbb{W}_2^2] \to \infty$).}
    First, observe that the Wasserstein-2 distance between the approximation $\mathcal{N}(\bar{\boldsymbol{\mu}}_t, \bar{\boldsymbol{\Sigma}}_t)$ and the true distribution $\mathcal{N}_0$ is lower-bounded by the Euclidean distance of their means:
    \begin{equation}
        \mathbb{W}_{2}^{2}(\mathcal{N}(\bar{\boldsymbol{\mu}}_t, \bar{\boldsymbol{\Sigma}}_t), \mathcal{N}_0) \ge \|\bar{\boldsymbol{\mu}}_t - \bar{\boldsymbol{\mu}}_0\|^2.
    \end{equation}
    Let $\hat{\boldsymbol{\mu}}_0$ and $\hat{\boldsymbol{\Sigma}}_0$ be the empirical mean and covariance estimated from the initial samples of $\mathcal{N}_0$. By the triangle inequality:
    \begin{equation}
        \mathbb{W}_{2}^{2}(\mathcal{N}(\bar{\boldsymbol{\mu}}_t, \bar{\boldsymbol{\Sigma}}_t), \mathcal{N}_0) + \mathbb{W}_{2}^{2}(\mathcal{N}_0, \mathcal{N}(\hat{\boldsymbol{\mu}}_0, \hat{\boldsymbol{\Sigma}}_0)) \ge \frac{1}{2} \mathbb{W}_{2}^{2}(\mathcal{N}(\bar{\boldsymbol{\mu}}_t, \bar{\boldsymbol{\Sigma}}_t), \mathcal{N}(\hat{\boldsymbol{\mu}}_0, \hat{\boldsymbol{\Sigma}}_0)).
    \end{equation}
    Rearranging this yields:
    \begin{equation}
        \mathbb{W}_{2}^{2}(\mathcal{N}(\bar{\boldsymbol{\mu}}_t, \bar{\boldsymbol{\Sigma}}_t), \mathcal{N}_0) \ge \frac{1}{2} \|\bar{\boldsymbol{\mu}}_t - \hat{\boldsymbol{\mu}}_0\|^2 - \mathbb{W}_{2}^{2}(\mathcal{N}_0, \mathcal{N}(\hat{\boldsymbol{\mu}}_0, \hat{\boldsymbol{\Sigma}}_0)).
    \end{equation}
    Under the Replace paradigm, the sampling and fitting process matches the setting of \citet{shumailov2024ai} with unbiased estimators. Referring to the results of Eq. (21) in the Supplementary information of \citet{shumailov2024ai}, we have
    \begin{equation}
        \mathbb{E}[\|\bar{\boldsymbol{\mu}}_t - \hat{\boldsymbol{\mu}}_0\|^2] \ge \text{Tr}(\bar{\boldsymbol{\Sigma}}_0) \sum_{\tau=1}^{t} \frac{1}{n}.
    \end{equation}
    Since the sample size is constant $n$ at each generation, the term sums to $\frac{t}{n}$. Thus, as $t \to \infty$:
    \begin{equation}
        \mathbb{E}[\mathbb{W}_{2}^{2}(\mathcal{N}(\bar{\boldsymbol{\mu}}_t, \bar{\boldsymbol{\Sigma}}_t), \mathcal{N}_0)] \ge \frac{\text{Tr}(\bar{\boldsymbol{\Sigma}}_0)}{2n}t - \mathbb{W}_{2}^{2}(\mathcal{N}_0, \mathcal{N}(\hat{\boldsymbol{\mu}}_0, \hat{\boldsymbol{\Sigma}}_0)) \rightarrow \infty.
    \end{equation}

    \paragraph{2. Variance Collapse ($\bar{\boldsymbol{\Sigma}}_t \xrightarrow{a.s.} \mathbf{0}$).}
    The proof relies on the martingale properties of the covariance trace.
    Consider the recursive update step where samples are generated as $\mathbf{X}_{i,t} = \bar{\boldsymbol{\Sigma}}_{t-1}^{1/2} \mathbf{z}_{i,t} + \bar{\boldsymbol{\mu}}_{t-1}$ with $\mathbf{z}_{i,t} \sim \mathcal{N}(\mathbf{0}, \mathbf{I}_d)$.

    The trace of the covariance matrix, $\text{Tr}(\bar{\boldsymbol{\Sigma}}_t)$, forms a lower-bounded supermartingale. By Doob's Martingale Convergence Theorem, it must converge to a random variable $\boldsymbol{\Sigma}_{\infty}$:
    \begin{equation}
        \text{Tr}(\bar{\boldsymbol{\Sigma}}_t) \longrightarrow \boldsymbol{\Sigma}_{\infty}.
    \end{equation}
    Specifically, the trace evolves multiplicatively as $\text{Tr}(\bar{\boldsymbol{\Sigma}}_t) = Q_t \text{Tr}(\bar{\boldsymbol{\Sigma}}_{t-1})$, where $Q_t$ is a generalized $\chi^2$ random variable (with expectation $\mathbb{E}[Q_t]=1$) representing the sum of $d$ independent $\chi^2$ variables weighted by the eigenvalues of $\bar{\boldsymbol{\Sigma}}_{t-1}$.

    For any generation $t$, at least one weight is significant, ensuring that $P(|Q_t - 1| > \epsilon) > c > 0$ for some constant $c$.
    Consequently, for the product $\text{Tr}(\bar{\boldsymbol{\Sigma}}_t) = \text{Tr}(\bar{\boldsymbol{\Sigma}}_0) \prod_{j=1}^t Q_j$ to converge to a finite limit $\boldsymbol{\Sigma}_\infty$, it must be that $\boldsymbol{\Sigma}_\infty = 0$ almost surely (otherwise the fluctuations of $Q_t$ would prevent convergence).
    Since all matrix norms are equivalent, $\text{Tr}(\bar{\boldsymbol{\Sigma}}_t) \to 0$ implies:
    \begin{equation}
        \bar{\boldsymbol{\Sigma}}_t \xrightarrow{a.s.} \mathbf{0}.
    \end{equation}
\end{proof}

\newpage
\subsection{Proof of \cref{prop:accumulate_stability}}
\label{app:proof:accumulate_stability}
\begin{mdframed}[style=MyFrame2]
    \propAccumulate*
\end{mdframed}

\begin{proof}
    In this section, we provide a rigorous derivation of the convergence of the Accumulate paradigm in the multivariate setting. We define the recursive process as follows:

\textbf{Sampling Step:}
At generation $t$, data are sampled from the estimated distribution of the previous generation:
\begin{equation}
    X_{i,t} = \bar{\boldsymbol{\mu}}_{t-1} + \bar{\boldsymbol{\Sigma}}_{t-1}^{1/2} z_{i,t}, \quad \text{where } z_{i,t} \sim \mathcal{N}(0, \mathbf{I}_d), \quad i=1,\dots,n
\end{equation}

\textbf{Learning Step:}
The parameters are updated using all accumulated historical data from generation $0$ to $t$:
\begin{align}
    \bar{\boldsymbol{\mu}}_t &= \frac{1}{tn} \sum_{\tau=1}^{t} \sum_{i=1}^{n} X_{i,\tau} \\
    \bar{\boldsymbol{\Sigma}}_t &= \frac{1}{tn} \sum_{\tau=1}^{t} \sum_{i=1}^{n} (X_{i,\tau} - \bar{\boldsymbol{\mu}}_t)(X_{i,\tau} - \bar{\boldsymbol{\mu}}_t)^\top
\end{align}

\subsubsection{Recursive Relation of the Mean}
We first derive the recursive relationship between $\bar{\boldsymbol{\mu}}_t$ and $\bar{\boldsymbol{\mu}}_{t-1}$. Decomposing the summation into the history (generations $1$ to $t-1$) and the current generation $t$:
\begin{equation}
    \bar{\boldsymbol{\mu}}_t = \frac{1}{tn} \left( \sum_{\tau=1}^{t-1} \sum_{i=1}^{n} X_{i,\tau} + \sum_{i=1}^{n} X_{i,t} \right)
\end{equation}
Observing that the first term sums to $n(t-1)\bar{\boldsymbol{\mu}}_{t-1}$, and substituting the sampling equation for $X_{i,t}$:
\begin{align}
    \bar{\boldsymbol{\mu}}_t &= \frac{1}{tn} \left[ n(t-1)\bar{\boldsymbol{\mu}}_{t-1} + \sum_{i=1}^{n} (\bar{\boldsymbol{\mu}}_{t-1} + \bar{\boldsymbol{\Sigma}}_{t-1}^{1/2} z_{i,t}) \right] \\
    &= \frac{1}{tn} \left[ n(t-1)\bar{\boldsymbol{\mu}}_{t-1} + n\bar{\boldsymbol{\mu}}_{t-1} + n\bar{\boldsymbol{\Sigma}}_{t-1}^{1/2} \bar{z}_t \right] \\
    &= \frac{1}{tn} \left[ nt\bar{\boldsymbol{\mu}}_{t-1} + n\bar{\boldsymbol{\Sigma}}_{t-1}^{1/2} \bar{z}_t \right]
\end{align}
where $\bar{z}_t = \frac{1}{n}\sum_{i=1}^n z_{i,t} \sim \mathcal{N}(0, \frac{1}{n}\mathbf{I}_d)$. This yields the recurrence:
\begin{equation}
    \label{eq:mean_recurrence}
    \bar{\boldsymbol{\mu}}_t = \bar{\boldsymbol{\mu}}_{t-1} + \bar{\boldsymbol{\Sigma}}_{t-1}^{1/2} \frac{\bar{z}_t}{t}
\end{equation}
Unrolling this recurrence leads to the general form:
\begin{equation}
    \bar{\boldsymbol{\mu}}_t = \bar{\boldsymbol{\mu}}_0 + \sum_{k=1}^t \bar{\boldsymbol{\Sigma}}_{k-1}^{1/2} \frac{\bar{z}_k}{k}
\end{equation}
Taking the expectation, since $\mathbb{E}[\bar{z}_k] = 0$, we have $\mathbb{E}[\bar{\boldsymbol{\mu}}_t] = \bar{\boldsymbol{\mu}}_0$.

\subsubsection{Recursive Relation of the Covariance}
We aim to find the expected covariance matrix $\mathbb{E}[\bar{\boldsymbol{\Sigma}}_t]$. Using the law of total expectation, we first compute $\mathbb{E}[\bar{\boldsymbol{\Sigma}}_t | \mathcal{F}_{t-1}]$.
Using the multivariate decomposition identity $\sum_i^N (x-c)(x-c)^\top = \sum_i^N (x-\bar{x})(x-\bar{x})^\top + N(\bar{x}-c)(\bar{x}-c)^\top$, we decompose the total covariance into \textit{Within-Group Covariance} (A) and \textit{Between-Group Deviation} (B):
\begin{equation}
    nt \bar{\boldsymbol{\Sigma}}_t = \sum_{\tau=1}^{t} \left[ \underbrace{\sum_{i=1}^{n} (X_{i,\tau} - \bar{X}_\tau)(X_{i,\tau} - \bar{X}_\tau)^\top}_{\text{Term A}_{\tau}: \text{Within}} + \underbrace{\sum_{i=1}^{n}(\bar{X}_\tau - \bar{\boldsymbol{\mu}}_t)(\bar{X}_\tau - \bar{\boldsymbol{\mu}}_t)^\top}_{\text{Term B}_{\tau}\text{: Between}} \right]
\end{equation}
where $\bar{X}_\tau$ is the sample mean of generation $\tau$.
Substituting $X_{i,\tau} = \bar{\boldsymbol{\mu}}_{\tau-1} + \bar{\boldsymbol{\Sigma}}_{\tau-1}^{1/2} z_{i,\tau}$, we have $\bar{X}_\tau = \bar{\boldsymbol{\mu}}_{\tau-1} + \bar{\boldsymbol{\Sigma}}_{\tau-1}^{1/2} \bar{z}_\tau$.
The within-group term simplifies using $S_\tau = \sum_{i=1}^n (z_{i,\tau} - \bar{z}_\tau)(z_{i,\tau} - \bar{z}_\tau)^\top$:
\begin{equation}
    \text{Term A}_{\tau} = \bar{\boldsymbol{\Sigma}}_{\tau-1}^{1/2} S_\tau (\bar{\boldsymbol{\Sigma}}_{\tau-1}^{1/2})^\top
\end{equation}
We calculate the conditional expectation by splitting the summation into historical terms ($\tau < t$) and the new term ($\tau = t$).

\textbf{Step 1: Historical Contribution ($\tau < t$).}
For $\tau < t$, $\bar{\boldsymbol{\Sigma}}_{\tau-1}$ is fixed in $\mathcal{F}_{t-1}$.
\begin{itemize}
    \item \textbf{Within Covariance:} $\mathbb{E}[\text{Term A}_{\tau}] = (n-1)\bar{\boldsymbol{\Sigma}}_{\tau-1}$.
    \item \textbf{Between Deviation:} We express the deviation vector:
    \begin{equation}
        \bar{X}_\tau - \bar{\boldsymbol{\mu}}_t = (\bar{X}_\tau - \bar{\boldsymbol{\mu}}_{t-1}) - (\bar{\boldsymbol{\mu}}_t - \bar{\boldsymbol{\mu}}_{t-1}) = (\bar{X}_\tau - \bar{\boldsymbol{\mu}}_{t-1}) - \bar{\boldsymbol{\Sigma}}_{t-1}^{1/2} \frac{\bar{z}_t}{t}
    \end{equation}
    Squaring this (outer product) and taking the expectation, the cross-term vanishes due to the independence of $\bar{z}_t$ from historical data. The random part contributes:
    \begin{equation}
        \mathbb{E}\left[ n \left( \bar{\boldsymbol{\Sigma}}_{t-1}^{1/2} \frac{\bar{z}_t}{t} \right) \left( \bar{\boldsymbol{\Sigma}}_{t-1}^{1/2} \frac{\bar{z}_t}{t} \right)^\top \right] = \frac{n}{t^2} \bar{\boldsymbol{\Sigma}}_{t-1}^{1/2} \frac{\mathbf{I}_d}{n} (\bar{\boldsymbol{\Sigma}}_{t-1}^{1/2})^\top = \frac{1}{t^2} \bar{\boldsymbol{\Sigma}}_{t-1}
    \end{equation}
    Thus, the total expectation for one historical step $\tau$ is:
\begin{equation}
    \mathbb{E}[\text{Term A}_{\tau}] =\mathbb{E}\left[ \sum_{i=1}^{n} (X_{i,\tau} - \bar{\boldsymbol{\mu}}_t)(X_{i,\tau} - \bar{\boldsymbol{\mu}}_t)^\top \right] = n \bar{\boldsymbol{\Sigma}}_{t-1} + \frac{1}{t^2}\bar{\boldsymbol{\Sigma}}_{t-1}
\end{equation}
    Summing over the $t-1$ historical steps and using the definition of the previous total covariance $\bar{\boldsymbol{\Sigma}}_{t-1}$:
    \begin{equation}
        \sum_{\tau=1}^{t-1}\mathbb{E}[\text{Term A}_{\tau}] = \sum_{\tau=1}^{t-1} \mathbb{E}\left[ \sum_{i=1}^{n} (X_{i,\tau} - \bar{\boldsymbol{\mu}}_t)(X_{i,\tau} - \bar{\boldsymbol{\mu}}_t)^\top \right] = n(t-1)\bar{\boldsymbol{\Sigma}}_{t-1} + \frac{t-1}{t^2}\bar{\boldsymbol{\Sigma}}_{t-1}
    \end{equation}
\end{itemize}

\textbf{Step 2: New Data Contribution ($\tau = t$).}
For the current step $\tau = t$:
\begin{itemize}
    \item \textbf{Within Covariance:} $\mathbb{E}[\text{Term A}_t] = (n-1)\bar{\boldsymbol{\Sigma}}_{t-1}$.
    \item \textbf{Between Deviation:}
    Using \cref{eq:mean_recurrence}, we have $\bar{X}_t - \bar{\boldsymbol{\mu}}_t = \bar{\boldsymbol{\Sigma}}_{t-1}^{1/2}\bar{z}_t (1 - \frac{1}{t})$. Taking the expectation:
    \begin{equation}
        \mathbb{E}[\text{Term B}_t] = n \left(1 - \frac{1}{t}\right)^2 \bar{\boldsymbol{\Sigma}}_{t-1}^{1/2} \frac{\mathbf{I}_d}{n} (\bar{\boldsymbol{\Sigma}}_{t-1}^{1/2})^\top = \left(\frac{t-1}{t}\right)^2 \bar{\boldsymbol{\Sigma}}_{t-1}
    \end{equation}
\end{itemize}

\textbf{Step 3: Aggregation}
Summing all components:
\begin{align}
    \mathbb{E}[nt \bar{\boldsymbol{\Sigma}}_t | \mathcal{F}_{t-1}] &= \left[ \underbrace{n(t-1)}_{\text{History Base}} + \underbrace{{(t-1)}/{t^2}}_{\text{History Drift}} + \underbrace{(n-1)}_{\text{New Within}} + \underbrace{{(t-1)^2}/{t^2}}_{\text{New Between}} \right] \bar{\boldsymbol{\Sigma}}_{t-1} \\
    &= \left[ n(t-1) + n - 1 + \frac{t-1 + (t-1)^2}{t^2} \right] \bar{\boldsymbol{\Sigma}}_{t-1} \\
    &= \left[ nt - 1 + \frac{t-1}{t} \right] \bar{\boldsymbol{\Sigma}}_{t-1} = \left[ nt - \frac{1}{t} \right] \bar{\boldsymbol{\Sigma}}_{t-1}
\end{align}
Dividing by $nt$:
\begin{equation}
    \mathbb{E}[\bar{\boldsymbol{\Sigma}}_t | \mathcal{F}_{t-1}] = \left( 1 - \frac{1}{nt^2} \right) \bar{\boldsymbol{\Sigma}}_{t-1}
\end{equation}
Since $1 - \frac{1}{nt^2} < 1$, we have $\mathbb{E}[\bar{\boldsymbol{\Sigma}}_t | \mathcal{F}_{t-1}] \preceq \bar{\boldsymbol{\Sigma}}_{t-1}$, which implies that $\{\bar{\boldsymbol{\Sigma}}_t\}$ is a nonnegative supermartingale. By the Martingale Convergence Theorem, $\bar{\boldsymbol{\Sigma}}_t$ converges almost surely to a limiting random matrix $\bar{\boldsymbol{\Sigma}}_\infty$.

Taking the total expectation recursively yields the infinite product:
\begin{equation}
    \mathbb{E}[\bar{\boldsymbol{\Sigma}}_t] = \bar{\boldsymbol{\Sigma}}_0 \prod_{k=1}^{t} \left( 1 - \frac{1}{nk^2} \right)
\end{equation}
As $t \to \infty$, using Euler's infinite product formula for the sinc function $\frac{\sin(\pi x)}{\pi x} = \prod (1 - \frac{x^2}{k^2})$, with $x = 1/\sqrt{n}$:
\begin{equation}
    \lim_{t \to \infty} \mathbb{E}[\bar{\boldsymbol{\Sigma}}_t] = \frac{\sin(\pi/\sqrt{n})}{\pi/\sqrt{n}} \bar{\boldsymbol{\Sigma}}_0 = \alpha_n \bar{\boldsymbol{\Sigma}}_0
\end{equation}

\subsubsection{Wasserstein Distance}
The squared 2-Wasserstein distance between $\mathcal{N}(\bar{\boldsymbol{\mu}}_t, \bar{\boldsymbol{\Sigma}}_t)$ and $\mathcal{N}(\bar{\boldsymbol{\mu}}_0, \bar{\boldsymbol{\Sigma}}_0)$ is given by:
\begin{equation}
    \mathbb{E}[\mathbb{W}_2^2] = \mathbb{E}[\|\bar{\boldsymbol{\mu}}_t - \bar{\boldsymbol{\mu}}_0\|^2] + \mathbb{E}[\mathfrak{B}^2(\bar{\boldsymbol{\Sigma}}_t, \bar{\boldsymbol{\Sigma}}_0)]
\end{equation}

\textbf{Mean Shift Term:}
Recall $\bar{\boldsymbol{\mu}}_t - \bar{\boldsymbol{\mu}}_0 = \sum_{k=1}^t \bar{\boldsymbol{\Sigma}}_{k-1}^{1/2} \frac{\bar{z}_k}{k}$.
The terms $\bar{z}_k$ are independent with variance $\mathbf{I}_d/n$. Thus:
\begin{equation}
    \mathbb{E}[\|\bar{\boldsymbol{\mu}}_t - \bar{\boldsymbol{\mu}}_0\|^2] = \sum_{k=1}^t \frac{1}{k^2} \text{Tr}\left( \frac{1}{n}\bar{\boldsymbol{\Sigma}}_{k-1}^{1/2} \mathbf{I}_d (\bar{\boldsymbol{\Sigma}}_{k-1}^{1/2})^\top \right) = \sum_{k=1}^t \frac{1}{nk^2} \text{Tr}(\mathbb{E}[\bar{\boldsymbol{\Sigma}}_{k-1}])
\end{equation}
Using the product form $\mathbb{E}[\bar{\boldsymbol{\Sigma}}_{k-1}] = P_{k-1}\bar{\boldsymbol{\Sigma}}_0$ where $P_k = \prod_{\tau=1}^k (1 - \frac{1}{n\tau^2})$, we construct a telescoping sum.
Notice that $P_k = P_{k-1}(1 - \frac{1}{nk^2}) \implies P_{k-1} - P_k = \frac{1}{nk^2}P_{k-1}$.
Substituting this into the summation:
\begin{align}
    \mathbb{E}[\|\bar{\boldsymbol{\mu}}_t - \bar{\boldsymbol{\mu}}_0\|^2] &= \text{Tr}(\bar{\boldsymbol{\Sigma}}_0) \sum_{k=1}^t (P_{k-1} - P_k) \\
    &= \text{Tr}(\bar{\boldsymbol{\Sigma}}_0) (P_0 - P_t) = \text{Tr}(\bar{\boldsymbol{\Sigma}}_0) (1 - \alpha_n)
\end{align}

\textbf{Covariance Distance Term:}
As $t \to \infty$, $\bar{\boldsymbol{\Sigma}}_t \to \alpha_n \bar{\boldsymbol{\Sigma}}_0$ almost surely. The Bures metric term becomes:
\begin{align}
    \lim_{t \to \infty} \mathfrak{B}^2(\bar{\boldsymbol{\Sigma}}_t, \bar{\boldsymbol{\Sigma}}_0) &= \text{Tr}(\alpha_n \bar{\boldsymbol{\Sigma}}_0 + \bar{\boldsymbol{\Sigma}}_0 - 2(\alpha_n \bar{\boldsymbol{\Sigma}}_0^{1/2} \bar{\boldsymbol{\Sigma}}_0 \bar{\boldsymbol{\Sigma}}_0^{1/2})^{1/2}) \\
    &= \text{Tr}((1 + \alpha_n)\bar{\boldsymbol{\Sigma}}_0 - 2\sqrt{\alpha_n}\bar{\boldsymbol{\Sigma}}_0) \\
    &= (1 + \alpha_n - 2\sqrt{\alpha_n}) \text{Tr}(\bar{\boldsymbol{\Sigma}}_0) = (1 - \sqrt{\alpha_n})^2 \text{Tr}(\bar{\boldsymbol{\Sigma}}_0)
\end{align}

Combining both terms:
\begin{equation}
    \lim_{t \to \infty} \mathbb{E}[\mathbb{W}_2^2] =  (1 - \alpha_n)\text{Tr}(\bar{\boldsymbol{\Sigma}}_0) + (1 - \sqrt{\alpha_n})^2 \text{Tr}(\bar{\boldsymbol{\Sigma}}_0) = 2(1 - \sqrt{\alpha_n})\text{Tr}(\bar{\boldsymbol{\Sigma}}_0)
\end{equation}
\end{proof}

\newpage
\subsection{Proof of \cref{thm:selection_collapse}}
\label{app:proof:selection_collapse}
\begin{mdframed}[style=MyFrame1]
    \thmSelectionCollapse*
\end{mdframed}

\begin{proof}
    In this section, we provide the complete derivation for the Accumulate paradigm under biased selection (top-$\alpha$ selection based on a local utility $U(x)$). We analyze the evolution of the mean vector and the covariance matrix to prove the convergence of the mean and the collapse of the variance.

\textbf{Step 1: Sampling and Selection}
At generation $t$, we first generate raw samples from the previous estimate:
\begin{equation}
    \tilde{X}_{i,t} \sim \mathcal{N}(\bar{\boldsymbol{\mu}}_{t-1}, \bar{\boldsymbol{\Sigma}}_{t-1}), \quad i = 1, \dots, n
\end{equation}
The selection mechanism retains samples within a high-utility region $\mathcal{R}_t \subset \mathbb{R}^d$ enclosing the target $u^*$, such that the acceptance probability is $\alpha$. The selected samples $X_{i,t}$ follow a truncated multivariate normal distribution:
\begin{equation}
    X_{i,t} \sim \mathcal{TN}(\bar{\boldsymbol{\mu}}_{t-1}, \bar{\boldsymbol{\Sigma}}_{t-1}, \mathcal{R}_t)
\end{equation}
To facilitate analysis, we utilize the standardized reparameterization. Let $\mathcal{D}_{t-1}$ be the standardized selection region:
\begin{equation}
    \mathcal{D}_{t-1} = \{ \mathbf{z} \in \mathbb{R}^d \mid \bar{\boldsymbol{\mu}}_{t-1} + \bar{\boldsymbol{\Sigma}}_{t-1}^{1/2} \mathbf{z} \in \mathcal{R}_t \}
\end{equation}
The selected samples can be expressed as:
\begin{equation}
    X_{i,t} = \bar{\boldsymbol{\mu}}_{t-1} + \bar{\boldsymbol{\Sigma}}_{t-1}^{1/2} \boldsymbol{\eta}_{i,t}
\end{equation}
where $\boldsymbol{\eta}_{i,t} \sim \mathcal{TN}(\mathbf{0}, \mathbf{I}_d, \mathcal{D}_{t-1})$ is a standard truncated multivariate normal vector.

\textbf{Step 2: Key Moments}
We characterize the statistics of the standardized truncated noise $\boldsymbol{\eta}_{i,t}$:
\begin{enumerate}
    \item \textbf{Mean Drift Vector:} $\boldsymbol{a}_{t-1} \triangleq \mathbb{E}[\boldsymbol{\eta}_{i,t} | \mathcal{F}_{t-1}]$. 
    
    This vector represents the directional force toward the high-utility region.
    \item \textbf{Covariance Contraction Matrix:} $ \mathbf{B}_{t-1} \triangleq \text{Cov}(\boldsymbol{\eta}_{i,t} | \mathcal{F}_{t-1})$.
    
    Due to truncation to a finite probability mass $\alpha < 1$, the variance strictly contracts, implying $\mathbf{0} \prec \mathbf{B}_{t-1} \prec \mathbf{I}_d$.
    \item \textbf{Second Moment:} $\mathbb{E}[\boldsymbol{\eta}_{i,t}\boldsymbol{\eta}_{i,t}^\top | \mathcal{F}_{t-1}] = \mathbf{B}_{t-1} + \boldsymbol{a}_{t-1}\boldsymbol{a}_{t-1}^\top$.
\end{enumerate}
We define the \textbf{dissipation matrix} $\mathbf{\Psi}_{t-1}$ as the loss in the second moment compared to the identity matrix (isotropic noise):
\begin{equation}
    \mathbf{\Psi}_{t-1} \triangleq \mathbf{I}_d - (\mathbf{B}_{t-1} + \boldsymbol{a}_{t-1}\boldsymbol{a}_{t-1}^\top)
\end{equation}
Note that $\mathbf{\Psi}_{t-1}$ is symmetric by construction, as it is the difference between the identity matrix and the symmetric second moment matrix. 

\subsubsection{Recursive Relation of the Mean}

The parameter update for the mean in the Accumulate paradigm is:
\begin{equation}
    \bar{\boldsymbol{\mu}}_t = \frac{1}{tn} \left( \sum_{\tau=1}^{t-1} \sum_{i=1}^{n} X_{i,\tau} + \sum_{i=1}^{n} X_{i,t} \right)
\end{equation}
Recognizing that the first term is $n(t-1)\bar{\boldsymbol{\mu}}_{t-1}$ and substituting $X_{i,t} = \bar{\boldsymbol{\mu}}_{t-1} + \bar{\boldsymbol{\Sigma}}_{t-1}^{1/2} \boldsymbol{\eta}_{i,t}$:
\begin{align}
    \bar{\boldsymbol{\mu}}_t &= \frac{1}{tn} \left[ n(t-1)\bar{\boldsymbol{\mu}}_{t-1} + \sum_{i=1}^{n} (\bar{\boldsymbol{\mu}}_{t-1} + \bar{\boldsymbol{\Sigma}}_{t-1}^{1/2} \boldsymbol{\eta}_{i,t}) \right] \\
    &= \bar{\boldsymbol{\mu}}_{t-1} + \bar{\boldsymbol{\Sigma}}_{t-1}^{1/2} \frac{\bar{\boldsymbol{\eta}}_t}{t}
\end{align}
where $\bar{\boldsymbol{\eta}}_t = \frac{1}{n}\sum_{i=1}^n \boldsymbol{\eta}_{i,t}$. Taking the conditional expectation:
\begin{equation}
    \mathbb{E}[\bar{\boldsymbol{\mu}}_t | \mathcal{F}_{t-1}] = \bar{\boldsymbol{\mu}}_{t-1} + \bar{\boldsymbol{\Sigma}}_{t-1}^{1/2} \frac{\boldsymbol{a}_{t-1}}{t}
\end{equation}

\textbf{Lyapunov Analysis:}
To rigorously handle the anisotropy of the covariance, we define the Lyapunov function using the \textit{Mahalanobis norm} induced by the current covariance state. Let $V_t = \|\bar{\boldsymbol{\mu}}_t - \mathbf{u}^*\|_{\bar{\boldsymbol{\Sigma}}_{t-1}^{-1}}^2$. 

Recall the recursive update: $\bar{\boldsymbol{\mu}}_t - \mathbf{u}^* = (\bar{\boldsymbol{\mu}}_{t-1} - \mathbf{u}^*) + \frac{1}{t} \bar{\boldsymbol{\Sigma}}_{t-1}^{1/2} \bar{\boldsymbol{\eta}}_t$.
Expanding the quadratic form with respect to the weighting matrix $\mathbf{W}_{t-1} \triangleq \bar{\boldsymbol{\Sigma}}_{t-1}^{-1}$:
\begin{equation}
\begin{aligned}
    V_t &= (\bar{\boldsymbol{\mu}}_t - \mathbf{u}^*)^\top \mathbf{W}_{t-1} (\bar{\boldsymbol{\mu}}_t - \mathbf{u}^*) \\
    &= (\bar{\boldsymbol{\mu}}_{t-1} - \mathbf{u}^*)^\top \mathbf{W}_{t-1} (\bar{\boldsymbol{\mu}}_{t-1} - \mathbf{u}^*) \\
    &\quad + \frac{2}{t} (\bar{\boldsymbol{\mu}}_{t-1} - \mathbf{u}^*)^\top \mathbf{W}_{t-1} \bar{\boldsymbol{\Sigma}}_{t-1}^{1/2} \bar{\boldsymbol{\eta}}_t \\
    &\quad + \frac{1}{t^2} (\bar{\boldsymbol{\Sigma}}_{t-1}^{1/2} \bar{\boldsymbol{\eta}}_t)^\top \mathbf{W}_{t-1} (\bar{\boldsymbol{\Sigma}}_{t-1}^{1/2} \bar{\boldsymbol{\eta}}_t)
\end{aligned}
\end{equation}
Simplifying the terms using $\mathbf{W}_{t-1} \bar{\boldsymbol{\Sigma}}_{t-1}^{1/2} = \bar{\boldsymbol{\Sigma}}_{t-1}^{-1} \bar{\boldsymbol{\Sigma}}_{t-1}^{1/2} = \bar{\boldsymbol{\Sigma}}_{t-1}^{-1/2}$:
\begin{equation}
    V_t = V_{t-1} + \frac{2}{t} (\bar{\boldsymbol{\mu}}_{t-1} - \mathbf{u}^*)^\top \bar{\boldsymbol{\Sigma}}_{t-1}^{-1/2} \bar{\boldsymbol{\eta}}_t + \frac{1}{t^2} \|\bar{\boldsymbol{\eta}}_t\|^2
\end{equation}
Taking the expectation conditioned on $\mathcal{F}_{t-1}$:
\begin{equation}
    \mathbb{E}[V_t | \mathcal{F}_{t-1}] = V_{t-1} + \underbrace{\frac{2}{t} (\bar{\boldsymbol{\mu}}_{t-1} - \mathbf{u}^*)^\top \bar{\boldsymbol{\Sigma}}_{t-1}^{-1/2} \boldsymbol{a}_{t-1}}_{\text{Cross Term}} + \frac{1}{t^2} \mathbb{E}[\|\bar{\boldsymbol{\eta}}_t\|^2]
\end{equation}
Now, we apply \cref{lemma:restoring_force_parameter}. Let $\mathbf{c} = \bar{\boldsymbol{\Sigma}}_{t-1}^{-1/2}(\bar{\boldsymbol{\mu}}_{t-1} - \mathbf{u}^*)$ be the standardized error. The cross term becomes:
\begin{equation}
    \frac{2}{t} \mathbf{c}^\top \boldsymbol{a}(\mathbf{c}).
\end{equation}
By \cref{lemma:restoring_force_parameter}, the drift satisfies $\mathbf{c}^\top \boldsymbol{a}(\mathbf{c}) \le -\kappa \|\mathbf{c}\|^2 = -\kappa V_{t-1}$.
Substituting this back yields:
\begin{equation}
    \mathbb{E}[V_t | \mathcal{F}_{t-1}] \le V_{t-1} \left( 1 - \frac{2\kappa}{t} \right) + \frac{K}{t^2}
\end{equation}
where $K = \text{Tr}(\mathbf{B}_{t-1} + \boldsymbol{a}\boldsymbol{a}^\top)$ bounds the noise term.
Since the step sizes satisfy the Robbins-Monro conditions, applying \cref{lemma:rs_zero_convergence} yields:
\begin{equation}
    \lim_{t \to \infty} \|\bar{\boldsymbol{\mu}}_t - \mathbf{u}^*\|_{\bar{\boldsymbol{\Sigma}}_{t-1}^{-1}}^2 = 0 \quad (a.s.)
\end{equation}
Since the covariance matrix $\bar{\boldsymbol{\Sigma}}_t$ remains positive definite and bounded (its eigenvalues do not diverge), convergence in the Mahalanobis metric implies convergence in the Euclidean metric. Thus, $\bar{\boldsymbol{\mu}}_t \xrightarrow{a.s.} \mathbf{u}^*$.

\subsubsection{Recursive Relation of the Covariance}

We now derive the recursion for the covariance matrix $\bar{\boldsymbol{\Sigma}}_t$. Using the multivariate decomposition:
\begin{equation}
    nt \bar{\boldsymbol{\Sigma}}_t = \sum_{\tau=1}^{t} \left[ \underbrace{\sum_{i=1}^{n} (X_{i,\tau} - \bar{X}_\tau)(X_{i,\tau} - \bar{X}_\tau)^\top}_{\text{Term A: Within}} + \underbrace{\sum_{i=1}^{n}(\bar{X}_\tau - \bar{\boldsymbol{\mu}}_t)(\bar{X}_\tau - \bar{\boldsymbol{\mu}}_t)^\top}_{\text{Term B: Between}} \right]
\end{equation}
where $\bar{X}_\tau = \bar{\boldsymbol{\mu}}_{\tau-1} + \bar{\boldsymbol{\Sigma}}_{\tau-1}^{1/2} \bar{\boldsymbol{\eta}}_\tau$.
The drift vector in the mean update is denoted as $\boldsymbol{\Delta}_t = \bar{\boldsymbol{\Sigma}}_{t-1}^{1/2} \frac{\bar{\boldsymbol{\eta}}_t}{t}$, so $\bar{\boldsymbol{\mu}}_t = \bar{\boldsymbol{\mu}}_{t-1} + \boldsymbol{\Delta}_t$.

\textbf{Step 1: Historical Contribution ($\tau < t$)}
For historical data, $\bar{\boldsymbol{\Sigma}}_{\tau-1}$ and $\mathbf{B}_{\tau-1}$ are fixed.
\begin{itemize}
    \item \textbf{Within Covariance:}
    The expectation of the sample covariance of truncated variables is scaled by the contraction matrix:
    \begin{equation}
        \mathbb{E}[\text{Term A}_{\tau}] = (n-1) \bar{\boldsymbol{\Sigma}}_{\tau-1}^{1/2} \mathbf{B}_{\tau-1} (\bar{\boldsymbol{\Sigma}}_{\tau-1}^{1/2})^\top
    \end{equation}
    However, under the logic of the Accumulate paradigm for the total sum, the historical "Within" terms plus the historical "Between" deviations (relative to the \textit{previous} mean $\bar{\boldsymbol{\mu}}_{t-1}$) sum exactly to the previous total covariance $n(t-1)\bar{\boldsymbol{\Sigma}}_{t-1}$. We must only correct for the shift in the global mean from $\bar{\boldsymbol{\mu}}_{t-1}$ to $\bar{\boldsymbol{\mu}}_t$.

   \item \textbf{Between Deviation Correction:}
    The deviation vector is decomposed as $\bar{X}_\tau - \bar{\boldsymbol{\mu}}_t = (\bar{X}_\tau - \bar{\boldsymbol{\mu}}_{t-1}) - \boldsymbol{\Delta}_t$. Expanding the outer product sum:
    \begin{align}
        \sum_{\tau=1}^{t} n (\bar{X}_\tau - \bar{\boldsymbol{\mu}}_t)(\bar{X}_\tau - \bar{\boldsymbol{\mu}}_t)^\top 
        &= \sum_{\tau=1}^{t} n (\bar{X}_\tau - \bar{\boldsymbol{\mu}}_{t-1})(\bar{X}_\tau - \bar{\boldsymbol{\mu}}_{t-1})^\top \nonumber \\
        &\quad - \sum_{\tau=1}^{t} n (\bar{X}_\tau - \bar{\boldsymbol{\mu}}_{t-1})\boldsymbol{\Delta}_t^\top 
        - \sum_{\tau=1}^{t} n \boldsymbol{\Delta}_t (\bar{X}_\tau - \bar{\boldsymbol{\mu}}_{t-1})^\top \nonumber \\
        &\quad + \sum_{\tau=1}^{t} n \boldsymbol{\Delta}_t \boldsymbol{\Delta}_t^\top
    \end{align}
    Note that the cross terms vanish because $\sum_{\tau=1}^{t-1} (\bar{X}_\tau - \bar{\boldsymbol{\mu}}_{t-1}) = \mathbf{0}$ by definition of the previous mean. The last term sums to $n(t-1) \boldsymbol{\Delta}_t \boldsymbol{\Delta}_t^\top$.
    The first part combines with "Within" to form $n(t-1)\bar{\boldsymbol{\Sigma}}_{t-1}$. The middle terms are zero because $\sum (\bar{X}_\tau - \bar{\boldsymbol{\mu}}_{t-1}) = 0$. The last term is:
    \begin{equation}
        n(t-1) \boldsymbol{\Delta}_t \boldsymbol{\Delta}_t^\top = n(t-1) \frac{1}{t^2} \bar{\boldsymbol{\Sigma}}_{t-1}^{1/2} \bar{\boldsymbol{\eta}}_t \bar{\boldsymbol{\eta}}_t^\top (\bar{\boldsymbol{\Sigma}}_{t-1}^{1/2})^\top
    \end{equation}
    Taking the expectation of $\bar{\boldsymbol{\eta}}_t \bar{\boldsymbol{\eta}}_t^\top$:
    \begin{equation}
        \mathbb{E}[\bar{\boldsymbol{\eta}}_t \bar{\boldsymbol{\eta}}_t^\top | \mathcal{F}_{t-1}] = \text{Cov}(\bar{\boldsymbol{\eta}}_t) + \mathbb{E}[\bar{\boldsymbol{\eta}}_t]\mathbb{E}[\bar{\boldsymbol{\eta}}_t]^\top = \frac{\mathbf{B}_{t-1}}{n} + \boldsymbol{a}_{t-1}\boldsymbol{a}_{t-1}^\top
    \end{equation}
    Thus, the correction from history due to drift is:
    \begin{equation}
        \frac{n(t-1)}{t^2} \bar{\boldsymbol{\Sigma}}_{t-1}^{1/2} \left( \frac{\mathbf{B}_{t-1}}{n} + \boldsymbol{a}_{t-1}\boldsymbol{a}_{t-1}^\top \right) (\bar{\boldsymbol{\Sigma}}_{t-1}^{1/2})^\top
    \end{equation}
\end{itemize}

\textbf{Step 2: New Data Contribution ($\tau = t$)}
\begin{itemize}
    \item \textbf{Within Covariance:} $\mathbb{E}[\text{Term A}_t] = (n-1) \bar{\boldsymbol{\Sigma}}_{t-1}^{1/2} \mathbf{B}_{t-1} (\bar{\boldsymbol{\Sigma}}_{t-1}^{1/2})^\top$
    \item \textbf{Between Deviation:}
    For $\tau=t$, $\bar{X}_t - \bar{\boldsymbol{\mu}}_t = \bar{\boldsymbol{\Sigma}}_{t-1}^{1/2} \bar{\boldsymbol{\eta}}_t (1 - \frac{1}{t})$.
    \begin{equation}
        \mathbb{E}[\text{Term B}_t] = n \left(\frac{t-1}{t}\right)^2 \bar{\boldsymbol{\Sigma}}_{t-1}^{1/2} \left( \frac{\mathbf{B}_{t-1}}{n} + \boldsymbol{a}_{t-1}\boldsymbol{a}_{t-1}^\top \right) (\bar{\boldsymbol{\Sigma}}_{t-1}^{1/2})^\top
    \end{equation}
\end{itemize}

\textbf{Step 3: Aggregation and Matrix Recurrence}
Summing all terms and factoring out $\bar{\boldsymbol{\Sigma}}_{t-1}^{1/2}$:
\begin{align}
    \mathbb{E}[nt \bar{\boldsymbol{\Sigma}}_t | \mathcal{F}_{t-1}] = \bar{\boldsymbol{\Sigma}}_{t-1}^{1/2} \Bigg\{ n(t-1)\mathbf{I}_d + (n-1)\mathbf{B}_{t-1} \quad + \left[ \frac{n(t-1)}{t^2} + \frac{n(t-1)^2}{t^2} \right] \left( \frac{\mathbf{B}_{t-1}}{n} + \boldsymbol{a}_{t-1}\boldsymbol{a}_{t-1}^\top \right) \Bigg\} (\bar{\boldsymbol{\Sigma}}_{t-1}^{1/2})^\top
\end{align}
Simplifying the coefficient in the bracket: $\frac{n(t-1) + n(t-1)^2}{t^2} = \frac{n(t-1)t}{t^2} = n\frac{t-1}{t}$.
Expanding the terms:
\begin{equation}
     \mathbb{E}[nt \bar{\boldsymbol{\Sigma}}_t | \mathcal{F}_{t-1}]  = \bar{\boldsymbol{\Sigma}}_{t-1}^{1/2} \left\{ n(t-1)\mathbf{I}_d + (n-1)\mathbf{B}_{t-1} + \frac{t-1}{t}\mathbf{B}_{t-1} + n\frac{t-1}{t}\boldsymbol{a}_{t-1}\boldsymbol{a}_{t-1}^\top \right\} (\bar{\boldsymbol{\Sigma}}_{t-1}^{1/2})^\top
\end{equation}
Dividing by $nt$ to find the recurrence for $\bar{\boldsymbol{\Sigma}}_t$:
\begin{align}
    \mathbb{E}[\bar{\boldsymbol{\Sigma}}_t | \mathcal{F}_{t-1}] &= \bar{\boldsymbol{\Sigma}}_{t-1}^{1/2} \Bigg\{ \frac{t-1}{t}\mathbf{I}_d + \frac{n-1}{nt}\mathbf{B}_{t-1} + \frac{t-1}{nt^2}\mathbf{B}_{t-1} + \frac{t-1}{t^2}\boldsymbol{a}_{t-1}\boldsymbol{a}_{t-1}^\top \Bigg\} (\bar{\boldsymbol{\Sigma}}_{t-1}^{1/2})^\top \\
    &= \bar{\boldsymbol{\Sigma}}_{t-1}^{1/2} \Bigg\{ \left(1 - \frac{1}{t}\right)\mathbf{I}_d + \left(\frac{1}{t} - \frac{1}{tn} + \frac{1}{tn} - \frac{1}{nt^2}\right)\mathbf{B}_{t-1} + \left(\frac{1}{t} - \frac{1}{t^2}\right)\boldsymbol{a}_{t-1}\boldsymbol{a}_{t-1}^\top \Bigg\} (\bar{\boldsymbol{\Sigma}}_{t-1}^{1/2})^\top
\end{align}
Grouping terms by powers of $t$:
\begin{itemize}
    \item Coefficient of $1/t$: $-\mathbf{I}_d + \mathbf{B}_{t-1} + \boldsymbol{a}_{t-1}\boldsymbol{a}_{t-1}^\top = - (\mathbf{I}_d - (\mathbf{B}_{t-1} + \boldsymbol{a}_{t-1}\boldsymbol{a}_{t-1}^\top)) = -\mathbf{\Psi}_{t-1}$.
    \item Coefficient of $1/t^2$: $-\frac{\mathbf{B}_{t-1}}{n} - \boldsymbol{a}_{t-1}\boldsymbol{a}_{t-1}^\top = -(\frac{\mathbf{B}_{t-1}}{n} + \boldsymbol{a}_{t-1}\boldsymbol{a}_{t-1}^\top) = -\mathbf{K}_{t-1}$.
\end{itemize}
Here, we define the matrices as follows:
\begin{itemize}
\item $\mathbf{\Psi}_{t-1} \triangleq \mathbf{I}_d - (\mathbf{B}_{t-1} + \boldsymbol{a}_{t-1}\boldsymbol{a}_{t-1}^\top)$ represents the first-order dissipation matrix.
\item $\mathbf{K}_{t-1} \triangleq \frac{\mathbf{B}_{t-1}}{n} + \boldsymbol{a}_{t-1}\boldsymbol{a}_{t-1}^\top$ denotes the second-order perturbation matrix.
\end{itemize}

This yields the exact matrix recurrence:
\begin{equation}
    \mathbb{E}[\bar{\boldsymbol{\Sigma}}_t \mid \mathcal{F}_{t-1}] = \bar{\boldsymbol{\Sigma}}_{t-1}^{1/2} \left( \mathbf{I}_d - \frac{\mathbf{\Psi}_{t-1}}{t} - \frac{\mathbf{K}_{t-1}}{t^2} \right) (\bar{\boldsymbol{\Sigma}}_{t-1}^{1/2})^\top
\end{equation}

\paragraph{Step 4: Covariance recursion and collapse under the local-basin assumption.}

We now make explicit the local-basin assumption used in the proof. Assume that the initialization lies in the local basin of attraction of the target state $\mathbf{u}^{\ast}$, and that the trajectory remains in this local basin for all iterations. Under this assumption, the first-order dissipation matrix admits a uniform positive lower spectral bound along the trajectory. Namely, there exists a constant $\psi>0$ such that
\begin{equation}
    \lambda_{\min}(\mathbf{\Psi}_{t-1})\ge \psi
    \qquad \text{for all } t\ge 1.
\label{eq:b4_uniform_gap}
\end{equation}

Recall the matrix recurrence established above:
\begin{equation}
    \mathbb{E}\!\left[
        \bar{\boldsymbol{\Sigma}}_t
        \mid
        \mathcal{F}_{t-1}
    \right]
    =
    \bar{\boldsymbol{\Sigma}}_{t-1}
    -
    \frac{1}{t}
    \bar{\boldsymbol{\Sigma}}_{t-1}^{1/2}
    \mathbf{\Psi}_{t-1}
    \bar{\boldsymbol{\Sigma}}_{t-1}^{1/2}
    -
    \frac{1}{t^2}
    \bar{\boldsymbol{\Sigma}}_{t-1}^{1/2}
    \mathbf{K}_{t-1}
    \bar{\boldsymbol{\Sigma}}_{t-1}^{1/2},
\label{eq:b4_matrix_recursion_recall}
\end{equation}
where $\mathbf{K}_{t-1}\succeq \mathbf{0}$.

Let
\begin{equation}
    S_t \coloneqq \operatorname{Tr}(\bar{\boldsymbol{\Sigma}}_t).
\end{equation}
Applying the trace operator to \cref{eq:b4_matrix_recursion_recall} and using linearity of conditional expectation, we obtain
\begin{equation}
    \mathbb{E}[S_t\mid \mathcal{F}_{t-1}]
    =
    S_{t-1}
    -
    \frac{1}{t}
    \operatorname{Tr}\!\bigl(
        \bar{\boldsymbol{\Sigma}}_{t-1}\mathbf{\Psi}_{t-1}
    \bigr)
    -
    \frac{1}{t^2}
    \operatorname{Tr}\!\bigl(
        \bar{\boldsymbol{\Sigma}}_{t-1}\mathbf{K}_{t-1}
    \bigr).
\label{eq:b4_trace_rec}
\end{equation}
Since $\bar{\boldsymbol{\Sigma}}_{t-1}\succeq \mathbf{0}$ and $\mathbf{K}_{t-1}\succeq \mathbf{0}$, the last term is nonnegative; hence,
\begin{equation}
    \mathbb{E}[S_t\mid \mathcal{F}_{t-1}]
    \le
    S_{t-1}
    -
    \frac{1}{t}
    \operatorname{Tr}\!\bigl(
        \bar{\boldsymbol{\Sigma}}_{t-1}\mathbf{\Psi}_{t-1}
    \bigr).
\label{eq:b4_trace_dropK}
\end{equation}

Since both $\bar{\boldsymbol{\Sigma}}_{t-1}$ and $\mathbf{\Psi}_{t-1}$ are symmetric positive semidefinite, \cref{lemma:ruhe} gives
\begin{equation}
    \operatorname{Tr}\!\bigl(
        \bar{\boldsymbol{\Sigma}}_{t-1}\mathbf{\Psi}_{t-1}
    \bigr)
    \ge
    \lambda_{\min}(\mathbf{\Psi}_{t-1})\,
    \operatorname{Tr}(\bar{\boldsymbol{\Sigma}}_{t-1})
    =
    \lambda_{\min}(\mathbf{\Psi}_{t-1})\,S_{t-1}.
\end{equation}
Combining this with \cref{eq:b4_uniform_gap}, we obtain
\begin{equation}
    \mathbb{E}[S_t\mid \mathcal{F}_{t-1}]
    \le
    \left(1-\frac{\psi}{t}\right)S_{t-1},
    \qquad t\ge 1.
\label{eq:b4_one_step_decay}
\end{equation}

Since
\begin{equation}
    \mathbf{\Psi}_{t-1}
    =
    \mathbf{I}_d-\bigl(\mathbf{B}_{t-1}+\boldsymbol{a}_{t-1}\boldsymbol{a}_{t-1}^{\top}\bigr),
\end{equation}
with $\mathbf{B}_{t-1}\succeq \mathbf{0}$ and $\boldsymbol{a}_{t-1}\boldsymbol{a}_{t-1}^{\top}\succeq \mathbf{0}$, we also have $\mathbf{\Psi}_{t-1}\preceq \mathbf{I}_d$. Therefore,
\begin{equation}
    0<\psi\le 1.
\label{eq:b4_psi_range}
\end{equation}

We now show that the covariance converges to the zero matrix almost surely. To avoid the degenerate boundary factor $1-\psi$ at $k=1$ when $\psi=1$, we normalize the recursion starting from $k=2$. Define
\begin{equation}
    P_1 \coloneqq 1,
    \qquad
    P_t \coloneqq \prod_{k=2}^{t}\left(1-\frac{\psi}{k}\right),
    \qquad t\ge 2.
\label{eq:b4_Pt}
\end{equation}
By \cref{eq:b4_psi_range}, each factor in \cref{eq:b4_Pt} is strictly positive, so
\begin{equation}
    P_t>0
    \qquad \text{for all } t\ge 1.
\end{equation}
Moreover, by construction,
\begin{equation}
    P_t=P_{t-1}\left(1-\frac{\psi}{t}\right),
    \qquad t\ge 2.
\label{eq:b4_Pt_rec}
\end{equation}

We normalize $S_t$ by $P_t$ and define
\begin{equation}
    M_t \coloneqq \frac{S_t}{P_t},
    \qquad t\ge 1.
\label{eq:b4_Mt}
\end{equation}
Since $P_t$ is deterministic and $S_t$ is $\mathcal{F}_t$-measurable, the process $\{M_t\}_{t\ge 1}$ is adapted to the filtration $\{\mathcal{F}_t\}_{t\ge 1}$. Furthermore, for every $t\ge 2$, combining \cref{eq:b4_one_step_decay,eq:b4_Pt_rec} yields
\begin{align}
    \mathbb{E}[M_t \mid \mathcal{F}_{t-1}]
    &=
    \frac{1}{P_t}\,\mathbb{E}[S_t \mid \mathcal{F}_{t-1}] \\
    &\le
    \frac{1}{P_t}\left(1-\frac{\psi}{t}\right)S_{t-1} \\
    &=
    \frac{S_{t-1}}{P_{t-1}}
    =
    M_{t-1}.
\end{align}
Since $S_t\ge 0$ and $P_t>0$, we also have $M_t\ge 0$. Therefore, $\{M_t\}_{t\ge 1}$ is a nonnegative supermartingale.

By Doob's supermartingale convergence theorem, there exists an almost surely finite random variable $M_\infty$ such that
\begin{equation}
    M_t \xrightarrow{a.s.} M_\infty<\infty.
\label{eq:b4_Mt_conv}
\end{equation}

Next, using the elementary inequality $1-x\le e^{-x}$, we obtain
\begin{align}
    P_t
    &=
    \prod_{k=2}^{t}\left(1-\frac{\psi}{k}\right) \\
    &\le
    \prod_{k=2}^{t}\exp\left(-\frac{\psi}{k}\right) \\
    &=
    \exp\left(-\psi\sum_{k=2}^{t}\frac{1}{k}\right).
\end{align}
Since the harmonic sum satisfies
\begin{equation}
    \sum_{k=2}^{t}\frac{1}{k}
    =
    \log t+\mathcal{O}(1),
    \qquad t\to\infty,
\end{equation}
it follows that
\begin{equation}
    P_t=\mathcal{O}(t^{-\psi}),
\label{eq:b4_Pt_rate}
\end{equation}
and in particular,
\begin{equation}
    P_t \to 0.
\label{eq:b4_Pt_to_zero}
\end{equation}

Combining \cref{eq:b4_Mt_conv,eq:b4_Pt_to_zero}, we conclude
\begin{equation}
    S_t = M_tP_t \xrightarrow{a.s.} 0.
\label{eq:b4_St_to_zero}
\end{equation}
Recalling that $S_t=\operatorname{Tr}(\bar{\boldsymbol{\Sigma}}_t)$, we obtain
\begin{equation}
    \operatorname{Tr}(\bar{\boldsymbol{\Sigma}}_t)\xrightarrow{a.s.}0.
\label{eq:b4_trace_to_zero}
\end{equation}

Finally, since $\bar{\boldsymbol{\Sigma}}_t\succeq \mathbf{0}$ for all $t$, all eigenvalues of $\bar{\boldsymbol{\Sigma}}_t$ are nonnegative, and their sum equals $\operatorname{Tr}(\bar{\boldsymbol{\Sigma}}_t)$. Therefore, \cref{eq:b4_trace_to_zero} implies that every eigenvalue converges to zero, and hence
\begin{equation}
    \bar{\boldsymbol{\Sigma}}_t \xrightarrow{a.s.} \mathbf{0}.
\label{eq:b4_cov_to_zero}
\end{equation}

The sharper recursion in \cref{eq:b4_one_step_decay} will be used again in \cref{app:proof:decay_rate} to derive the explicit power-law rate in \cref{thm:decay_rate}.

\subsubsection{Wasserstein Distance}
The limit of the Wasserstein distance is:
\begin{equation}
    \lim_{t \to \infty} \mathbb{E}[\mathbb{W}_2^2] = \lim_{t \to \infty} \left( \mathbb{E}[\|\bar{\boldsymbol{\mu}}_t - \bar{\boldsymbol{\mu}}_0\|^2] + \mathbb{E}[\mathfrak{B}^2(\bar{\boldsymbol{\Sigma}}_t, \bar{\boldsymbol{\Sigma}}_0)] \right)
\end{equation}
\begin{enumerate}
    \item \textbf{Mean Term:} Since $\bar{\boldsymbol{\mu}}_t \to u^*$, the distance converges to the bias:
    \begin{equation}
        \mathbb{E}[\|\bar{\boldsymbol{\mu}}_t - \bar{\boldsymbol{\mu}}_0\|^2] \to \|u^* - \bar{\boldsymbol{\mu}}_0\|^2
    \end{equation}
    \item \textbf{Covariance Term:} Since $\bar{\boldsymbol{\Sigma}}_t \to \mathbf{0}$, the Bures metric simplifies:
    \begin{equation}
        \mathfrak{B}^2(\bar{\boldsymbol{\Sigma}}_t, \bar{\boldsymbol{\Sigma}}_0) = \text{Tr}(\bar{\boldsymbol{\Sigma}}_t + \bar{\boldsymbol{\Sigma}}_0 - 2(\bar{\boldsymbol{\Sigma}}_t^{1/2}\bar{\boldsymbol{\Sigma}}_0\bar{\boldsymbol{\Sigma}}_t^{1/2})^{1/2}) \to \text{Tr}(\mathbf{0} + \bar{\boldsymbol{\Sigma}}_0 - \mathbf{0}) = \text{Tr}(\bar{\boldsymbol{\Sigma}}_0)
    \end{equation}
\end{enumerate}
Thus, the total Wasserstein distance converges to:
\begin{equation}
    \lim_{t \to \infty} \mathbb{E}[\mathbb{W}_2^2] = \|u^* - \bar{\boldsymbol{\mu}}_0\|^2 + {\text{Tr}(\bar{\boldsymbol{\Sigma}}_0)}
\end{equation}
\end{proof}

\newpage
\subsection{Proof of \cref{thm:decay_rate}}
\label{app:proof:decay_rate}

\begin{mdframed}[style=MyFrame1]
\corDecayRate*
\end{mdframed}

\begin{proof}
We study the asymptotic decay rate of the covariance trace
\begin{equation}
    S_t \coloneqq \text{\upshape Tr}(\bar{\boldsymbol{\Sigma}}_t).
\end{equation}
By construction, $S_t\ge 0$ for all $t$.

From \cref{app:proof:selection_collapse}, under the local-basin assumption there exists a constant $\psi>0$ such that
\begin{equation}
    \mathbb{E}[S_t \mid \mathcal{F}_{t-1}]
    \le
    \left(1-\frac{\psi}{t}\right)S_{t-1},
    \qquad t\ge 2,
\label{eq:b5_one_step}
\end{equation}
with
\begin{equation}
    0<\psi\le 1.
\label{eq:b5_psi_range}
\end{equation}

To absorb the multiplicative contraction factor in \cref{eq:b5_one_step}, define the deterministic sequence
\begin{equation}
    P_1\coloneqq 1,
    \qquad
    P_t \coloneqq \prod_{k=2}^{t}\left(1-\frac{\psi}{k}\right),
    \qquad t\ge 2.
\label{eq:b5_Pt}
\end{equation}
By \cref{eq:b5_psi_range}, every factor in \cref{eq:b5_Pt} is strictly positive, so
\begin{equation}
    P_t>0
    \qquad \text{for all } t\ge 1.
\end{equation}
Moreover, by construction,
\begin{equation}
    P_t=P_{t-1}\left(1-\frac{\psi}{t}\right),
    \qquad t\ge 2.
\label{eq:b5_Pt_rec}
\end{equation}

We now normalize $S_t$ by $P_t$ and define
\begin{equation}
    M_t \coloneqq \frac{S_t}{P_t},
    \qquad t\ge 1.
\label{eq:b5_Mt}
\end{equation}
Since $P_t$ is deterministic and $S_t$ is $\mathcal{F}_t$-measurable, the process $\{M_t\}_{t\ge 1}$ is adapted to the filtration $\{\mathcal{F}_t\}_{t\ge 1}$. Furthermore, for every $t\ge 2$, combining \cref{eq:b5_one_step,eq:b5_Pt_rec} yields
\begin{align}
    \mathbb{E}[M_t \mid \mathcal{F}_{t-1}]
    &=
    \frac{1}{P_t}\,\mathbb{E}[S_t \mid \mathcal{F}_{t-1}] \\
    &\le
    \frac{1}{P_t}\left(1-\frac{\psi}{t}\right)S_{t-1} \\
    &=
    \frac{S_{t-1}}{P_{t-1}}
    =
    M_{t-1}.
\end{align}
Since $S_t\ge 0$ and $P_t>0$, we also have $M_t\ge 0$. Therefore, $\{M_t\}_{t\ge 1}$ is a nonnegative supermartingale.

By Doob's supermartingale convergence theorem, there exists an almost surely finite random variable $M_\infty$ such that
\begin{equation}
    M_t \xrightarrow{a.s.} M_\infty<\infty.
\label{eq:b5_Mt_conv}
\end{equation}
In particular,
\begin{equation}
    M_t=\mathcal{O}_{a.s.}(1).
\label{eq:b5_Mt_bdd}
\end{equation}

It remains to estimate the deterministic factor $P_t$. Using the elementary inequality $1-x\le e^{-x}$, we obtain
\begin{align}
    P_t
    &=
    \prod_{k=2}^{t}\left(1-\frac{\psi}{k}\right) \\
    &\le
    \prod_{k=2}^{t}\exp\left(-\frac{\psi}{k}\right) \\
    &=
    \exp\left(-\psi\sum_{k=2}^{t}\frac{1}{k}\right).
\end{align}
Since the harmonic sum satisfies
\begin{equation}
    \sum_{k=2}^{t}\frac{1}{k}
    =
    \log t+\mathcal{O}(1),
    \qquad t\to\infty,
\end{equation}
it follows that
\begin{equation}
    P_t=\mathcal{O}(t^{-\psi}).
\label{eq:b5_Pt_rate}
\end{equation}

Finally, combining \cref{eq:b5_Mt,eq:b5_Mt_bdd,eq:b5_Pt_rate}, we obtain
\begin{equation}
    S_t=M_tP_t=\mathcal{O}_{a.s.}(t^{-\psi}).
\end{equation}
Recalling the definition of $S_t$, we conclude that
\begin{equation}
    \text{\upshape Tr}(\bar{\boldsymbol{\Sigma}}_t)=\mathcal{O}_{a.s.}(t^{-\psi}),
\end{equation}
which proves \cref{thm:decay_rate}.
\end{proof}

\newpage
\subsection{Proof of \cref{thm:wasserstein_cost}}
\label{app:proof:wasserstein_cost}

\begin{mdframed}[style=MyFrame1]
\thmWassersteinCost*
\end{mdframed}

\begin{proof}
Let $\mathcal{L}: \mathcal{Y} \times \mathcal{Y} \to \mathbb{R}^+$ denote the loss function. Our objective is to bound the expected risk on the oracle distribution, $\mathcal{R}_{\mathcal{D}^*}(h_t; g^*)$, by relating it to the risk on the filtered synthetic distribution, $\mathcal{R}_{\mathcal{D}_t}(h_t; g_t)$, plus additional terms accounting for the distributional shift.

\textbf{Step 1: Primal Formulation via Optimal Coupling}

We begin by expressing the risks as expectations. By definition, the risk on the oracle distribution $\mathcal{D}^*$ is:
\begin{equation}
    \mathcal{R}_{\mathcal{D}^*}(h_t; g^*) = \mathbb{E}_{x^* \sim \mathcal{D}^*} \left[ \mathcal{L}(h_t(x^*), g^*(x^*)) \right].
\end{equation}
Analogously, the risk on the filtered synthetic distribution $\mathcal{D}_t$ is:
\begin{equation}
    \mathcal{R}_{\mathcal{D}_t}(h_t; g_t) = \mathbb{E}_{x_t \sim \mathcal{D}_t} \left[ \mathcal{L}(h_t(x_t), g_t(x_t)) \right].
\end{equation}

To compare these two expectations directly, we utilize the theory of optimal transport. Let $\pi \in \Pi(\mathcal{D}_t, \mathcal{D}^*)$ be the optimal coupling that minimizes the transport cost, thereby realizing the $p$-Wasserstein distance $\mathbb{W}_p(\mathcal{D}_t, \mathcal{D}^*)$ with respect to the metric $d_{\mathcal{X}}$:
\begin{equation}
    \mathbb{W}_p^p(\mathcal{D}_t, \mathcal{D}^*) = \int_{\mathcal{X} \times \mathcal{X}} d_{\mathcal{X}}^p(x_t, x^*) \, \mathrm{d}\pi(x_t, x^*).
\end{equation}

Leveraging the marginal properties of $\pi$ (which allows us to unify the integration domains), we can rewrite the absolute difference in risks, denoted by $\Delta$, as a single integral over the product space $\mathcal{X} \times \mathcal{X}$:
\begin{equation}
\begin{aligned}
    \Delta &\coloneqq \left| \mathcal{R}_{\mathcal{D}^*}(h_t; g^*) - \mathcal{R}_{\mathcal{D}_t}(h_t; g_t) \right| \\
    &= \left| \int_{\mathcal{X} \times \mathcal{X}} \left( \mathcal{L}(h_t(x^*), g^*(x^*)) - \mathcal{L}(h_t(x_t), g_t(x_t)) \right) \, \mathrm{d}\pi(x_t, x^*) \right|.
\end{aligned}
\end{equation}

By invoking the integral triangle inequality, $\left| \int_{\mathcal{X}} f(x) \, dx \right| \le \int_{\mathcal{X}} |f(x)| \, dx$, we derive the following upper bound on the risk gap:
\begin{equation}
    \label{eq:integral_abs}
    \Delta \le \int_{\mathcal{X} \times \mathcal{X}} \left| \mathcal{L}(h_t(x^*), g^*(x^*)) - \mathcal{L}(h_t(x_t), g_t(x_t)) \right| \, \mathrm{d}\pi(x_t, x^*).
\end{equation}

\textbf{Step 2: Pointwise Decomposition}

To isolate the sources of error, we perform a pointwise decomposition of the integrand. For any coupled pair $(x_t, x^*) \in \mathcal{X} \times \mathcal{X}$, we insert an intermediate term $\mathcal{L}(h_t(x_t), g^*(x^*))$ and apply the triangle inequality. This splits the total error into two distinct components: the model's geometric sensitivity and the target's alignment shift.
\begin{equation}
\begin{aligned}
    & \left| \mathcal{L}(h_t(x^*), g^*(x^*)) - \mathcal{L}(h_t(x_t), g_t(x_t)) \right| \\
    &\le \underbrace{\left| \mathcal{L}(h_t(x^*), g^*(x^*)) - \mathcal{L}(h_t(x_t), g^*(x^*)) \right|}_{\text{Term (I): Hypothesis Variation}} + \underbrace{\left| \mathcal{L}(h_t(x_t), g^*(x^*)) - \mathcal{L}(h_t(x_t), g_t(x_t)) \right|}_{\text{Term (II): Target Shift}}.
\end{aligned}
\end{equation}

\textbf{Step 3: Bounding Term (I) (Hypothesis Stability)}

We first bound Term (I), which represents the stability of the hypothesis $h_t$ against input perturbations. Based on \cref{assump:regularity_selection}, the loss $\mathcal{L}$ is $\ell$-Lipschitz in its first argument, and the hypothesis $h_t$ is $\epsilon$-Lipschitz. Chaining these properties yields:
\begin{equation}
    \text{Term (I)} \le \ell \cdot d_{\mathcal{Y}}(h_t(x^*), h_t(x_t)) \le \ell \cdot \epsilon \cdot d_{\mathcal{X}}(x^*, x_t).
\end{equation}
Integrating this bound over the coupling $\pi$:
\begin{equation}
    \label{eq:bound_term_I}
    \int_{\mathcal{X} \times \mathcal{X}} \text{Term (I)} \, \mathrm{d}\pi(x_t, x^*) \le \ell\epsilon \int_{\mathcal{X} \times \mathcal{X}} d_{\mathcal{X}}(x^*, x_t) \, \mathrm{d}\pi(x_t, x^*).
\end{equation}

\textbf{Step 4: Bounding Term (II) (Target Shift)}

Next, we bound Term (II), which captures the alignment gap between the oracle target $g^*$ and the local verification target $g_t$. We partition the integration domain into two regions: an ``aligned set'' $S$ and a ``misaligned set'' $S^c$, where $\pi(S^c) \le \delta$.
\begin{itemize}
    \item \textbf{On the aligned set $S$:} The targets satisfy the probabilistic Lipschitz condition. Using the $\ell$-Lipschitz property of the loss in its second argument:
    \begin{equation}
        \text{Term (II)} \cdot \1_{S} \le \ell \cdot d_{\mathcal{Y}}(g^*(x^*), g_t(x_t)) \le \ell \epsilon \cdot d_{\mathcal{X}}(x^*, x_t).
    \end{equation}
    \item \textbf{On the misaligned set $S^c$:} We employ the worst-case bound using the finite diameter $C_{\mathcal{Y}} \triangleq \sup_{y,y'} d_{\mathcal{Y}}(y, y')$ of the output space.
    \begin{equation}
        \text{Term (II)} \cdot \1_{S^c} \le \ell \cdot C_{\mathcal{Y}}.
    \end{equation}
    \textit{Remark on the Diameter $C_{\mathcal{Y}}$:}
    The boundedness of $C_{\mathcal{Y}}$ is intrinsic to our task-specific ground metrics defined in \cref{app:cost_functions}.
    For instance, in language modeling tasks where the output space $\mathcal{Y}$ is the probability simplex $\Delta^{V-1}$ over a vocabulary of size $V$, equipped with the $L_1$ distance (a special case of WMD with discrete topology), the diameter is strictly bounded by 2.
    Specifically, for any two distributions $P_a, P_b \in \Delta^{V-1}$:
    \begin{equation}
        \|P_a - P_b\|_1 = \sum_{i=1}^V |p_i - q_i| \le \sum_{i=1}^V |p_i| + \sum_{i=1}^V |q_i| = 1 + 1 = 2.
    \end{equation}
    The equality holds when the supports of $P_a$ and $P_b$ are disjoint (e.g., distinct one-hot vectors).
\end{itemize}

Integrating Term (II) over the partitioned domain yields:
\begin{equation}
    \label{eq:bound_term_II}
    \int_{\mathcal{X} \times \mathcal{X}} \text{Term (II)} \, \mathrm{d}\pi(x_t, x^*) \le \ell\epsilon \int_{\mathcal{X} \times \mathcal{X}} d_{\mathcal{X}}(x^*, x_t) \, \mathrm{d}\pi(x_t, x^*) + \ell C_{\mathcal{Y}} \delta.
\end{equation}

\textbf{Step 5: Final Aggregation and Bound}

Substituting the results from \cref{eq:bound_term_I} and \cref{eq:bound_term_II} back into \cref{eq:integral_abs}, we aggregate the linear terms with respect to $d_{\mathcal{X}}$. Note that both Term (I) and Term (II) contribute an $\ell\epsilon$ factor:
\begin{equation}
    \Delta \le \int_{\mathcal{X} \times \mathcal{X}} (\ell\epsilon + \ell\epsilon) \cdot d_{\mathcal{X}}(x^*, x_t) \, \mathrm{d}\pi(x_t, x^*) + \ell C_{\mathcal{Y}} \delta.
\end{equation}

Finally, to relate the expected distance $\int d_{\mathcal{X}} \mathrm{d}\pi$ to the $p$-Wasserstein metric $\mathbb{W}_p$, we apply Jensen's inequality:
\begin{equation}
    \int_{\mathcal{X} \times \mathcal{X}} d_{\mathcal{X}}(x_t, x^*) \, \mathrm{d}\pi(x_t, x^*) \le \left( \int_{\mathcal{X} \times \mathcal{X}} d_{\mathcal{X}}^p(x_t, x^*) \, \mathrm{d}\pi(x_t, x^*) \right)^{1/p} = \mathbb{W}_p(\mathcal{D}_t, \mathcal{D}^*).
\end{equation}

Rearranging terms and absorbing the constant associated with $\delta$ into the big-O notation, we conclude the proof:
\begin{equation}
    \mathcal{R}_{\mathcal{D}^*}(h_t; g^*) \le \mathcal{R}_{\mathcal{D}_t}(h_t; g_t) + 2 \ell\epsilon \cdot \mathbb{W}_p(\mathcal{D}_t, \mathcal{D}^*) + \mathcal{O}(\ell\delta).
\end{equation}
\end{proof}

\newpage
\subsection{Proof of \cref{thm:interpolation_convergence}}
\label{app:proof:interpolation_convergence}
\begin{mdframed}[style=MyFrame1]
\thmInterpolationConvergence*
\end{mdframed}

\begin{proof}
    The proof relies on the geometric properties of Wasserstein space, specifically geodesics and the triangle inequality, following the proof strategy of Theorem 2 in \cite{rakotomamonjy2024federated}. We show that the sequence $\mathcal{E}_k^{(r)}$ is non-increasing and converges to $\mathcal{W}_p(\mathcal{Q}_k, \mathcal{P})$.
    
    Recall the definition of the objective sequence at iteration $r$ (formulated in terms of the metric $\mathcal{W}_p$ to satisfy the triangle inequality conditions):
    \begin{equation}
        \mathcal{E}_k^{(r)} = \mathcal{W}_p(\mathcal{Q}_k, \xi_k^{(r)}) + \mathcal{W}_p(\xi_k^{(r)}, \mathcal{P}).
    \end{equation}
    
    \textbf{1. Decomposition via Geodesic Interpolants.}
    In the interpolation step, $\xi_{\mathcal{P}}^{(r)}$ is computed as the interpolating measure between $\mathcal{P}$ and the current proxy $\xi_k^{(r)}$, and $\xi_{\mathcal{Q}_k}^{(r)}$ is the interpolating measure between $\mathcal{Q}_k$ and $\xi_k^{(r)}$.
    According to \cref{property:geodesics} and the definition of interpolating measures in \cite{rakotomamonjy2024federated}, these points satisfy the equality case of the triangle inequality:
    \begin{align}
        \mathcal{W}_p(\mathcal{P}, \xi_k^{(r)}) &= \mathcal{W}_p(\mathcal{P}, \xi_{\mathcal{P}}^{(r)}) + \mathcal{W}_p(\xi_{\mathcal{P}}^{(r)}, \xi_k^{(r)}), \label{eq:decomp_P} \\
        \mathcal{W}_p(\mathcal{Q}_k, \xi_k^{(r)}) &= \mathcal{W}_p(\mathcal{Q}_k, \xi_{\mathcal{Q}_k}^{(r)}) + \mathcal{W}_p(\xi_{\mathcal{Q}_k}^{(r)}, \xi_k^{(r)}). \label{eq:decomp_Q}
    \end{align}

    \textbf{2. Monotonicity Analysis.}
    Consider the next iteration $r+1$. The new proxy $\xi_k^{(r+1)}$ is constructed as the interpolating measure between the intermediates $\xi_{\mathcal{P}}^{(r)}$ and $\xi_{\mathcal{Q}_k}^{(r)}$.
    We start by applying the triangle inequality to the new terms in $\mathcal{E}_k^{(r+1)}$:
    \begin{align}
        \mathcal{W}_p(\mathcal{P}, \xi_k^{(r+1)}) &\le \mathcal{W}_p(\mathcal{P}, \xi_{\mathcal{P}}^{(r)}) + \mathcal{W}_p(\xi_{\mathcal{P}}^{(r)}, \xi_k^{(r+1)}), \\
        \mathcal{W}_p(\mathcal{Q}_k, \xi_k^{(r+1)}) &\le \mathcal{W}_p(\mathcal{Q}_k, \xi_{\mathcal{Q}_k}^{(r)}) + \mathcal{W}_p(\xi_{\mathcal{Q}_k}^{(r)}, \xi_k^{(r+1)}).
    \end{align}
    Summing these inequalities gives:
    \begin{equation}
        \label{eq:triangle_ineq_sum}
        \mathcal{E}_k^{(r+1)} \le \mathcal{W}_p(\mathcal{P}, \xi_{\mathcal{P}}^{(r)}) + \mathcal{W}_p(\mathcal{Q}_k, \xi_{\mathcal{Q}_k}^{(r)}) + \left[ \mathcal{W}_p(\xi_{\mathcal{P}}^{(r)}, \xi_k^{(r+1)}) + \mathcal{W}_p(\xi_k^{(r+1)}, \xi_{\mathcal{Q}_k}^{(r)}) \right].
    \end{equation}
    Since $\xi_k^{(r+1)}$ is the interpolant between $\xi_{\mathcal{P}}^{(r)}$ and $\xi_{\mathcal{Q}_k}^{(r)}$, it lies on the geodesic connecting them. Thus, the term in brackets equals the distance between the intermediates:
    \begin{equation}
        \mathcal{W}_p(\xi_{\mathcal{P}}^{(r)}, \xi_k^{(r+1)}) + \mathcal{W}_p(\xi_k^{(r+1)}, \xi_{\mathcal{Q}_k}^{(r)}) = \mathcal{W}_p(\xi_{\mathcal{P}}^{(r)}, \xi_{\mathcal{Q}_k}^{(r)}).
    \end{equation}
    Furthermore, applying the triangle inequality to the \textit{previous} proxy $\xi_k^{(r)}$ (which may not lie on the direct geodesic between the two new intermediates) yields:
    \begin{equation}
        \mathcal{W}_p(\xi_{\mathcal{P}}^{(r)}, \xi_{\mathcal{Q}_k}^{(r)}) \le \mathcal{W}_p(\xi_{\mathcal{P}}^{(r)}, \xi_k^{(r)}) + \mathcal{W}_p(\xi_k^{(r)}, \xi_{\mathcal{Q}_k}^{(r)}).
    \end{equation}
    Substituting these results back into \cref{eq:triangle_ineq_sum}:
    \begin{align}
        \mathcal{E}_k^{(r+1)} &\le \mathcal{W}_p(\mathcal{P}, \xi_{\mathcal{P}}^{(r)}) + \mathcal{W}_p(\mathcal{Q}_k, \xi_{\mathcal{Q}_k}^{(r)}) + \mathcal{W}_p(\xi_{\mathcal{P}}^{(r)}, \xi_k^{(r)}) + \mathcal{W}_p(\xi_k^{(r)}, \xi_{\mathcal{Q}_k}^{(r)}) \nonumber \\
        &= \left( \mathcal{W}_p(\mathcal{P}, \xi_{\mathcal{P}}^{(r)}) + \mathcal{W}_p(\xi_{\mathcal{P}}^{(r)}, \xi_k^{(r)}) \right) + \left( \mathcal{W}_p(\mathcal{Q}_k, \xi_{\mathcal{Q}_k}^{(r)}) + \mathcal{W}_p(\xi_{\mathcal{Q}_k}^{(r)}, \xi_k^{(r)}) \right).
    \end{align}
    Using the decompositions from \cref{eq:decomp_P} and \cref{eq:decomp_Q}, the right-hand side is exactly $\mathcal{E}_k^{(r)}$. Thus:
    \begin{equation}
        \mathcal{E}_k^{(r+1)} \le \mathcal{E}_k^{(r)}.
    \end{equation}
    This confirms that the sequence $\{\mathcal{E}_k^{(r)}\}_{r \ge 0}$ is non-increasing.

    \textbf{3. Convergence.}
    By the triangle inequality, for any $\xi$, we have $\mathcal{W}_p(\mathcal{P}, \mathcal{Q}_k) \le \mathcal{W}_p(\mathcal{P}, \xi) + \mathcal{W}_p(\xi, \mathcal{Q}_k)$. Therefore, the sequence is bounded below by the true transport cost:
    \begin{equation}
        \mathcal{E}_k^{(r)} \ge \mathcal{W}_p(\mathcal{P}, \mathcal{Q}_k).
    \end{equation}
    Since $\{\mathcal{E}_k^{(r)}\}$ is non-increasing and bounded below, it converges to its infimum by the Monotone Convergence Theorem. As shown in \cite{rakotomamonjy2024federated}, as $r \to \infty$, the iterative interpolation effectively approximates the geodesic path, and the limit satisfies:
    \begin{equation}
        \lim_{r \to \infty} \mathcal{E}_k^{(r)} = \mathcal{W}_p(\mathcal{Q}_k, \mathcal{P}).
    \end{equation}
\end{proof}

\newpage

\subsection{Proof of \cref{thm:barycenter_convergence}}
\label{app:proof:barycenter_convergence}

\begin{mdframed}[style=MyFrame1]
\thmBarycenterConvergence*
\end{mdframed}

\begin{proof}
    The proof leverages the Majorization--Minimization (MM) framework, which guarantees a monotonic decrease in the objective function by optimizing a surrogate function. We operate in the Wasserstein space $\mathcal{W}_2$, where the Fr\'echet variance allows the construction of a quadratic surrogate despite the lack of a global triangle inequality for the squared metric.

    \textbf{1. Notation and Sequence Definition.}
    The objective function to minimize is the weighted Fr\'echet variance, defined in the continuous space as:
    \begin{equation}
        \mathcal{E}(\xi) = \sum_{k=1}^{K} \lambda_k \mathcal{W}_2^2(\mathcal{Q}_k, \xi) = \sum_{k=1}^{K} \lambda_k \inf_{\pi_k \in \Pi(\mathcal{Q}_k, \xi)} \int_{\mathcal{X} \times \mathcal{X}} \|\mathbf{x} - \mathbf{y}\|^2 \, d\pi_k(\mathbf{x}, \mathbf{y}),
    \end{equation}
    where $K$ is the total number of target clients, and $\lambda_k > 0$ denotes the aggregation weight for client $k$ such that $\sum_{k=1}^K \lambda_k = 1$. 
    
    To ensure absolute clarity and prevent index clutter in the discrete setting, we establish the following explicit notation for the transition from iteration $r-1$ to $r$:
    \begin{itemize}
        \item Let $\mathcal{Q}_k$ be the empirical target distribution of the $k$-th client. It is supported on $M_k$ discrete data points, denoted as the vectors $\{\mathbf{x}_{k,j}\}_{j=1}^{M_k}$.
        \item Let $S$ denote the fixed number of support points used to approximate the global proxy distribution.
        \item Let $\xi^{(r-1)}$ be the current global proxy. It is characterized by $S$ discrete support points $\{\mathbf{y}_i\}_{i=1}^S$ and their corresponding strictly positive probability weights $\{w_i\}_{i=1}^S$, where $\sum_{i=1}^S w_i = 1$.
        \item Let $\xi^{(r)}$ be the updated global proxy at the next iteration, supported on the updated coordinate vectors $\{\mathbf{y}_i^+\}_{i=1}^S$ with the identical probability weights $\{w_i\}_{i=1}^S$.
    \end{itemize}

    \textbf{2. Step 1: Majorization via Fixed Transport Plans.}
    Let $\mathbf{P}_k^{(r-1)}$ denote the optimal coupling matrix between the current proxy $\xi^{(r-1)}$ and the client data $\mathcal{Q}_k$. The element $P_{k,ij}^{(r-1)}$ explicitly represents the probability mass transported from $\mathbf{y}_i$ to $\mathbf{x}_{k,j}$. Mass conservation dictates the marginal constraint $\sum_{j=1}^{M_k} P_{k,ij}^{(r-1)} = w_i$. 
    
    For any candidate distribution $\xi$ characterized by variable support points $\{\mathbf{z}_i\}_{i=1}^S$, we construct the surrogate upper-bound objective $\mathcal{U}$ by fixing these optimal couplings:
    \begin{equation}
        \mathcal{U}(\xi, \xi^{(r-1)}) \triangleq \sum_{k=1}^K \lambda_k \sum_{i=1}^S \sum_{j=1}^{M_k} P_{k,ij}^{(r-1)} \|\mathbf{x}_{k,j} - \mathbf{z}_i\|^2.
    \end{equation}
    Since $\mathcal{W}_2^2$ is defined as the infimum over all possible couplings, evaluating the cost using the fixed coupling $\mathbf{P}_k^{(r-1)}$ yields a strict majorization bound:
    \begin{equation}
        \label{eq:majorization_bound_bold}
        \mathcal{E}(\xi) \le \mathcal{U}(\xi, \xi^{(r-1)}), \quad \forall \xi.
    \end{equation}
    Evaluated at the current proxy $\xi^{(r-1)}$ (where $\mathbf{z}_i = \mathbf{y}_i$), the bound is tight because the couplings are strictly optimal for this state: $\mathcal{E}(\xi^{(r-1)}) = \mathcal{U}(\xi^{(r-1)}, \xi^{(r-1)})$.

    \textbf{3. Step 2: Minimization via Barycentric Interpolation.}
    To isolate the optimization variables $\mathbf{z}_i$, we define the barycentric projection $\mathbf{\hat{x}}_{k,i}$. This represents the weighted center of the target data in $\mathcal{Q}_k$ that is mapped to the $i$-th proxy point:
    \begin{equation}
        \mathbf{\hat{x}}_{k,i} = \frac{1}{w_i} \sum_{j=1}^{M_k} P_{k,ij}^{(r-1)} \mathbf{x}_{k,j}.
    \end{equation}
    To reveal the strict convexity of the surrogate, we apply the variance decomposition theorem (parallel axis theorem) by injecting $\mathbf{\hat{x}}_{k,i}$ into the squared distance: $\|\mathbf{x}_{k,j} - \mathbf{z}_i\|^2 = \|(\mathbf{x}_{k,j} - \mathbf{\hat{x}}_{k,i}) + (\mathbf{\hat{x}}_{k,i} - \mathbf{z}_i)\|^2$. Expanding this quadratic form and taking the weighted sum over $j$ yields:
    \begin{equation}
        \sum_{j=1}^{M_k} P_{k,ij}^{(r-1)} \|\mathbf{x}_{k,j} - \mathbf{z}_i\|^2 = w_i \|\mathbf{z}_i - \mathbf{\hat{x}}_{k,i}\|^2 + \sum_{j=1}^{M_k} P_{k,ij}^{(r-1)} \|\mathbf{x}_{k,j} - \mathbf{\hat{x}}_{k,i}\|^2 + 2 \langle \mathbf{z}_i - \mathbf{\hat{x}}_{k,i}, \sum_{j} P_{k,ij}^{(r-1)} (\mathbf{\hat{x}}_{k,i} - \mathbf{x}_{k,j}) \rangle.
    \end{equation}
    By the definition of the barycenter $\mathbf{\hat{x}}_{k,i}$, the inner summation yields $\sum_{j} P_{k,ij}^{(r-1)} \mathbf{x}_{k,j} - w_i \mathbf{\hat{x}}_{k,i} = \mathbf{0}$, causing the cross-term to vanish exactly. 
    
    Additionally, because the optimal transport plan $\mathbf{P}_k^{(r-1)}$ and the target data $\mathbf{x}_{k,j}$ are fixed constants from the previous iteration, the variance term $\sum_{j=1}^{M_k} P_{k,ij}^{(r-1)} \|\mathbf{x}_{k,j} - \mathbf{\hat{x}}_{k,i}\|^2$ is strictly a constant independent of the optimization variable $\mathbf{z}_i$. Since the cross-term evaluates to zero and the variance term is a constant, minimizing the surrogate mathematically reduces to optimizing only the remaining term, which is the weighted squared distances to these barycentric projections:
    \begin{equation}
        \mathop{\arg\min}_{\{\mathbf{z}_i\}} \mathcal{U}(\xi, \xi^{(r-1)}) = \mathop{\arg\min}_{\{\mathbf{z}_i\}} \sum_{i=1}^S w_i \sum_{k=1}^K \lambda_k \|\mathbf{z}_i - \mathbf{\hat{x}}_{k,i}\|^2.
    \end{equation}
    The unique global minimum of this quadratic objective, denoted by $\mathbf{z}_i^*$, is precisely the convex combination of the projections:
    \begin{equation}
        \mathbf{z}_i^* = \sum_{k=1}^K \lambda_k \mathbf{\hat{x}}_{k,i}.
    \end{equation}
    In the algorithm, the local interpolation step computes the intermediate client vector $\mathbf{v}_{k,i} = (1-t)\mathbf{y}_i + t\mathbf{\hat{x}}_{k,i}$. The subsequent global update aggregates them to form the new proxy points $\mathbf{y}_i^+$:
    \begin{equation}
        \mathbf{y}_i^+ = \sum_{k=1}^K \lambda_k \mathbf{v}_{k,i} = (1-t)\mathbf{y}_i + t \mathbf{z}_i^*.
    \end{equation}
    For $t=1$, the update reaches the exact global minimizer $\mathbf{z}_i^*$. For any $t \in (0, 1]$, the update constitutes a valid descent step along the gradient of the strictly convex surrogate. Consequently, the surrogate value monotonically decreases:
    \begin{equation}
        \label{eq:surrogate_minimization_bold}
        \mathcal{U}(\xi^{(r)}, \xi^{(r-1)}) \le \mathcal{U}(\xi^{(r-1)}, \xi^{(r-1)}).
    \end{equation}

    \textbf{4. Monotonicity and Convergence.}
    Chaining the majorization property (\cref{eq:majorization_bound_bold}), the minimization descent (\cref{eq:surrogate_minimization_bold}), and the tight bound at the previous step, we establish the descent property:
    \begin{align}
        \mathcal{E}(\xi^{(r)}) \le \mathcal{U}(\xi^{(r)}, \xi^{(r-1)}) \le \mathcal{U}(\xi^{(r-1)}, \xi^{(r-1)}) = \mathcal{E}(\xi^{(r-1)}).
    \end{align}
    Since the sequence $\{\mathcal{E}^{(r)}\}$ is non-increasing and bounded below by 0, it converges to its infimum $\mathcal{E}(\xi^*)$ by the Monotone Convergence Theorem. Due to the displacement convexity of the objective in $\mathcal{W}_2$, the limit $\xi^*$ corresponds to the unique stationary Wasserstein barycenter.
\end{proof}

\newpage

\section{Additional Experiments and Details}
\label{app:exeperiment}

\subsection{Detailed Wasserstein Ground Metrics}
\label{app:cost_functions}

In this section, we provide the explicit formulations of the ground metrics used in our main theoretical framework.
Recall that $\mathcal{P}_p(\Omega)$ denotes the space of probability measures with finite $p$-th moments on a complete separable metric space $\Omega$. The $p$-Wasserstein distance between any two distributions $\mu, \nu \in \mathcal{P}_p(\Omega)$ is formally defined as:
\begin{equation}
\label{eq:wasserstein_primal} 
    \mathbb{W}_p(\mu, \nu) = \Big( \inf_{\pi \in \Pi(\mu, \nu)} \mathbb{E}_{(\mathbf{z}, \mathbf{z}') \sim \pi} [d^p(\mathbf{z}, \mathbf{z}')] \Big)^{1/p},
\end{equation}
where $\Pi(\mu, \nu)$ is the set of joint distributions (couplings) with marginals $\mu$ and $\nu$. The geometry of this transport space is entirely dictated by the choice of the ground metric $d: \Omega \times \Omega \to \mathbb{R}_{\geq 0}$~\cite{peyre2019computational}.

The primal formulation in \cref{eq:wasserstein_primal} is often computationally intractable and theoretically obscure due to the constraints on the coupling $\pi$. By the Kantorovich duality theorem~\cite{villani2008optimal}, we can express the Wasserstein distance through its dual form:
\begin{equation}
    \mathbb{W}_p(\mu, \nu) = \sup_{(\phi, \psi) \in \Phi_c} \Big( \mathbb{E}_{\mathbf{z} \sim \mu}[\phi(\mathbf{z})] + \mathbb{E}_{\mathbf{z}' \sim \nu}[\psi(\mathbf{z}')] \Big),
    \label{eq:wasserstein_dual}
\end{equation}
where $\Phi_c = \{ (\phi, \psi) \in C_b(\Omega)^2 : \phi(\mathbf{z}) + \psi(\mathbf{z}') \leq d^p(\mathbf{z}, \mathbf{z}') \}$ denotes the set of admissible Kantorovich potentials. This dual perspective is particularly instrumental for our theoretical analysis, as it transforms the minimization over couplings into a maximization over scalar functions, relating the transport cost to the geometry induced by $d^p$.

Adopting the ground metric construction methodologies established in recent studies~\cite{li2024data,li2024private, rakotomamonjy2024federated}, we formulate modality-specific metrics to leverage the inherent structural properties of visual and textual data.

\paragraph{Vision Modality (Linearized Structural Distance).}
Directly computing the exact Wasserstein distance between Gaussian distributions involves computationally expensive matrix square roots. To facilitate efficient computation and align with the vectorized representation of datasets~\cite{rakotomamonjy2024federated,alvarez2020geometric}, we adopt a parameter-space approximation. 

Let $\mathbf{z} = (\mathbf{x}, y)$ be a sample, where $y$ parameterizes a class-conditional Gaussian $\mathcal{N}(\boldsymbol{\mu}_y, \boldsymbol{\Sigma}_y)$. We define the squared ground metric $d^2_{\text{img}}$ by embedding the Gaussian parameters into a Euclidean space:
\begin{align}
    d_{\text{img}}\big((\mathbf{x}, y), (\mathbf{x}', y')\big) = \left(\|\mathbf{x} - \mathbf{x}'\|_2^2 + {\|\boldsymbol{\mu}_y - \boldsymbol{\mu}_{y'}\|_2^2 + \|\boldsymbol{\Sigma}_y - \boldsymbol{\Sigma}_{y'}\|_F^2}\right)^{\frac{1}{2}},
    \label{eq:vision_cost}
\end{align}
This formulation corresponds to measuring the Euclidean distance between augmented state vectors $\tilde{\mathbf{x}} \coloneqq [\mathbf{x}; \boldsymbol{\mu}_y; \text{vec}(\boldsymbol{\Sigma}_y)]$, effectively linearizing the transport cost for high-dimensional efficiency.

\paragraph{Language Modality (Semantic Transport Distance).}
Euclidean distances between sparse document vectors are often insufficient for capturing semantic proximity. To address this, we employ the Word Mover's Distance (WMD)~\cite{kusner2015word}, modeling each document as a discrete distribution in the word embedding space.

In this setting, a sample $\mathbf{z} \in \Omega$ corresponds to a normalized Bag-of-Words (nBOW) vector $\mathbf{d} \in \Delta^{V-1}$, where $V$ is the vocabulary size and $\Delta^{V-1}$ is the probability simplex. We define the cost between documents $\mathbf{d}_i$ and $\mathbf{d}_j$ via an inner optimal transport problem:
\begin{equation}
    d_{\text{text}}(\mathbf{d}_i, \mathbf{d}_j) = \inf_{T \in \Pi(\mathbf{d}_i, \mathbf{d}_j)} \sum_{u, v} T_{uv} \|\mathbf{w}_u - \mathbf{w}_v\|_2,
    \label{eq:text_cost}
\end{equation}
where $\mathbf{w}_u \in \mathbb{R}^k$ is the embedding for word $u$ (e.g., Word2Vec~\cite{mikolov2013efficient}) and $T_{uv}$ denotes the transport plan between words. Effectively, this metric quantifies the minimum cumulative cost to semantically transform $\mathbf{d}_i$ into $\mathbf{d}_j$, providing a geometry-aware ground metric for the subsequent dataset-level Wasserstein distance.

\newpage

\subsection{Detailed Implementation}

\subsubsection{Detailed Implementation of Baselines}
\label{app:experiment:baselines}

To evaluate the effectiveness of our proposed method, we compare it against a diverse set of selection strategies. These baselines are image-based methods that primarily operate in the continuous feature space.

\paragraph{Image Selection Methods.}
We utilize feature embeddings extracted from pre-trained networks (e.g., Inception-V3).

\begin{itemize}
    \item \textbf{$\texttt{CovMatch}$ (Covariance Matching).} Following \citet{rezaei2025high}, this method aims to preserve the spectral properties of the training data. It employs a greedy iterative algorithm to minimize the Frobenius norm difference between the empirical covariance matrix of the selected synthetic subset and that of the real training data. This explicitly aligns second-order statistics to mitigate spectral shrinkage.

    \item \textbf{$\texttt{CenterMatch}$ (Centroid Matching).} As described in \citet{he2023is}, this strategy assumes that high-quality samples reside close to the mode of the distribution. It computes the global centroid of real training features and selects the generated samples with the smallest Euclidean distances to this centroid. While this prioritizes high-density regions, it may reduce diversity by ignoring the distribution's tails.

    \item \textbf{$\texttt{K-means}$ (Cluster-based Sampling).} Utilized in \citet{lin2023explore}, this baseline addresses the diversity limitations of centroid-based methods. It partitions the generated pool into $N$ clusters (where $N$ is the budget size) using K-means and selects the sample closest to each cluster center. This ensures uniform coverage of the synthetic manifold.
\end{itemize}




\subsubsection{Detailed Implementation of Metrics}
\label{app:experiment:metrics}

To provide a comprehensive assessment of the generated data quality, we employ a combination of metrics measuring both distributional fidelity and sample diversity.

\paragraph{Image Generation Metrics.}
We utilize the Inception-V3 feature space to compute the following metrics, ensuring a standardized comparison with prior works.

\begin{itemize}
    \item \textbf{Fréchet Inception Distance (FID).} This metric computes the Fréchet distance between two multivariate Gaussian approximations fitted to the Inception-V3 feature embeddings of real and generated images. Concretely, let $(\mu_r, \boldsymbol{\Sigma}_r)$ and $(\mu_g, \boldsymbol{\Sigma}_g)$ be the empirical means and covariance matrices of the real feature distributions $\mathcal{D}_r$ and generated feature distributions $\mathcal{D}_g$, respectively. The FID is defined as:
    \begin{equation}
        \text{FID}(\mathcal{D}_r, \mathcal{D}_g) = \|\mu_r - \mu_g\|_2^2 + \text{Tr}\left(\boldsymbol{\Sigma}_r + \boldsymbol{\Sigma}_g - 2 (\boldsymbol{\Sigma}_r \boldsymbol{\Sigma}_g)^{1/2}\right)
    \end{equation}
    A lower FID indicates closer alignment of the generated distribution to the real distribution, reflecting higher visual quality and better mode coverage.

    \item \textbf{Precision and Recall.} To disentangle the evaluation of quality and diversity, we adopt the improved precision and recall metrics \cite{kynkaanniemi2019improved}.
    \begin{itemize}
        \item \textbf{Implementation Details:} We compute these metrics in the Inception-V3 feature space using $k$-nearest neighbor ($k$-NN) manifold estimation with $k=5$. The estimation is performed using the generated sample pool (e.g., 10,000 samples per iteration) against the full training set to ensure statistical consistency.
        \item \textbf{Precision (Fidelity):} This metric measures the fraction of generated images that lie within the estimated support (manifold) of the real data distribution. It quantifies how "realistic" the generated samples are.
        \item \textbf{Recall (Diversity):} This metric measures the fraction of real images that lie within the estimated support of the generated data distribution. A significant drop in Recall is a strong indicator of \textit{mode collapse}, implying the generator has failed to capture the full variation of the training data.
    \end{itemize}
\end{itemize}


    

\subsubsection{Detailed Implementation of Configurations}
\label{app:exeperiment:setup}

We detail the model architectures, datasets, and training protocols used for both image and text generation tasks. All experiments were conducted following standard configurations established in prior literature to ensure fair comparability.

\paragraph{Image Generation Framework.}
For image synthesis, we adopt the standard Denoising Diffusion Probabilistic Models (DDPM) framework \cite{ho2020denoising}. Following the configuration in \citet{shi2025a}, we train the model using a standard U-Net architecture with attention mechanisms at multiple resolution levels.

\begin{itemize}

    \item \textbf{Datasets and Preprocessing.} 
    In our experiments, we employ three benchmark datasets representing different levels of complexity. For the initial training phase (Generation $t=0$), we utilize the 50,000 training samples of each training dataset to establish a strong generator.
    
    \begin{itemize}
        \item \textbf{CIFAR-10:} This dataset comprises 60,000 diverse natural images spanning 10 distinct semantic classes (e.g., vehicles and animals). It is officially partitioned into a training set of 50,000 images and a test set of 10,000 images. We utilize the complete training set with a native resolution of $32 \times 32$. It serves as a fundamental benchmark to evaluate the model's capability in capturing global semantic structures in a low-resolution setting.
    
        \item \textbf{CelebA:} This large-scale face-attribute dataset contains 202,599 images. We follow the standard protocol for the official split (162,770 training, 19,867 validation, and 19,962 testing). To ensure a consistent data scale across experiments, we randomly sample a subset of 50,000 images from the training split. All images are preprocessed to $32 \times 32$ via center-cropping. We map the hair color attributes into five distinct categories: 0: Black Hair, 1: Blond Hair, 2: Brown Hair, 3: Gray Hair, and 4: Other (including Bald or unknown). 
        
        \item \textbf{STL-10:} To evaluate performance on more diverse object structures, we utilize the STL-10 dataset, which consists of 5,000 labeled training images, 8,000 test images, and 100,000 unlabeled images. We randomly sample 50,000 images from the combined pool of labeled and unlabeled data to construct the training set. While the original resolution is $96 \times 96$, we resize the images to $32 \times 32$ to align with our model configuration. This dataset challenges the model to capture fine-grained details within a reduced resolution.
\end{itemize}
    \item \textbf{Training and Sampling Configuration:} We adopt the Denoising Diffusion Probabilistic Models (DDPM) \cite{ho2020denoising} framework utilizing a standard U-Net \cite{ronneberger2015u} architecture. The diffusion process is trained over $T=1000$ timesteps employing a linear variance schedule. For efficient inference, we use the Denoising Diffusion Implicit Models (DDIM) \cite{song2021denoising} algorithm with 50 sampling steps. In each generation cycle, we synthesize a candidate pool of $N=4n$ samples (where $n=50,000$) and apply our selection mechanism to filter the top-$n$ samples for the subsequent training iteration.
\end{itemize}




In all experimental settings, we enforce a consistent selection protocol to ensure a fair comparison. For a target training budget of $n$ samples per category, we initially generate a larger candidate pool consisting of $N=4n$ synthetic images. We then apply our selection mechanism (as described in the methodology) to filter this candidate pool, retaining exactly $n$ representative samples while discarding the remaining $3n$ instances. Based on these selected samples, we evaluate three distinct strategies for constructing the final training set:

\begin{itemize}
    \item \textbf{Replace:} This strategy prioritizes current data distributions. We discard all historical data and construct the training set using exclusively the $n$ newly selected samples from the current generation step. Consequently, the training set size is fixed at $n$.
    
    \item \textbf{Accumulate:} This strategy maximizes data volume. We append the $n$ newly selected samples to the existing historical dataset. As a result, the training set size increases by $n$ at each update step, utilizing the full history of selected synthetic data.
    
    \item \textbf{Accumulate-Subsample:} This strategy balances diversity and computational cost. We first combine the $n$ newly selected samples with the accumulated historical data. From this unified pool, we randomly draw a fixed-size subset of $n$ samples. This ensures the training set size remains constant at $n$ while maintaining a diverse mixture of current and historical distributions.
\end{itemize}

\subsection{Hardware}
\label{subsec:hardware}

All models were trained and evaluated on a dedicated server running Ubuntu 20.04.2 LTS. The system is equipped with dual Intel\textsuperscript{\textregistered} Xeon\textsuperscript{\textregistered} Gold 6442Y CPUs (48 physical cores, 96 threads @ 2.60 GHz), approximately 503 GiB of system memory, and a GPU cluster consisting of 8 NVIDIA L40 GPUs (48 GB VRAM each). Detailed code and generation results for each round are available at \href{https://github.com/XinbaoQiao/When-Sample-Selection-Bias-Precipitates-Model-Collapse}{\faGithub~GitHub}.

\newpage
\subsection{Multivariate Gaussian Modeling and Non-Gaussian Empirical Evidence}
\label{app:exeperiment:nongaussian}

\begin{figure*}[t]
    \centering
    \begin{subfigure}[b]{0.32\textwidth}
        \centering
        \includegraphics[width=\linewidth]{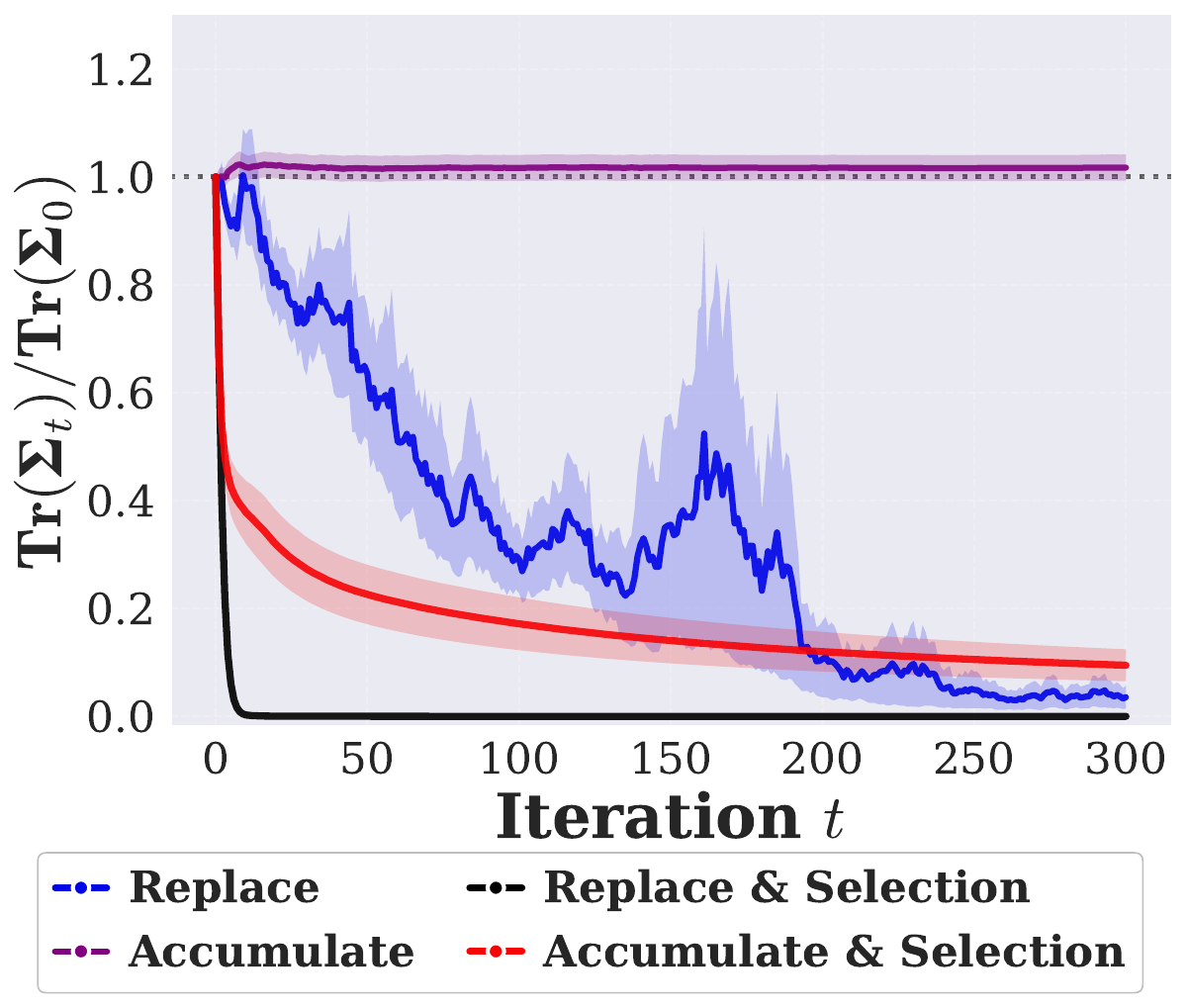}
        \caption{$n=100$}
        \label{fig:gauss_100}
    \end{subfigure}
    \hfill
    \begin{subfigure}[b]{0.32\textwidth}
        \centering
        \includegraphics[width=\linewidth]{Figures/variance_collapse.pdf} 
        \caption{$n=300$}
        \label{fig:gauss_300}
    \end{subfigure}
    \hfill
    \begin{subfigure}[b]{0.32\textwidth}
        \centering
        \includegraphics[width=\linewidth]{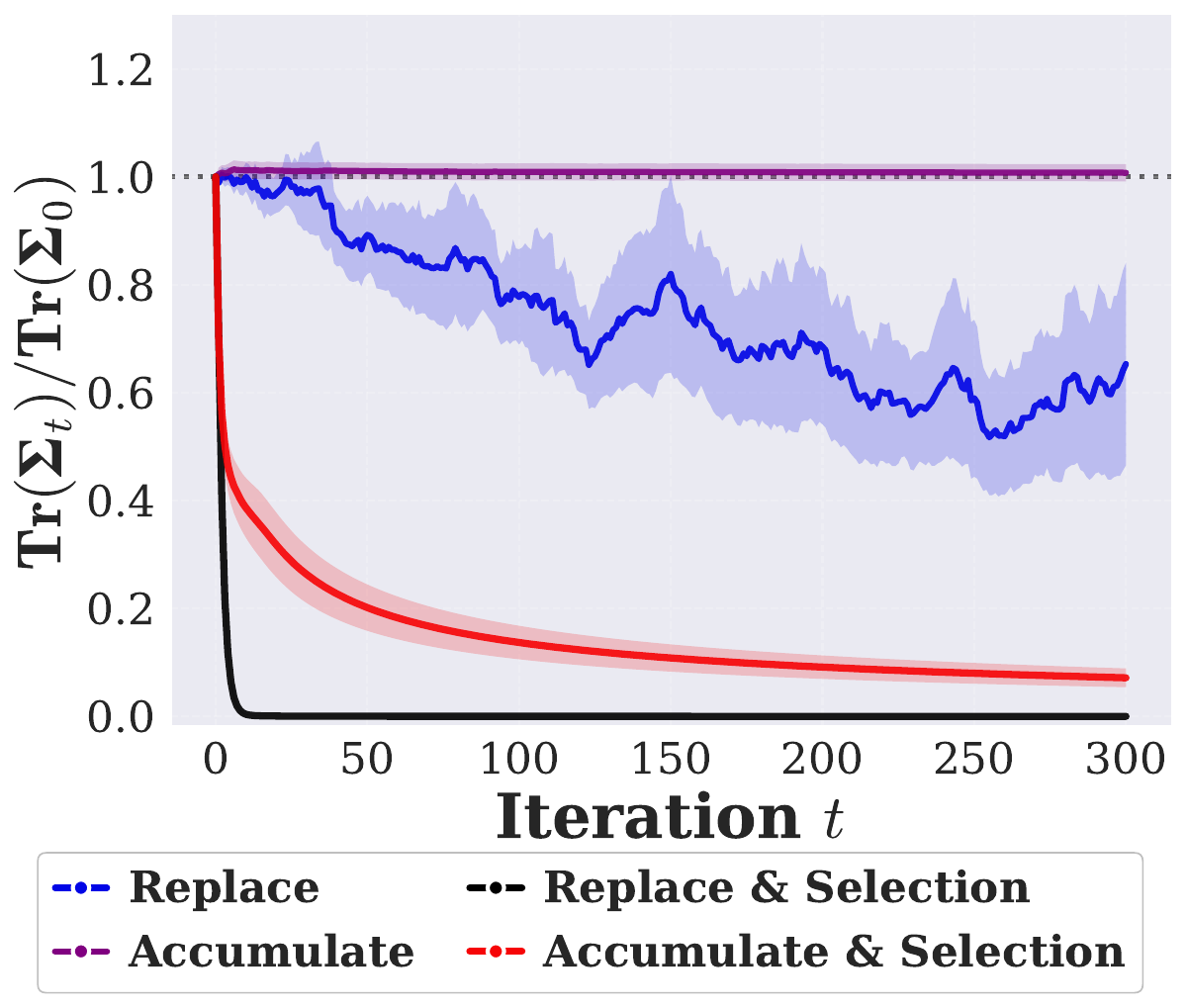}
        \caption{$n=500$}
        \label{fig:gauss_500}
    \end{subfigure}
    
    \caption{\textbf{Impact of Sample Size on Variance Collapse.} We visualize the evolution of the variance ratio $\text{Tr}(\boldsymbol{\Sigma}_t)/\text{Tr}(\boldsymbol{\Sigma}_0)$ over generations under varying generated sample sizes ($n$) in each iteration. While increasing the sample size reduces the collapse speed for the naive \texttt{Replace} baseline (Blue), the {Replace Paradigm} with biased selection method (Black) consistently exhibits rapid variance decay governed by the power-law dynamics described in \cref{thm:decay_rate}, showing that selection bias dominates finite-sample effects.}
    \label{fig:gaussian_variance_collapse}
\end{figure*}

\paragraph{Multivariate Gaussian Modeling}
While \cref{sec:siloed_selection} focuses on the representative regime of $n=300$ to illustrate the core collapse dynamics, we provide here a comprehensive specification of the experimental environment and extend our analysis to varying sample sizes ($n \in \{100, 500\}$) to demonstrate the universality of the observed phenomena.

\paragraph{Experimental Settings and Hyperparameters.}
To empirically validate our theoretical findings, we perform controlled simulations on a synthetic multivariate Gaussian benchmark. The data dimension is fixed at $d=10$, and the iterative training is tracked over $T=300$ generations. We initialize the ground truth distribution as $\mathcal{N}(\boldsymbol{\mu}^*, \boldsymbol{\Sigma}^*)$ with randomly sampled parameters.
For the selection mechanism, we design a biased utility function $U(\mathbf{x}) = -\|\mathbf{x} - \mathbf{u}^*\|^2$ that favors a local mode $\mathbf{u}^*$ distinct from the true mean $\boldsymbol{\mu}^*$. We enforce a strict selection pressure by setting the retention ratio to $\alpha=0.05$, meaning only the top 5\% of generated samples with the highest utility scores are used for training the next generation. We vary the generated sample size $n \in \{100, 300, 500\}$ in each iteration to study finite-sample effects.

\paragraph{Results: The Dominance of Selection Bias.}
\Cref{fig:gaussian_variance_collapse} illustrates the evolution of the variance ratio $\text{Tr}(\boldsymbol{\Sigma}_t)/\text{Tr}(\boldsymbol{\Sigma}_0)$ across different sample sizes. The comparative results highlight two fundamentally different collapse regimes:

\begin{itemize}
    \item \textbf{Buffering Effect against Selection:} When selection is applied to accumulated data (\textbf{\texttt{Accumulate \& Selection}}), the variance decay is markedly slower than in the \texttt{\textbf{Replace \& Selection}} case. Historical data act as a ``memory anchor,'' diluting the concentration effect of biased synthetic samples. However, the downward trend persists, indicating that while accumulation delays collapse, it cannot fully neutralize the long-term distributional drift induced by a fixed utility function.

    \item \textbf{Mitigation in Naive Replacement ($n$-dependent):} In the absence of selection (the \texttt{\textbf{Replace}} baseline), the collapse rate is highly sensitive to the sample size. As observed in \cref{fig:gauss_100} ($n=100$) versus \cref{fig:gauss_500} ($n=500$), increasing the sample size significantly delays the variance decay. This aligns with standard statistical theory: larger $n$ reduces the variance of the sample covariance estimator (scaling roughly with $O(1/n)$), thereby preserving the distributional width for longer periods.
    
    \item \textbf{Invariance in Selection Dynamics ($n$-independent):} Conversely, \texttt{\textbf{Replace \& Selection}} and \texttt{\textbf{Accumulate \& Selection}} demonstrate a striking insensitivity to sample size. Whether $n=100$ or $n=500$, the variance collapses rapidly and follows a nearly identical trajectory. This observation is crucial: it indicates that the selection bias acts as the dominant force, overshadowing the benefits of increased data volume. The selection mechanism effectively truncates the distribution's tails at each step, forcing a contraction rate that adheres to the power-law dynamics predicted in \cref{thm:decay_rate}. Consequently, simply scaling up the synthetic data generation (increasing $n$) fails to counteract the structural collapse induced by biased selection.
\end{itemize}

\begin{figure}[ht]
    \centering
    \begin{minipage}{0.42\linewidth}
        \centering
        \includegraphics[width=\linewidth]{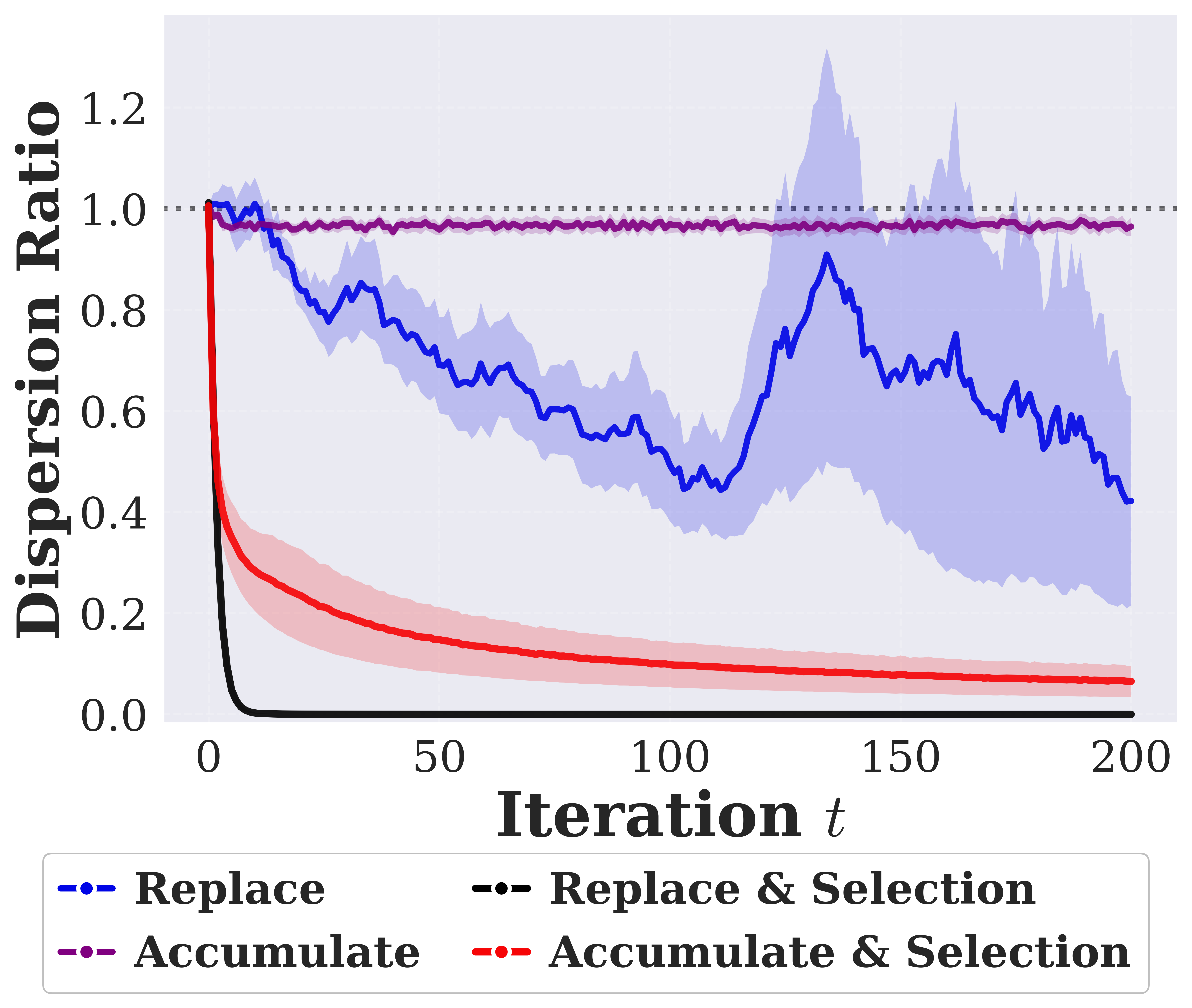}

        {\small (a) Anisotropic Gaussian}
    \end{minipage}
    \begin{minipage}{0.42\linewidth}
        \centering
        \includegraphics[width=\linewidth]{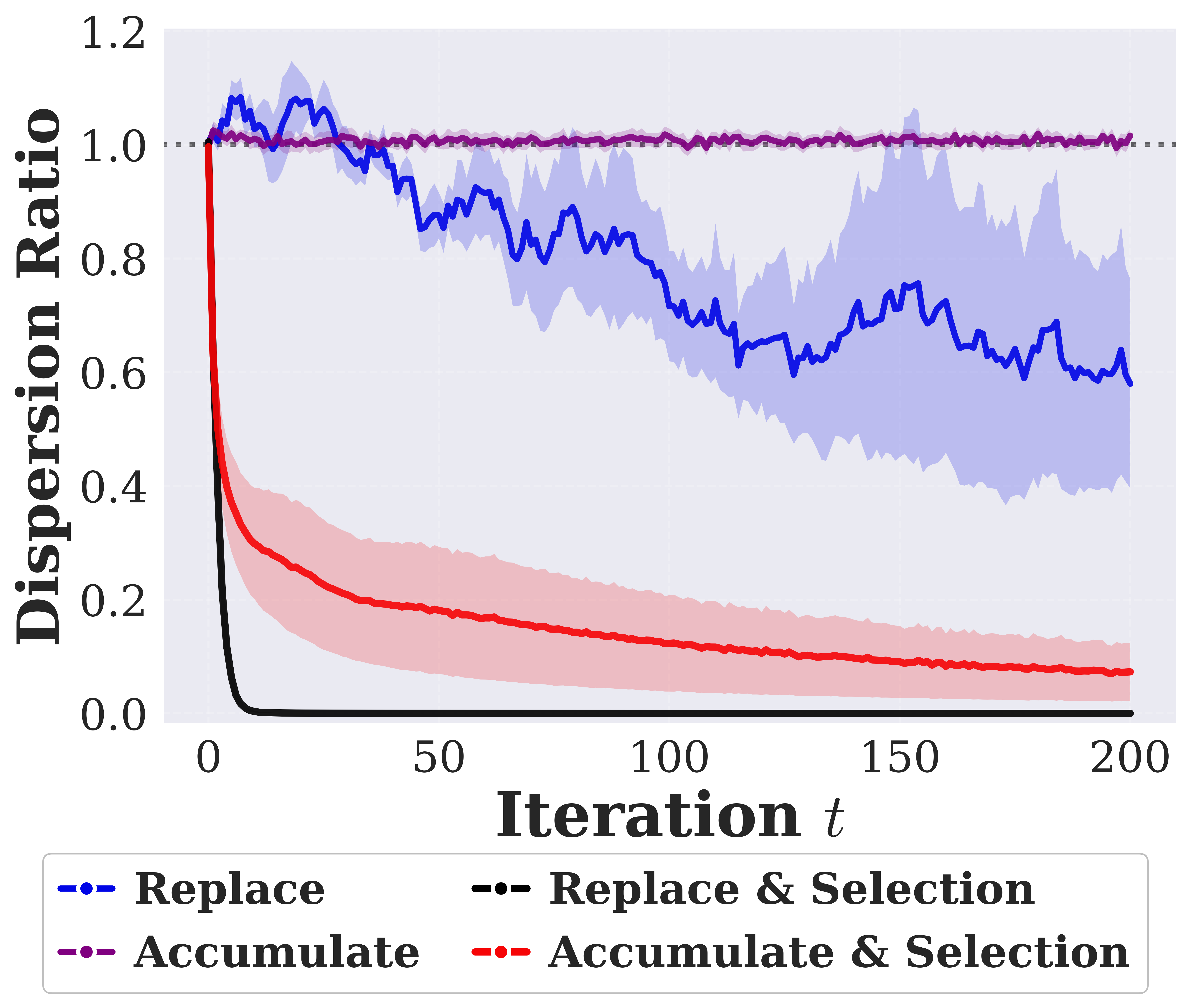}

        {\small (b) Balanced mixture}
    \end{minipage}
    \begin{minipage}{0.42\linewidth}
        \centering
        \includegraphics[width=\linewidth]{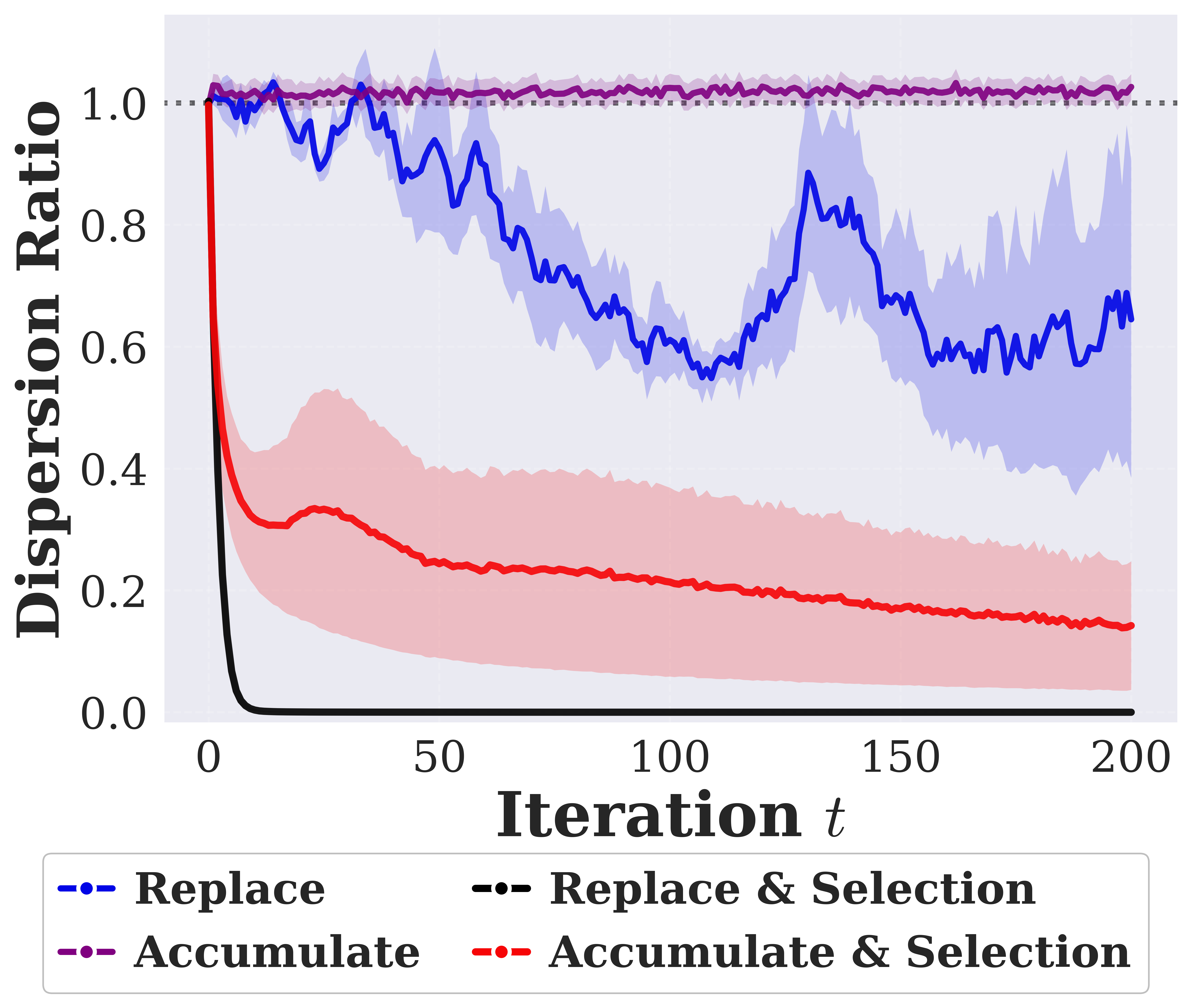}

        {\small (c) Imbalanced mixture}
    \end{minipage}
    \begin{minipage}{0.42\linewidth}
        \centering
        \includegraphics[width=\linewidth]{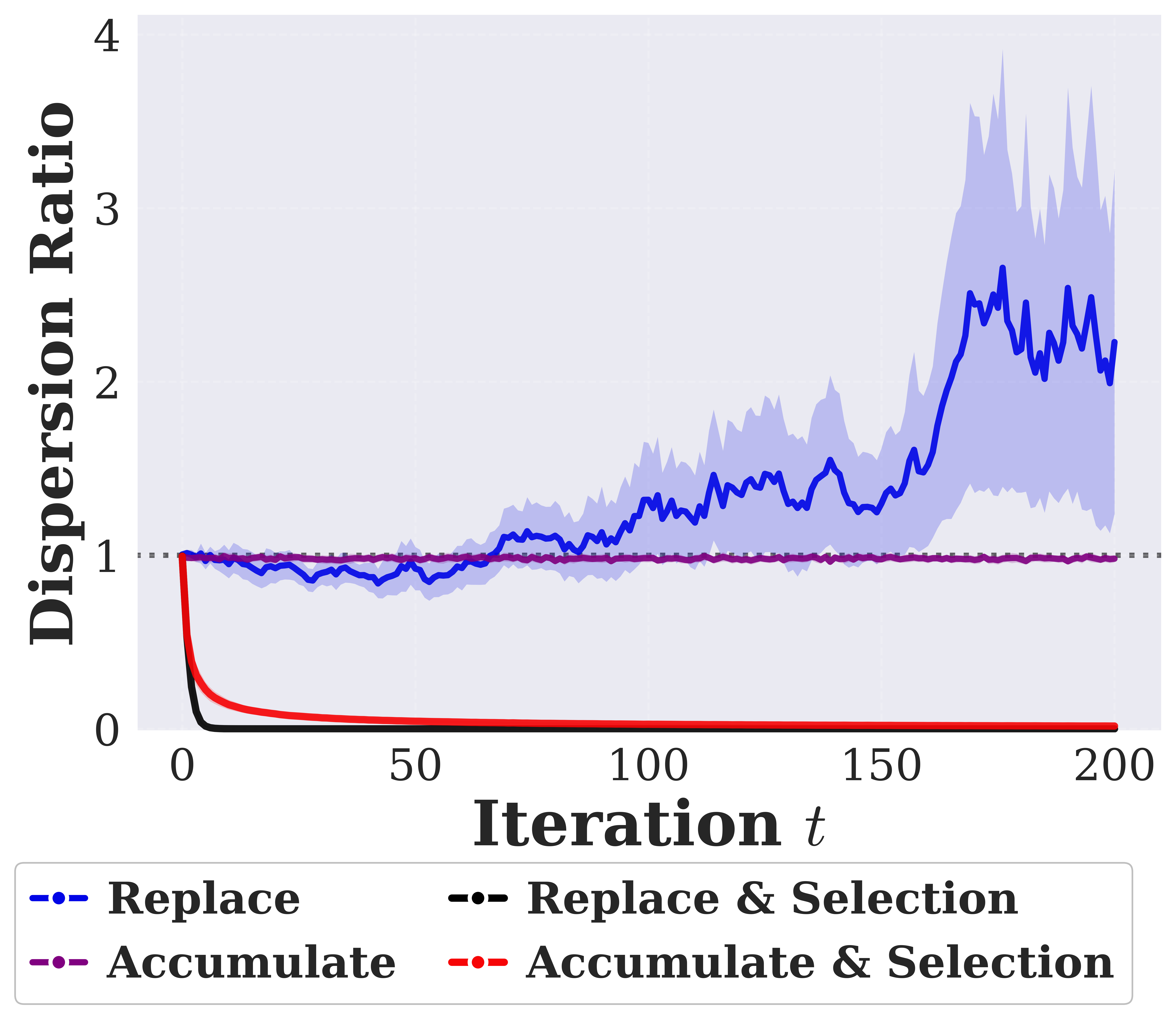}

        {\small (d) Laplace}
    \end{minipage}
    \caption{Recursive selection dynamics beyond the isotropic Gaussian model, reported for $n=300$. Across structured, multimodal, imbalanced, and heavy-tailed distributions, local-reference selection sharply reduces normalized dispersion.}
    \label{fig:app_additional_nongaussian}
\end{figure}

\paragraph{Non-Gaussian recursive selection dynamics.}
The theoretical results in \cref{sec:siloed_selection} are derived under a Gaussian selection model, which provides a tractable setting for analyzing how local-reference filtering affects the mean, covariance, and long-term diversity of recursively generated data. This assumption is useful for obtaining explicit collapse dynamics and the associated power-law decay, but it also raises a natural question: whether the same qualitative mechanism persists when the data distribution departs from the Gaussian structure used in the analysis. We therefore include additional empirical checks that explore the selection process along several axes not covered by the main theory, including anisotropy, multimodality, component imbalance, heavy-tailed variation, semantic verifier bias, and weaker generative initialization.

These experiments should be interpreted as empirical robustness evidence rather than as formal extensions of \cref{thm:selection_collapse,thm:decay_rate}. The goal is not to claim that the Gaussian power-law rate holds unchanged in all these settings. Instead, we examine whether the mechanism identified in the main text remains observable: when the verifier relies on a restricted local reference, recursive selection tends to preserve samples aligned with that reference while suppressing regions that are underrepresented, distant, or semantically outside the verifier's domain. Across the following settings, the results consistently support this qualitative interpretation.

We first consider synthetic distributions that isolate different forms of distributional structure beyond the isotropic Gaussian case. The anisotropic Gaussian control tests whether collapse persists when the distribution has direction-dependent variance. The balanced mixture setting introduces multimodality, allowing us to examine whether selection preserves multiple separated components or concentrates mass around a subset of them. The imbalanced mixture further introduces minority modes, which are particularly relevant to our claim that biased selection can remove weakly represented regions of the distribution. Finally, the Laplace family introduces heavier tails, testing whether local-reference selection suppresses tail scale rather than only collapsing Gaussian-like variance.
For each distribution family, we track the normalized dispersion
$\operatorname{DispRatio}(P_t)
    =
    \frac{\operatorname{Disp}(P_t)}{\operatorname{Disp}(P_0)},$
where smaller values indicate stronger loss of distributional diversity relative to the initial distribution. This metric provides a common scale across distribution families and sample sizes, allowing us to compare how recursive accumulation, replacement, and selection affect the preservation of distributional spread. In addition to this aggregate dispersion measure, we report family-specific diagnostics for the imbalanced mixture and Laplace settings, since these two cases expose more structured forms of collapse: erosion of minority components and contraction of heavy-tailed variation.

\begin{table*}[t]
\centering
\caption{Final dispersion ratios at iteration $t=200$. Lower values indicate stronger diversity collapse. The omitted \textsc{Replace+Select} column equals $0.0000$ for every distribution family and sample size, and is therefore reported in the caption rather than as a degenerate all-zero column.}
\label{tab:app_additional_dispersion_full}
\setlength{\tabcolsep}{6pt}
\renewcommand{\arraystretch}{1.08}
\begin{adjustbox}{max width=\textwidth}
\begin{tabular}{llrrrr}
\toprule
Distribution & $n$ & Accumulate & Accumulate + Select & Replace & Replace + Select \\
\midrule
Anisotropic Gaussian & 100 & 0.9894 & 0.0829 & 0.0556 & 0.0000 \\
                     & 300 & 0.9646 & 0.0652 & 0.4221 & 0.0000 \\
                     & 500 & 0.9736 & 0.0734 & 0.6692 & 0.0000 \\
\addlinespace[0.25em]
Balanced mixture     & 100 & 0.9455 & 0.0923 & 0.0258 & 0.0000 \\
                     & 300 & 1.0162 & 0.0729 & 0.5798 & 0.0000 \\
                     & 500 & 1.0034 & 0.0472 & 0.5604 & 0.0000 \\
\addlinespace[0.25em]
Imbalanced mixture   & 100 & 0.9699 & 0.0690 & 0.0790 & 0.0000 \\
                     & 300 & 1.0263 & 0.1422 & 0.6449 & 0.0000 \\
                     & 500 & 1.0010 & 0.0736 & 0.5615 & 0.0000 \\
\addlinespace[0.25em]
Laplace              & 100 & 1.0439 & 0.0169 & 0.1336 & 0.0000 \\
                     & 300 & 0.9803 & 0.0161 & 2.2282 & 0.0000 \\
                     & 500 & 0.9990 & 0.0149 & 1.0510 & 0.0000 \\
\bottomrule
\end{tabular}
\end{adjustbox}
\end{table*}

\begin{table}[t]
\centering
\caption{Minority-mode diagnostics for the imbalanced mixture at iteration $t=200$. Lower smallest-component weight and lower entropy indicate stronger erosion of low-probability modes.}
\label{tab:app_additional_minority_mode}
\setlength{\tabcolsep}{5pt}
\renewcommand{\arraystretch}{1.08}
\begin{adjustbox}{max width=\linewidth}
\begin{tabular}{llrr}
\toprule
$n$ & Setting & Smallest Comp. Weight & Weight Entropy \\
\midrule
100 & Accumulate          & 0.2794 & 0.8496 \\
    & Accumulate + Select & 0.1648 & 0.5688 \\
    & Replace             & 0.0000 & 0.0000 \\
    & Replace + Select    & 0.0000 & 0.0000 \\
\addlinespace[0.25em]
300 & Accumulate          & 0.2123 & 0.7436 \\
    & Accumulate + Select & 0.1692 & 0.5717 \\
    & Replace             & 0.1087 & 0.2774 \\
    & Replace + Select    & 0.0000 & 0.0000 \\
\addlinespace[0.25em]
500 & Accumulate          & 0.2434 & 0.7997 \\
    & Accumulate + Select & 0.1591 & 0.5579 \\
    & Replace             & 0.2052 & 0.5403 \\
    & Replace + Select    & 0.0119 & 0.0653 \\
\bottomrule
\end{tabular}
\end{adjustbox}
\end{table}

\begin{table}[t]
\centering
\caption{Tail-scale diagnostics for the Laplace family at iteration $t=200$. Lower mean absolute scale indicates stronger contraction of heavy-tailed variation.}
\label{tab:app_additional_laplace_scale}
\setlength{\tabcolsep}{7pt}
\renewcommand{\arraystretch}{1.08}
\begin{adjustbox}{max width=\linewidth}
\begin{tabular}{llr}
\toprule
$n$ & Setting & Mean Absolute Scale \\
\midrule
100 & Accumulate          & 2.0087 \\
    & Accumulate + Select & 0.2463 \\
    & Replace             & 0.3788 \\
    & Replace + Select    & 0.0007 \\
\addlinespace[0.25em]
300 & Accumulate          & 1.9573 \\
    & Accumulate + Select & 0.2413 \\
    & Replace             & 1.8039 \\
    & Replace + Select    & 0.0006 \\
\addlinespace[0.25em]
500 & Accumulate          & 1.9828 \\
    & Accumulate + Select & 0.2335 \\
    & Replace             & 1.6169 \\
    & Replace + Select    & 0.0006 \\
\bottomrule
\end{tabular}
\end{adjustbox}
\end{table}
The trajectories in \cref{fig:app_additional_nongaussian} reveal several consistent patterns. First, accumulation without selection remains close to the initial dispersion across all four distribution families, indicating that recursive accumulation alone does not necessarily induce immediate diversity loss in these controlled settings. Second, local-reference selection introduces a strong dissipative effect. In the anisotropic Gaussian, balanced mixture, and imbalanced mixture settings, \textsc{Accumulate+Select} decreases monotonically or near-monotonically from unit dispersion to a small residual level, whereas \textsc{Replace+Select} collapses almost immediately to numerical zero. This sharp contrast indicates that the diversity loss is not caused solely by recursive resampling, but by the interaction between recursion and selective filtering toward a restricted reference.

The behavior of the \textsc{Replace} baseline further clarifies the role of selection. Without selection, replacement dynamics can be noisy and distribution-dependent: in the mixture settings, dispersion decreases gradually but remains substantially above zero, while in the Laplace setting it can even expand beyond the initial dispersion due to heavy-tailed fluctuations. Once selection is applied, however, this instability is converted into rapid collapse. In particular, \textsc{Replace+Select} reaches a final dispersion ratio of $0.0000$ for every tested distribution family and sample size in \cref{tab:app_additional_dispersion_full}. This all-zero outcome is not a missing-data artifact; it reflects a degenerate regime in which selection repeatedly concentrates the generated distribution around the local reference and eliminates measurable spread.

The complete dispersion statistics in \cref{tab:app_additional_dispersion_full} also show that the phenomenon is stable across sample sizes. For \textsc{Accumulate}, the final dispersion ratio remains close to one in most cases, ranging from $0.9455$ to $1.0439$ across the reported non-Gaussian families and sample sizes. In contrast, \textsc{Accumulate+Select} consistently reduces dispersion by an order of magnitude or more, with final ratios below $0.15$ in all reported cases and below $0.02$ for the Laplace family. This pattern is important because it separates the effect of biased selection from finite-sample randomness: increasing the sample size from $n=100$ to $n=500$ does not remove the collapse trend. Instead, the same qualitative behavior persists across structured, multimodal, imbalanced, and heavy-tailed regimes.

Beyond aggregate dispersion, the imbalanced-mixture and Laplace settings provide targeted diagnostics for the type of diversity being lost. The imbalanced mixture tests whether selection erodes minority components rather than merely reducing total variance. As shown in \cref{tab:app_additional_minority_mode}, selection lowers both the smallest component weight and the component-weight entropy under accumulation, indicating that low-probability modes become less represented even when the overall recursive process remains stable. Under replacement, the effect is stronger: the minority component disappears completely for $n=100$ and $n=300$ under \textsc{Replace+Select}, and is reduced to a near-zero weight at $n=500$. This supports the interpretation that local-reference selection preferentially suppresses modes that are weakly represented or distant from the verifier's reference.

The Laplace diagnostics in \cref{tab:app_additional_laplace_scale} show a complementary form of collapse. Since the Laplace family has heavier tails, mean absolute scale measures whether tail magnitude is retained. Accumulation preserves this scale near its initial level, while \textsc{Accumulate+Select} reduces it from approximately $2.0$ to about $0.24$ across all sample sizes. \textsc{Replace+Select} further contracts the scale to nearly zero. Thus, biased filtering does not merely collapse multimodal structure; it can also remove heavy-tailed variation. Together, these diagnostics indicate that the dispersion collapse observed in \cref{fig:app_additional_nongaussian} corresponds to structured loss of distributional coverage, including minority-mode erosion and tail-scale contraction.

\clearpage

\subsection{Algorithms and Empirical Results}
\label{app:exeperiment:algorithm}

In this section, we present the pseudocode for our algorithm, followed by additional experiments, including \textbf{(i)} an analysis of the computational time of \cref{alg:scheme1,alg:scheme2} and \textbf{(ii)} the impact of varying Dirichlet settings.

\begin{algorithm}[H]
   \caption{Scheme I: Collaborative Geodesic Interpolation \& Greedy Selection}
   \label{alg:scheme1}
\begin{algorithmic}[1]
   \STATE {\bfseries Input:} Synthetic data $\mathcal{P}$, real data shards $\{\mathcal{Q}_k\}_{k=1}^K$, rounds $R$, budget $n$.
   \STATE {\bfseries Output:} Selected subset $\mathcal{I}$.
   
   \vspace{0.1cm}
   \STATE \textbf{// Phase 1: Geodesic Proxy Estimation} \hfill \COMMENT{\textit{See \cref{fig:method_1}}}
   \STATE Initialize proxies $\xi_k^{(0)}$ for all $k$ 
   \FOR{$r = 0$ {\bfseries to} $R-1$}
       \FOR{$k = 1$ {\bfseries to} $K$ \textbf{in parallel}}
           \STATE Interpolation: $\xi_{\mathcal{Q}_{k}}^{(r)} \leftarrow \arg \min _{\xi}  \mathbb{W}_p\left(\mathcal{Q}_k, \xi\right)+ \mathbb{W}_p(\xi,\xi_k^{(r)})$
            \hfill \COMMENT{\textit{See \cref{property:mccann}}}
           \STATE Interpolation: $\xi_{\mathcal{P}}^{(r)} \leftarrow \arg \min _{\xi}  \mathbb{W}_p(\mathcal{P}, \xi)+ \mathbb{W}_p(\xi,\xi_k^{(r)})$
           
           \STATE Update proxy: $\xi_k^{(r+1)} \leftarrow \arg \min _{\xi}  \mathbb{W}_p\left(\xi_{\mathcal{P}}^{(r)}, \xi\right)+ \mathbb{W}_p(\xi, \xi_{\mathcal{Q}_k}^{(r)})$
           \hfill 
       \ENDFOR
   \ENDFOR
   \STATE Set final proxies $\xi_k^* \leftarrow \xi_k^{(R)}$ 
   \hfill \COMMENT{\textit{Converges via \cref{thm:interpolation_convergence}}}

   \vspace{0.1cm}
   \STATE \textbf{// Phase 2:  Greedy Selection}
    \FOR{$k = 1$ {\bfseries to} $K$ \textbf{in parallel}}
       \STATE Compute scores: $\mathcal{S}_{k}(x_i) \leftarrow f^*(x_i) - \frac{1}{N-1} \sum_{j \neq i}^N f^*(x_j)$ \hfill \COMMENT{\textit{\cref{eq:calibrated_gradient}}}
       \STATE Min--max normalization: $\tilde{\mathcal{S}}_k(x_i) = \frac{\mathcal{S}_k(x_i) - \min_j \mathcal{S}_k(x_j)}{\max_j \mathcal{S}_k(x_j) - \min_j \mathcal{S}_k(x_j)}$
    \ENDFOR
   \STATE Initialize: $\mathcal{I} \leftarrow \emptyset$
   \STATE Greedy selection: $\underset{\mathcal{I} \subseteq \{1, \dots, N\}}{\text{maximize}}\quad \sum_{k=1}^K g\big( \sum_{i \in \mathcal{I}} (1 - \tilde{\mathcal{S}}_k(x_i)) \big) \quad \text{s.t.} \ \ |\mathcal{I}| \le n.$
   \STATE {\bfseries Return} $\mathcal{I}$
\end{algorithmic}
\end{algorithm}

\begin{algorithm}[H]
   \caption{Scheme II: Collaborative Barycenter Estimation}
   \label{alg:scheme2}
\begin{algorithmic}[1]
   \STATE {\bfseries Input:} Synthetic data $\mathcal{P}$, real data shards $\{\mathcal{Q}_k\}_{k=1}^K$, rounds $R$, selection ratio $\alpha$.
   \STATE {\bfseries Output:} Selected indices $\mathcal{I}$.

   \vspace{0.1cm}
   \STATE \textbf{// Phase 1: Offline Barycenter Estimation}
   \STATE \textbf{Server:} Initializes global barycenter $\xi^{(0)}$ \hfill \COMMENT{\textit{See \cref{fig:method_2}}}
   \FOR{$r = 0$ {\bfseries to} $R-1$}
       \STATE \textbf{Server:} Broadcasts $\xi^{(r)}$ to all parties
       \FOR{$k = 1$ {\bfseries to} $K$ \textbf{in parallel}}
           \STATE \textbf{Client $k$:} Computes local interpolation
           \STATE $\quad \xi_k^{(r)} \leftarrow \arg \min _{\xi}  \mathbb{W}_p\left(\mathcal{Q}_k, \xi\right)+ \mathbb{W}_p\left(\xi, \xi^{(r)}\right)$
           \STATE \textbf{Client $k$:} Sends $\xi_k^{(r)}$ to Server
       \ENDFOR
       \STATE \textbf{Server:} Updates barycenter \hfill \COMMENT{\textit{See \cref{property:barycenter}}}
       \STATE $\quad \xi^{(r+1)} \leftarrow \arg \min _{\xi} \sum_{k=1}^K  \mathbb{W}_p\left(\xi, \xi_k^{(r)}\right)$
   \ENDFOR
   \STATE $\xi^* \leftarrow \xi^{(R)}$ \hfill \COMMENT{\textit{Converges via \cref{thm:barycenter_convergence}}}

   \vspace{0.1cm}
   \STATE \textbf{// Phase 2: Calibrated Gradient Filtering}
   \FOR{\textbf{each} $x_i \in \mathcal{P}$}
       \STATE Compute Score:
       $\mathcal{S}(x_i) \leftarrow f^*(x_i) - \frac{1}{N-1} \sum_{j \neq i}^N f^*(x_j)$ \hfill \COMMENT{\textit{\cref{eq:calibrated_gradient}}}
   \ENDFOR
   \STATE $\mathcal{I} \leftarrow \operatorname{arg\,topk}(\{-\mathcal{S}(x_i)\}, k=\alpha |\mathcal{P}|)$
   \STATE {\bfseries Return} $\mathcal{I}$
\end{algorithmic}
\end{algorithm}

\paragraph{Data Heterogeneity.}
As illustrated in \cref{fig:dirichlet_impact}, \texttt{Scheme I} exhibits superior robustness in highly non-IID regimes ($\alpha < 0.5$) by effectively leveraging heterogeneity as a diversity source. In contrast, \texttt{Scheme II} shows significant performance gains as the data distribution becomes more homogeneous ($\alpha \to 1.0$) with lower computational cost.
\begin{figure}[t]
    \centering
    \includegraphics[width=0.28\linewidth]{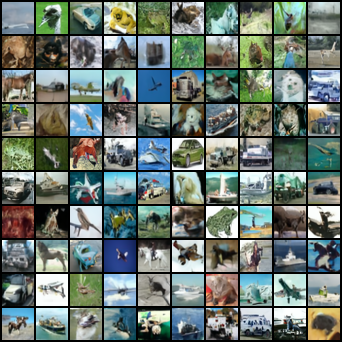}
    \includegraphics[width=0.28\linewidth]{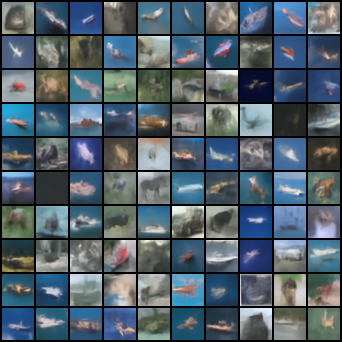}
    \includegraphics[width=0.3\linewidth]{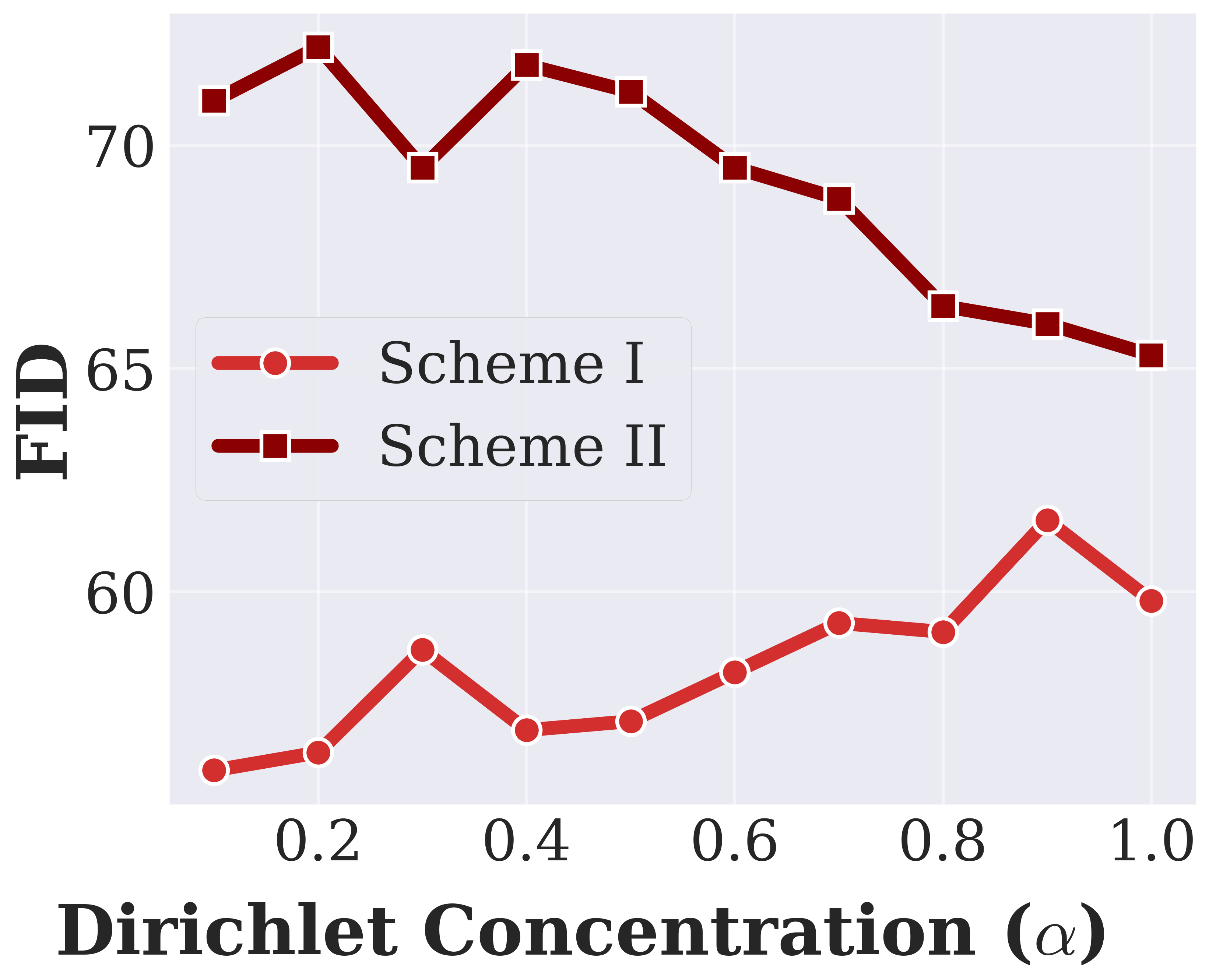}
    \caption{\textbf{Evolution of the Training Dataset under the Replace Paradigm.} \textbf{(Left)} Snapshot at Iteration 1. \textbf{(Middle)} Snapshot at Iteration 10. \textbf{(Right) Impact of Data Heterogeneity.} We report FID scores under varying Dirichlet concentration parameters ($\alpha$).}
    \label{fig:dirichlet_impact}
\end{figure}
\begin{itemize}
    \item \textbf{Heterogeneity Transforms into Diversity.} Scheme I (Circles) consistently outperforms Scheme II (Squares) in terms of generation quality (lower FID), particularly in highly non-IID settings (low Dirichlet concentration $\alpha < 0.5$). As illustrated in the plot, Scheme I maintains a stable low FID ($\approx 56$) even under extreme heterogeneity. This validates the theoretical premise in \cref{subsec:scheme1}: by computing interpolations along geodesics to multiple distinct targets rather than a single aggregated mean, Scheme I preserves the unique distributional modes of disjoint data shards.

    \item \textbf{Efficiency and Proxy Validity in Homogeneous Regimes.} As the data distribution becomes more homogeneous (Dirichlet $\alpha \to 1.0$), the performance gap between the two methods narrows significantly, with Scheme II showing a steep improvement in FID (dropping from $\approx 72$ to $\approx 65$). This indicates that when the divergence between local distributions is low, the Wasserstein barycenter (Scheme II) becomes a statistically valid proxy for the ground truth.
\end{itemize}

\paragraph{Topic-local verification in recursive LLM training.}
\begin{wrapfigure}{r}{0.46\linewidth}
    \vspace{-1.0em}
    \centering
    \includegraphics[width=0.8\linewidth]{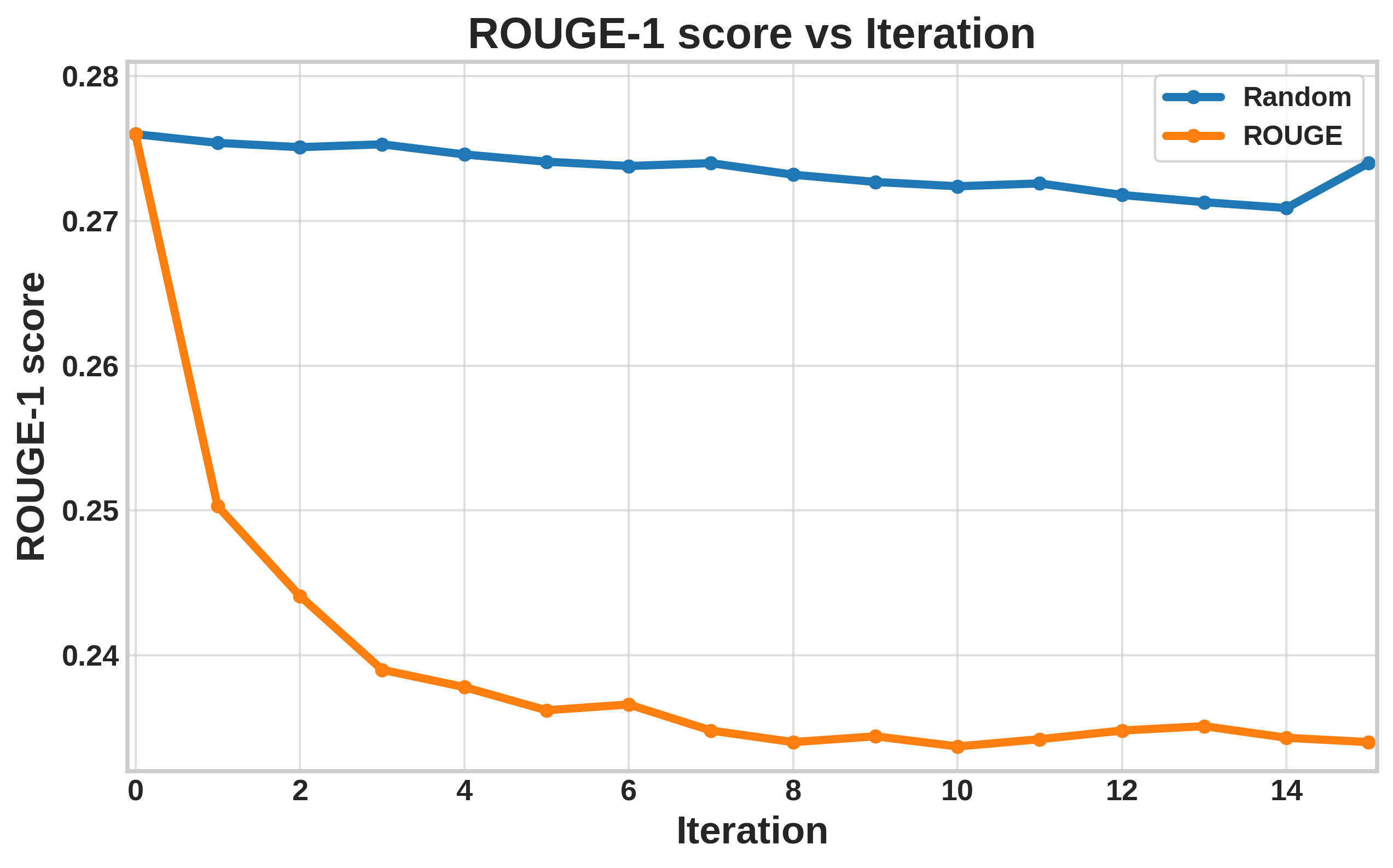}
    \caption{Held-out topic generalization under recursive training with a technology-local verifier. ROUGE-based local selection drops early and stabilizes below random selection.}
    \label{fig:app_additional_semantic_selection}
    \vspace{-1.0em}
\end{wrapfigure}
We further examine whether local-reference selection exhibits a similar failure mode when the verifier bias is semantic rather than geometric. Following the verification-based recursive training setup of \citet{fengbeyond}, we fine-tune Llama-2-7B \citep{touvron2023llama} on the English subset of XLSum \citep{hasan2021xl}. To instantiate a siloed semantic-reference setting, we partition XLSum by topic so that each entity observes only one topical subset as its local data. In the experiment shown in \cref{fig:app_additional_semantic_selection}, the verifier is constructed from the technology subset, and generated samples are filtered according to their ROUGE-based selection with this technology-local reference \citep{lin2004rouge}. Recursive training is then evaluated on held-out non-technology topics.

This experiment is not intended as a comprehensive LLM benchmark. Its purpose is narrower: to test whether a verifier that enforces local semantic alignment can simultaneously degrade coverage outside its reference domain. In this sense, the setup reflects a low-resource verification condition: the verifier has access only to a restricted topical slice, while held-out topics behave as underrepresented regions of the target distribution. This framing is consistent with recent concerns that model collapse can be especially consequential when tail information is scarce or weakly represented \citep{jarvis2026position}. As shown in \cref{fig:app_additional_semantic_selection}, ROUGE-based local selection yields weaker held-out topic generalization than random selection. The degradation occurs early in recursive training and persists across later iterations, suggesting that filtering against a narrow topical reference can suppress samples useful for non-local semantic regions. This observation supports the broader mechanism analyzed in the main text: when the verifier's reference is semantically narrow, sample selection can become a source of coverage loss rather than a safeguard against collapse.

\clearpage

\subsection{Differentially Private Optimal Transport}
\label{app:exeperiment:DPOT}

To provide formal privacy guarantees beyond the implicit protection of barycentric compression, we integrate the Differentially Private Optimal Transport (DPOT) framework \citep{le2019differentially}. This method leverages the Johnson--Lindenstrauss transform~\citep{kenthapadi2013privacy} to sanitize the pairwise distance matrix required for optimal transport. Let $\mathbf{X}_{\mathcal{P}} \in \mathbb{R}^{n \times d}$ and $\mathbf{X}_{\mathcal{Q}} \in \mathbb{R}^{m \times d}$ be the data matrices associated with probability measures $\mathcal{P}$ and $\mathcal{Q}$. The party holding $\mathcal{P}$ generates a random Gaussian projection matrix $\mathbf{M} \in \mathbb{R}^{d \times l}$ (entries i.i.d. $\sim \mathcal{N}(0, 1/l)$) and adds a noise matrix $\mathbf{\Delta}$ (entries $\sim \mathcal{N}(0, \sigma)$) to release a sanitized view $\tilde{\mathbf{X}}_{\mathcal{P}} = \mathbf{X}_{\mathcal{P}}\mathbf{M} + \mathbf{\Delta}$. The receiving party holding $\mathcal{Q}$ then projects their data as $\tilde{\mathbf{X}}_{\mathcal{Q}} = \mathbf{X}_{\mathcal{Q}}\mathbf{M}$ and approximates the cost matrix as $\tilde{\mathbf{C}}_{ij} = \| (\tilde{\mathbf{X}}_{\mathcal{P}})_i - (\tilde{\mathbf{X}}_{\mathcal{Q}})_j \|^2 - l\sigma^2$, where the term $l\sigma^2$ corrects the bias induced by the noise. This mechanism satisfies $(\epsilon, \delta)$-differential privacy provided that $\sigma \ge w \sqrt{2(\ln(1/2\delta) + \epsilon)} / \epsilon$, where $w$ is the sensitivity of the projection.

\DeclareRobustCommand{\greenpoint}{%
    \tikz[baseline=-0.6ex]{%
        \definecolor{tempColor}{HTML}{008000}%
        \draw[tempColor, fill=tempColor] (0,0) circle (0.5ex);%
    }%
}
\DeclareRobustCommand{\blackdashed}{%
    \tikz[baseline=-0.6ex]\draw[black, dashed, line width=1pt] (0,0) -- (1.5em,0);%
}
\DeclareRobustCommand{\lightredpoint}{%
    \tikz[baseline=-0.6ex]{%
        \definecolor{tempColor}{HTML}{D32F2F}%
        \draw[tempColor, fill=tempColor] (0,0) circle (0.5ex);%
    }%
}
\DeclareRobustCommand{\redpoint}{%
    \tikz[baseline=-0.6ex]{%
        \definecolor{tempColor}{HTML}{E91E63}%
        \draw[tempColor, fill=tempColor] (0,0) circle (0.5ex);%
    }%
}
To validate the theoretical underpinnings of our data selection method, we analyze the sensitivity of the optimal transport (OT) distance to perturbations in the probability mass of individual data points. We conduct this experiment using subsets of the \textbf{CIFAR-10} dataset, with $N=5,000$ samples randomly selected for both the source and target distributions. 

\autoref{fig:ot_sensitivity} visualizes the relationship between the \textit{predicted} change in OT distance (derived from our gradient-based scoring) and the \textit{actual} numerical change observed when re-solving the exact OT problem. The perturbation involves varying the probability mass of a single target point from $-100\%$ to $+100\%$. For the privacy-preserving scenario, we employ a privacy budget of $\epsilon=1.0$, an interpolation factor of $t=0.5$, and report the average predictions over 30 independent trials to account for the stochasticity of the DP mechanism.

\begin{itemize}
    \item \textbf{(a) Validation of the Direct Gradient:} 
    Figure~\ref{fig:ot_sensitivity}(a) compares the gradient-based prediction using standard dual potentials (Green points) against the ground-truth change in Wasserstein distance (Black dashed line). This empirically verifies that the scoring mechanism accurately reflects the contribution of each data to the overall distributional distance.

    \item \textbf{(b) Effectiveness of the Proxy Gradient:} 
    In Figure~\ref{fig:ot_sensitivity}(b), we evaluate the accuracy of gradients estimated using a \textit{Proxy Gradient} constructed via geodesic interpolation at $t=0.5$ (Light red points). The proxy-based gradients exhibit a near-perfect correlation with the direct gradient (Green line). This result justifies the use of proxies for data selection, demonstrating that full transport to the final target is not strictly necessary to identify high-value samples.

    \item \textbf{(c) Robustness to Differential Privacy:} 
    Figure~\ref{fig:ot_sensitivity}(c) demonstrates the method's robustness under a strict privacy budget ($\epsilon=1.0$). Despite the injection of noise into the proxy target construction (Red points), the estimated gradients maintain a strong positive correlation with the clean direct gradient (Green line). While the individual points exhibit variance due to the DP noise, the overall trend remains linear and directionally correct, ensuring that the selection mechanism remains effective even when the target distribution is private. We empirically show that there is a negligible difference in FID results between data selected via the noisy proxy and data selected via the direct method.
\end{itemize}


\begin{figure*}[t]
    \centering
    \includegraphics[width=0.88\linewidth]{Figures/ot_distance_fit_single.png}
    \caption{
        \textbf{Predicting the change in OT distance when increasing or reducing the probability mass of a point on the CIFAR-10 dataset.}
        \textbf{(a) Direct Gradient:} The \textcolor[HTML]{008000}{predicted change in Wasserstein distance} (Green points, \greenpoint{}) computed via standard dual potentials accurately matches the \textcolor{black}{\textbf{actual numerical change}} (Black dashed line, \blackdashed{}), confirming the theoretical validity of using dual potentials as influence scores.
        \textbf{(b) Proxy Gradient:} The gradient estimated using the \textcolor[HTML]{D32F2F}{Proxy target} (geodesic interpolation at $t=0.5$) (Light red points, \lightredpoint{}) aligns perfectly with the Direct Gradient, justifying the use of the proxy distribution for efficient selection.
        \textbf{(c) Differential Privacy:} Under a privacy budget of $\epsilon=1.0$, the \textcolor[HTML]{E91E63}{Noisy Proxy target} (Red points, \redpoint{}) remains highly correlated with the clean gradients despite the added noise, demonstrating the robustness of our method for private data selection.
    }
    \label{fig:ot_sensitivity}
\end{figure*}


\end{document}
